\title{Closed-form Filtering for Non-linear Systems}
\author{Théophile Cantelobre$^{1}$\\Carlo Ciliberto$^2$\\Benjamin Guedj$^{2,3}$\\Alessandro Rudi$^{1}$ \\ \small $^1$ Inria/PSL Research University, Paris, France\\\small $^2$ AI Centre, Dept. of Computer Science, University College London, London, UK\\\small $^3$ Inria Lille - Nord Europe and Inria London, France}
\definecolor{dgreen}{rgb}{0.00,0.49,0.00}
\definecolor{dblue}{rgb}{0,0.08,0.75}
\newcommand{\mixing}{\sigma}
\begin{document}

\maketitle
\begin{abstract}%
Sequential Bayesian Filtering aims to estimate the current state distribution of a Hidden Markov Model, given the past observations. The problem is well-known to be intractable for most application domains, except in notable cases such as the tabular setting or for linear dynamical systems with gaussian noise. In this work, we propose a new class of filters based on Gaussian PSD Models, which offer several advantages in terms of density approximation and computational efficiency. We show that filtering can be efficiently performed in closed form when transitions and observations are Gaussian PSD Models. When the transition and observations are approximated by Gaussian PSD Models, we show that our proposed estimator enjoys strong theoretical guarantees, with estimation error that depends on the quality of the approximation and is adaptive to the regularity of the transition probabilities.  In particular, we identify regimes in which our proposed filter attains a TV $\epsilon$-error with memory and computational complexity of $O(\epsilon^{-1})$ and $O(\epsilon^{-3/2})$ respectively, including the offline learning step, in contrast to the $O(\epsilon^{-2})$ complexity of sampling methods such as particle filtering.
\end{abstract}

\section{Introduction}

Sequential Bayesian Filtering is the task of inferring the distribution of unobserved variable $X_T$ from observations $Y_1, \ldots, Y_T$ where $(X_t, Y_t)_{n \geq 0}$ is a Hidden Markov Model. The distribution of $X_T$ given $Y_1, \ldots, Y_T$ is known as the filtering distribution (or optimal filter), denoted $\pi_T^\nu$ where $\nu\in\mathcal P(E)$ is the distribution of $X_0$ (or prior knowledge on $X_0$ more generally). In a Hidden Markov Model, the conditional distributions of $X_n$ given $X_{n-1}$ and $Y_n$ given $X_n$ are described by transition kernels $Q$ and $G$ respectively. In particular, $Q(u,x)$ identifies the conditional probability of transitioning to the state $X_{n}=x$ at time $n$ given the fact that the system was in the state $X_{n-1}=u$ at time $n-1$, while $G(x,y)$ is the probability of observing $Y_{n} = y$ given the fact that the system is in the state $X_{n} = x$. If one has perfect knowledge of the initial distribution $\pi_0$, the transition and observation kernels $Q, G$, then the filtering distribution can be computed recursively by applying Bayes rule:
\begin{align}\label{eq:bayes-0}
\pi_n(dx) = \frac{\int Q(u, dx)G(x, y_n)\pi_{n-1}(du)}{\int\int Q(u, dx)G(x, y_n)\pi_{n-1}(du)}
\end{align}
The recursive expression of the filtering distribution can be seen as a combination of two steps: the prediction of $X_{n+1}$ given belief on $X_n$ given past observations, then the correction of this prediction based on the observation $Y_{n+1}$ received. For complete introduction to Hidden Markov Models and inference, we refer the reader to \cite{cappehmm}. 

Two quintessential problems in filtering are studying the stability and robustness of the optimal filter, and computing an approximation of it in practice.

The stability of the optimal filter is related the robustness of the sequence $\pi_n^\nu$ with respect to the initial  $\nu$, since this distribution is unknown in most application settings and estimates/priors need to be used in practice. The optimal filter is stable when its dependence on the initial distribution decreases as more observations are provided (also known as ``forgetting''). More formally, a filter is stable when the distance between $\pi_n^\nu$ and $\pi_n^\mu$ goes to zero as $n\to\infty$ for any two initial distributions $\nu$ and $\mu$. This problem has attracted considerable interest since the first contributions of \cite{ocone} and \cite{kunita}. Some reference relevant to this work include \cite{oudjane,mcdonald2020,legland99,decastro2017,mitrophanov-hmm-stability-2005}. A modern review of the literature and different approaches can be found in \cite{kim2022duality}.

In general, the iteration in \cref{eq:bayes-0} is intractable. Two exceptions are when the state-space in finite or when the state-space is continuous but the transition kernels are Gaussian Linear Conditional Distributions. In the former case, the algorithm is known as the \textit{forward algorithm}. In the latter, the algorithm is the well-known Kalman filter \cite{kalman-bucy}. The Kalman filter is known to compute the recursion \cref{eq:bayes-0} exactly where the Hidden Markov Model has linear dynamics and observations and independent Gaussian noise. 
If the state-space is finite, the Baum-Welch algorithm can compute filtering and smoothing distributions in closed-form using the forward-backward approach. 

But many real-world systems do not have linear dynamics nor Gaussian uncertainty. On one hand, many algorithms have been devised to handle variations on the Kalman filter's assumptions, including the Extended Kalman Filter or the Unscented Filter \citep{ukf} (see also \cite{sarkka}). These models approximate the state variable as a Gaussian, which excludes many systems where, for instance, multi-modality is present. On the other, Sequential Monte Carlo algorithms such as the Particle filter and variants were developed. These algorithms approximate the marginalization step above using sampling, and can handle multi-modality. This family of methods has strong theoretical guarantees (though under arguably stringent conditions), but do not give a closed-form expression of the approximate distribution and are known to be difficult to turn and prohibitively costly for online applications.

Gaussian PSD Models were introduced in \cite{ciliberto2021} as models for probability distributions. Gaussian PSD Models are a special case of the family of models proposed in \cite{ulysse-non-negative}. They generalize Gaussian Mixture Models by allowing for negative coefficients in the mixture. As originally highlighted in \cite{ciliberto2021}, Gaussian PSD Models enjoy appealing properties for applications involving Bayesian inference, with filtering as a special case: 1) They have optimal approximation guarantees with respect to a large family of probability densities 2) products and marginals of Gaussian PSD models can be efficiently computed in closed-form.

\subsection*{Approach \& Contributions}

In this paper, we study the problem of performing the iteration in \cref{eq:bayes-0} when knowledge of the transition $Q$ and observation $G$ probabilities are unknown and only an approximation in terms of two corresponding Gaussian PSD Models $\hat Q$ and $\hat G$ is available. We introduce a new algorithm to derive an estimator $\hat \pi_n(dx)$ and study its relation with the true $\pi_n(dx)$. The proposed estimator extends previous filtering strategies, such as the Kalman filter and offers strong theoretical guarantees on a large family of application settings. 

Our main contributions are: 
\begin{enumerate}
    \item A novel algorithm to tackle Sequential Bayesian Filtering, which recovers previously proposed estimators and can be applied to any filtering problem where the transition kernels admit a smooth density.

    \item We show that the proposed estimator is both stable and robust with respect to a large family of application settings. These theoretical properties are adaptive to the regularity properties of the Hidden Markov Model.

    \item The computational and space complexity of the proposed algorithm depends on the regularity of the transition kernels. For very regular kernels (e.g. infinitely differentiable) the algorithm has a computational complexity that is smaller than, for example, particle filtering.

\end{enumerate}

Our paper is organized as follows: in \cref{sec:gaussian-psd-models}, we describe Gaussian PSD Models and their properties, in particular their stability with respect to probabilistic operations. In \cref{sec:learning-gaussian-psd}, we devise an algorithm for learning Gaussian PSD Models from function evaluations and prove that optimal estimation rates are attained for smooth targets. In \cref{sec:psd-filter-root}, we introduce \textsc{PSDFilter}, an approximate filtering algorithm which plugs $\hat Q$ and $\hat G$ in the iteration above. We prove this algorithm is robust to the choice of initial distribution and to the approximation error in $Q$ and $G$. Sketches of the proofs of our main theorems \cref{theorem:learning} and \cref{theorem:bound-diagonal} are presented in \cref{sec:sketches}. Finally, in \cref{sec:generalized-psd-models}, we generalize Gaussian PSD Models to allow for a richer class of approximators, while retaining most of the desirable properties of Gaussian PSD Models.

\paragraph{Notation} We denote $E=(-1, 1)^d$ and $F = (-1, 1)^{d^\prime}$ the state and observation space. $\mathcal P(E)$ is the set of probability measures on $E$ and $\mathcal M_+(E)$ the set of finite, positive measures on $E$. We assume that all measures admit a density with respect to the Lebesgue measure and use the abuse of notation $\mu(dx)=\mu(x)dx$. $\beta>0$ is a smoothness parameter, $\mixing$ is a mixing parameter for kernels and $\epsilon$ is the accuracy when doing function approximation. Denote $\mathbb R^d_+$ the set of vectors in $\mathbb R^d$ with all positive components and $\mathcal S^+(\mathbb R^M)$ the set of positive definite matrices of size $M$.

\section{Gaussian PSD Models}\label{sec:gaussian-psd-models}

Gaussian PSD Models, introduced in \cite{ciliberto2021}, is a family of models for non-negative functions and, in particular, probability densities specializing the PSD Models from \cite{ulysse-non-negative}. They are non-negative everywhere, admit a linear parametrization and can be learned from samples and function evaluations. They are characterized by a linear combination of kernels, with weights chosen such that the function is non-negative. In this section, we recall the definition of Gaussian PSD Models of \cite{ciliberto2021}, show that they extend most well-established probability models and present how to perform operations such as multiplication or marginalization. 

\begin{definition}\label{def:psd-model}
   A Gaussian PSD Model of order $M$ is a function $f: \mathbb R^d \to \mathbb R$ which can be written:
   \begin{align}\label{eq:psd-model-def}
      f(x) = \sum_{i=1}^M\sum_{j=1}^M A_{ij}k_\eta(x, x_i)k_\eta(x, x_j)
   \end{align}
   where $\eta \in \mathbb R^d_+$ is the precision vector, $X = (x_i)_{1\leq i \leq M}\in \mathbb R^{M\times d}$ are the anchor points and $A\in\mathcal S^+(\mathbb R^d)$ is the weight matrix. Such a function is denoted $f(x; A, X, \eta)$ (or $f(x;\theta)$ for shorthand).
\end{definition}
Adopting the point of view in \cite{ulysse-non-negative}, Gaussian PSD Models can equivalently be defined as functions of the form $f(x) = \Phi_\eta(x)^\top A\Phi(x)$ where $\Phi_\eta$ is defined as $\Phi_\eta(x)= (k_\eta(x, x_1) \ldots k_\eta(x, x_M))^\top \in\mathbb R^M$.
When defined on the product of Euclidean spaces $\mathbb R^d \times \mathbb R ^{d^\prime}$ with anchor points $[X, Y]$ (the row-wise concatenation of $X$ and $Y$) and precision vector $\eta = (\eta, \eta^\prime)$ (column-wise concatenation of $\eta_1$ and $\eta_2$), we denote denote the model $f(x, y ; A, [X, Y], (\eta_1, \eta_2))$. This split notation is justified by the fact that $k_\eta((x, y), (u, v)) = k_{\eta_1}(x, u)k_{\eta_2}(y, v)$.

\begin{example}[Gaussian Mixture Model]\label{ex:mixtures} Let $p(x) = \sum_{k=1}^M\alpha_kp(x|\mu_k, \eta)$ where $\mu_k\in\mathbb R^d$, $\eta_k\in\mathbb R^d_+$, and $\alpha \in\mathbb R^d_+$ with $\sum_{k=1}^M\alpha_k = 1$ and $p(x|\mu_k, \eta)$ is the Gaussian density with mean $\mu$ and precision vector $\eta$. $p$ is known as a Gaussian Mixture Model. $p$ can be written as a Gaussian PSD Model of order $M$ $f(x ; A, X, \eta / 2)$ with $A=\textrm{diag}(a)$ and $X = (\mu_1 \ldots \mu_M)^\top $.
\end{example}

\begin{example}[Squared linear Gaussian model]\label{ex:sq-linear-model}
Let $g(x) = w^\top \Phi_\eta(x)$ where $w\in\mathbb R^d$ and $\Phi_\eta(x)= (k_\eta(x, x_1), \ldots, k_\eta(x, x_M)^\top $. Then, $f=g^2$ can be written as a Gaussian PSD Model of order $M$ with $A=ww^\top $ and $X = (x_1, \ldots, x_M)^\top $. Indeed, 
\begin{align}
    f(x) = (w^\top \Phi_\eta(x))^2 = w^\top \Phi_\eta(x)w^\top \Phi_\eta(x) = \Phi_\eta(x)^\top ww^\top \Phi_\eta(x).
\end{align}
\end{example}

\noindent As pointed out in the introduction of this section, because $k_\eta(x, u)k_\eta(x, v)\propto k_{2\eta}(x, \frac{u+v}{2})$, a Gaussian PSD Model can be seen as a linear combination of Gaussians. It is important to note that the coefficients of the components can be non-negative, which makes them much more expressive then Mixture models. Consider for instance $f(x) = (e^{-(x-2)^2} - e^{-(x-3)^2})^2$ which is clearly non-negative and can be written as a Gaussian PSD Model but not as a Mixture model.

\subsection{Operations on Gaussian PSD Models}
Gaussian PSD Models are compatible with operations on probabilistic models such as integration, partial evaluation, product and marginalization. The operations are summarized in \cref{prop:ops-diagonal} and the algorithms, based on kernel evaluations and matrix-vector products are detailed in \cite{ciliberto2021}.

\begin{proposition}[Closed form operations for Gaussian PSD Models]\label{prop:ops-diagonal}
    Let $f(x, y; \theta_1)$ and $g(y, z; \theta_2)$ be two Gaussian PSD Models of order $M_1$ and $M_2$ respectively, as in \cref{def:psd-model}. Then there exist some algorithms $\integralpsd$, $\partialpsd$, $\productpsd$, $\marginalpsd$ such that
    \begin{itemize}
        \item \textbf{ Integral over $\mathbb R^d$ or over a hypercube} $\int f(x, y; \theta_1)dxdy$ can be computed exactly and in closed form by the algorithm $\integralpsd(\theta_1)$ with a computational cost of $O(M_1^2 d)$.
        \item \textbf{ Partial evaluation}$f(x, y_0; \theta_1) = h(x; \theta^\prime)$ is a Gaussian PSD Model of order at most $M_1$ and $\theta^\prime$ can be computed exactly and in closed form by the algorithm $\partialpsd(y_0, \theta_1)$ with a computational cost of $O(M_1^2 d)$
        \item \textbf{ Product} $f(x, y; \theta_1)g(y, z; \theta_2)=h(x, y, z; \theta^\prime)$ is a Gaussian PSD Model of order at most $M_1\times M_2$ and $\theta^\prime$  can be computed exactly and in closed form by the algorithm $\productpsd(\theta_1, \theta_2)$ with a computational cost of $O(M_1^2 M_2^2 d)$.
        \item \textbf{ Marginalization} $\int f(x, y; \theta_1)dy=h(x; \theta^\prime)$ is a Gaussian PSD Model of order at most $M_1$ and $\theta^\prime$ can be computed exactly and in closed form by the algorithm $\marginalpsd(y, \theta_1)$ with a computational cost of $O(M_1^2 d)$.
    \end{itemize}
\end{proposition}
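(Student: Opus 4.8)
The plan is to reduce all four operations to two elementary facts about Gaussian kernels together with the Schur product theorem, working throughout with the representation $f(\cdot\,;A,X,\eta)=\Phi_\eta(\cdot)^\top A\,\Phi_\eta(\cdot)$ of \cref{def:psd-model} rather than expanding a Gaussian PSD model into individual Gaussians; keeping the feature map intact is exactly what yields the sharp order bounds. \textbf{Step 1 (kernel algebra).} I would first record the Gaussian product formula: for precision vectors $\alpha,\beta\in\mathbb R^d_+$ and centers $u,v$,
\[
k_\alpha(x,u)\,k_\beta(x,v)\;=\;\kappa(\alpha,\beta,u,v)\;k_{\alpha+\beta}\big(x,\,m(\alpha,\beta,u,v)\big),
\]
where $m$ is the componentwise precision-weighted average of $u,v$ and $\kappa$ is an explicit positive constant (itself a Gaussian in $u-v$), together with the factorization $k_\eta((x,y),(u,v))=k_{\eta_1}(x,u)\,k_{\eta_2}(y,v)$ for $\eta=(\eta_1,\eta_2)$ already noted in the text. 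Two corollaries: restricting a feature to $y=y_0$ gives $k_\eta((x,y_0),(x_i,y_i))=k_{\eta_2}(y_0,y_i)\,k_{\eta_1}(x,x_i)$; and a product of two Gaussian features sharing one variable is, up to the scalar $\kappa$, a single Gaussian feature whose precision in that variable is the sum of the two precisions.

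\textbf{Step 2 (Gram matrices and the easy operations).} For a precision $\eta$ and anchors $a_1,\dots,a_M$, set $W^\eta_{ij}=\int k_\eta(\cdot,a_i)\,k_\eta(\cdot,a_j)$, the integral over $\mathbb R^d$ or over $(-1,1)^d$. By Step 1 the integrand is $\kappa$ times one Gaussian, so $W^\eta_{ij}$ is either a Gaussian normalisation constant or a product of $d$ one-dimensional integrals expressed through the error function; either way it costs $O(d)$, and $W^\eta=(\langle k_\eta(\cdot,a_i),k_\eta(\cdot,a_j)\rangle_{L^2})_{ij}$ is a Gram matrix, hence $W^\eta\in\mathcal S^+(\mathbb R^M)$. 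Now write $f(x,y;\theta_1)=\sum_{ij}A_{ij}\Phi^x_i(x)\Phi^y_i(y)\Phi^x_j(x)\Phi^y_j(y)$ with $\Phi^x_i=k_{\eta_1}(\cdot,x_i)$, $\Phi^y_i=k_{\eta_2}(\cdot,y_i)$. Then: \emph{Integral}: $\int f=\sum_{ij}A_{ij}W^{\eta_1}_{ij}W^{\eta_2}_{ij}$, a sum of $O(M_1^2)$ terms each costing $O(d)$. \emph{Partial evaluation}: by Step 1, $f(x,y_0)=\Phi^x(x)^\top(D A D)\Phi^x(x)$ with $D=\mathrm{diag}\big(k_{\eta_2}(y_0,y_i)\big)$; since $DAD\in\mathcal S^+(\mathbb R^{M_1})$, this is a Gaussian PSD model with anchors $x_1,\dots,x_{M_1}$ and precision $\eta_1$, of order $\le M_1$, at cost $O(M_1^2 d)$. \emph{Marginalisation}: $\int f(x,y)\,dy=\Phi^x(x)^\top(A\circ W^{\eta_2})\Phi^x(x)$, and $A\circ W^{\eta_2}\in\mathcal S^+(\mathbb R^{M_1})$ by the Schur product theorem, so again a Gaussian PSD model of order $\le M_1$ with anchors $X$ and precision $\eta_1$, at cost $O(M_1^2 d)$.

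\textbf{Step 3 (product) and the main obstacle.} For $f(x,y;\theta_1)$ and $g(y,z;\theta_2)$ write $fg=(\Phi^{(1)}\!\otimes\Phi^{(2)})^\top(A^{(1)}\!\otimes A^{(2)})(\Phi^{(1)}\!\otimes\Phi^{(2)})$. Applying Step 1 to the shared variable $y$, the $(i,j)$ component $\Phi^{(1)}_i(x,y)\Phi^{(2)}_j(y,z)$ equals $\kappa_{ij}$ times a single Gaussian feature $k_{\eta'}\big((x,y,z),(x_i,\bar y_{ij},z_j)\big)$ with $\eta'=(\eta_1^x,\eta_1^y+\eta_2^y,\eta_2^z)$ and $\bar y_{ij}$ the precision-weighted average of the two $y$-anchors; collecting the $M_1M_2$ scalars $\kappa_{ij}$ into a diagonal matrix gives $h=\Psi^\top B\,\Psi$, where $\Psi$ consists of the $M_1M_2$ Gaussian features just described and $B=\mathrm{diag}(\kappa)\,(A^{(1)}\!\otimes A^{(2)})\,\mathrm{diag}(\kappa)$ is positive semi-definite because $A^{(1)}\!\otimes A^{(2)}$ is and conjugation by a diagonal matrix preserves this; hence $h$ is a Gaussian PSD model of order $\le M_1M_2$, with the centers and $\kappa_{ij}$ costing $O(M_1M_2 d)$ and the Kronecker/conjugation step $O(M_1^2M_2^2)$, hence $O(M_1^2M_2^2 d)$. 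The only delicate point is here: one must verify that after collapsing the shared-variable Gaussians the result is genuinely in the exact form of \cref{def:psd-model} — a single new precision vector $\eta'$ and a single weight matrix $B$ — and that the scalars $\kappa_{ij}$ enter only through a diagonal congruence (not an arbitrary one), which is precisely what keeps $B$ positive semi-definite; the "at most $M_1$" bounds for partial evaluation and marginalisation likewise hinge on not expanding the products $\Phi^x_i\Phi^x_j$ into $M_1^2$ midpoint Gaussians. The remaining checks — the closed form of $W^\eta_{ij}$ over the hypercube via the error function and the cost counts — are routine.
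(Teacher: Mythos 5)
Your proof is correct, and it follows essentially the same route as the argument the paper relies on (the paper itself defers the proof of \cref{prop:ops-diagonal} to Appendix~F of the cited reference): the Gaussian product formula to collapse shared-variable features, a diagonal congruence for partial evaluation, the Schur product theorem applied to the Gram matrix of kernel integrals for marginalization, and the Kronecker-product weight matrix conjugated by the diagonal of collapse constants for the product. The point you flag as delicate — that the collapsed features share a single precision vector and that the scalars $\kappa_{ij}$ enter only through a diagonal congruence, so positive semi-definiteness of the weight matrix is preserved — is indeed the crux, and you handle it correctly.
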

The proof of the proposition above can be found in \citep[Appendix F]{ciliberto2021}.
Note that a Markov transition $g(x) = \int Q(u, x)f(u)du$, when $Q$ and $f$ are Gaussian PSD models, can be decomposed in terms of product and marginalization and computed in closed form, with $g$ again a Gaussian PSD Model. More importantly,  $g$ is of order $M_1$ (instead of the naïve $M_1\times M_2$ according to \cref{prop:ops-diagonal}). This is summarized in the following proposition

\begin{proposition}[Constant order for Markov transition]\label{prop:markov-step}
   If $Q(u, x; \theta_Q$ and $f(u; \theta_f)$ are two Gaussian PSD Models of order $M_1$ and $M_2$ respectively, then $g(x) = \int Q(u, x;\theta_Q)f(u;\theta_f)du$, computed via $\productpsd$ and $\marginalpsd$ is a Gaussian PSD Model of order $M_1$. 
\end{proposition}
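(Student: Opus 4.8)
The plan is to track what happens to the anchor points and precision vector when we apply $\productpsd$ followed by $\marginalpsd$, and to observe that the product structure $k_\eta((u,x),(u',x')) = k_{\eta_1}(u,u')k_{\eta_2}(x,x')$ lets the $x$-dependence factor out cleanly. First I would write $Q(u,x;\theta_Q) = \Phi_{\eta_Q}(u,x)^\top A_Q \Phi_{\eta_Q}(u,x)$ with anchor points $[U,X_Q]$ of size $M_1$ and precision split as $(\eta_1,\eta_2)$, and $f(u;\theta_f) = \Phi_{\eta_f}(u)^\top A_f \Phi_{\eta_f}(u)$ with anchors $V$ of size $M_2$. The naïve bound from \cref{prop:ops-diagonal} says the product $Q(u,x;\theta_Q)f(u;\theta_f)$ is a Gaussian PSD Model of order $M_1 M_2$ in the joint variable $(u,x)$; the key point I want to extract is that its anchor points in the $x$-direction are \emph{still} just the $M_1$ points $X_Q$ (each repeated $M_2$ times), because $f$ does not depend on $x$ and so contributes nothing to the $x$-anchors.

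The main step is then to carry out the marginalization $\int (\cdot)\, du$ explicitly using the Gaussian integration identity behind $\marginalpsd$: integrating a product $k_{\eta_1}(u,u_i)k_{\eta_1}(u,u_j) k_{\eta_f}(u,v_k)k_{\eta_f}(u,v_\ell)$ over $u$ produces a scalar (a Gaussian evaluated at differences of the four centers) times no residual $u$-kernels. What survives is exactly the $x$-part, $k_{\eta_2}(x,x_{q})k_{\eta_2}(x,x_{q'})$ with $x_q, x_{q'} \in X_Q$. Collecting terms, $g(x)$ is a quadratic form $\Phi_{\eta_2}(x)^\top \widetilde A \Phi_{\eta_2}(x)$ over the $M_1$ anchor points $X_Q$, where $\widetilde A$ absorbs the scalars coming from the $u$-integration and the $M_2$-indexed sums; since $\widetilde A$ is obtained from $A_Q \otimes A_f$ (both PSD) by a congruence/partial-trace-type operation that preserves positive semidefiniteness, $\widetilde A \in \mathcal S^+(\mathbb R^{M_1})$. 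Hence $g(x) = f(x; \widetilde A, X_Q, \eta_2)$ is a Gaussian PSD Model of order $M_1$.

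I would present this at the level of detail above, deferring the explicit formula for $\widetilde A$ and the scalar coefficients to the algorithms of \cite{ciliberto2021}. The only real obstacle is bookkeeping: one has to be careful that the marginalization genuinely eliminates all $u$-kernels (so that no anchor points in the $u$-direction remain to inflate the order) and that the resulting weight matrix is still PSD rather than merely symmetric. Both follow from the specific Gaussian structure — the integral of a product of two Gaussians in $u$ is a positive scalar times a Gaussian in the centers — so positivity is preserved, but I would state explicitly that this is where we use that the kernels are Gaussian (as opposed to generic). Everything else is a direct specialization of the closed-form operations already established in \cref{prop:ops-diagonal}.
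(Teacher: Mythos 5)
Your overall approach is correct and is essentially the argument the paper relies on: the paper itself gives no in-text proof of \cref{prop:markov-step} and simply defers to Appendix F.5 of \cite{ciliberto2021}, and what you reconstruct — the $x$-anchors of the product are only the $M_1$ points $X_Q$ because $f$ is independent of $x$, and the $u$-marginalization replaces every $u$-kernel by a scalar, leaving a quadratic form over $X_Q$ with precision $\eta_2$ — is the right bookkeeping. One point deserves a more careful justification than you give it: at the end you attribute the preservation of positive semidefiniteness to the fact that each integral $c_{ijkl}=\int k_{\eta_1}(u,u_i)k_{\eta_1}(u,u_j)k_{\eta_f}(u,v_k)k_{\eta_f}(u,v_l)\,du$ is a \emph{positive scalar}. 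Entrywise positivity of these coefficients is not what makes $\widetilde A$ PSD (a Hadamard product with a merely entrywise-positive matrix can destroy positive semidefiniteness). The correct reason is structural: writing $\widetilde A_{ij} = (A_Q)_{ij}\, B_{ij}$ with $B_{ij} = \int k_{\eta_1}(u,u_i)k_{\eta_1}(u,u_j)\, f(u)\, du$, the matrix $B$ is a Gram matrix weighted by the non-negative function $f$ (so $w^\top B w = \int \bigl(\sum_i w_i k_{\eta_1}(u,u_i)\bigr)^2 f(u)\,du \ge 0$), and the Schur product theorem then gives $\widetilde A = A_Q \circ B \succeq 0$. Equivalently, your ``congruence of $A_Q\otimes A_f$'' framing works, but only after noting that the $M_1M_2\times M_1M_2$ matrix of integration constants is itself a Gram matrix, so the Hadamard product with $A_Q\otimes A_f$ is PSD before the congruence by $I_{M_1}\otimes \mathbf{1}_{M_2}$ is applied. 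With that correction the proof is complete.
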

The proof of \cref{prop:markov-step} can be found in \citep[Appendix F.5]{ciliberto2021}.

\section{Learning transition and observation kernels with Gaussian PSD Models}\label{sec:learning-gaussian-psd}
In this section, we show that General Gaussian Models can be used to efficiently approximate smooth, non-negative functions in $L^\infty$ using function evaluations. Let $\Omega = (-1, 1)^d$ and $f: \Omega \to \mathbb R$ be the target function. We assume we can evaluate $f(x)$ at any point $x\in\Omega$. We assume that $f$ is the sum of squared $\beta$-smooth functions. Formally, we introduce:

\begin{assumption}[Smooth sum-of-squares assumption]\label{assumption:target_function}
    There exist $q \in\mathbb N$ and $\beta \geq 0$ and $f_i \in W^\beta_2(\Omega) \cap L^\infty(\Omega)$ such that $f(x) = \sum_{i=1}^q f_i(x)^2$.
\end{assumption}
\cref{assumption:target_function} is verified for most continuous dynamical models of interest. For instance, any transition kernel $Q(x, y) \propto e^{-\Vert \Sigma^{-1/2}(y - h(x)\Vert^2}$ verifies the assumption. \cite{ciliberto2021} provides a list of sufficient conditions, we recall in \cref{prop:sufficient-conditions-target}.

\begin{proposition}[Generality of \cref{assumption:target_function}, Prop. 5 in \cite{ciliberto2021}]\label{prop:sufficient-conditions-target}
A function $f$ satisfies \cref{assumption:target_function} on $\Omega = (-1, 1)^d$ as soon as:
\begin{itemize}
    \item $f$ is a probability density and $f\in W_2^\beta(\Omega)\cap L^\infty(\Omega)$, and strictly positive on $[-1, 1]^d$ ;
    \item $f$ is an exponential model $f(x) = e^{-v(x)}$ with $v\in W_2^\beta(\Omega) \cap L^\infty(\Omega)$ ;
    \item $f$ is a mixture of models from (b) ;
    \item $f$ is $\beta+2$-times differentiable on $[-1, 1]^d$, with a finite set of zeroes all in $(-1, 1)^d$, and a positive definite Hessian in each zero.
\end{itemize}
\end{proposition}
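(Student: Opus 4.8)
The plan is to treat the four cases separately, reducing each to one standard fact from nonlinear Sobolev calculus (this is essentially the route of \citet{ciliberto2021}): if $u\in W_2^\beta(\Omega)\cap L^\infty(\Omega)$ and $g$ is $C^\infty$ on an open interval containing $\mathrm{range}(u)$, then $g\circ u\in W_2^\beta(\Omega)\cap L^\infty(\Omega)$; here the $L^\infty$ bound is immediate and the Sobolev estimate is the classical chain/Nemytskij bound, which holds for every $\beta\ge 0$ precisely because of the $L^\infty$ control. Granting this, cases (a)--(c) come almost for free and essentially all the work is in (d).

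For (a), $f$ is continuous and bounded below by some $c>0$ on the compact cube $[-1,1]^d$, so taking $g=\sqrt{\cdot}$ (which is $C^\infty$ on $(0,\infty)\supseteq\mathrm{range}(f)$) gives $f_1:=\sqrt f\in W_2^\beta(\Omega)\cap L^\infty(\Omega)$ with $f=f_1^2$, i.e.\ $q=1$. For (b), $v\in L^\infty$ has bounded range, so $g(t)=e^{-t/2}$ yields $f_1:=e^{-v/2}\in W_2^\beta(\Omega)\cap L^\infty(\Omega)$ with $f=f_1^2$, again $q=1$. For (c), writing $f=\sum_{k=1}^m\alpha_k e^{-v_k}$ with $\alpha_k\ge 0$, case (b) applied to each $v_k$ gives $\sqrt{\alpha_k}\,e^{-v_k/2}\in W_2^\beta(\Omega)\cap L^\infty(\Omega)$ and hence $f=\sum_{k=1}^m(\sqrt{\alpha_k}\,e^{-v_k/2})^2$, so $q=m$.

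For (d) --- the main obstacle --- I would argue by a local analysis near each zero followed by a smooth gluing. Let $Z=\{z_1,\dots,z_m\}\subset(-1,1)^d$ be the zero set; since $f\ge 0$ is differentiable, $\nabla f(z_j)=0$, and $\nabla^2 f(z_j)\succ 0$ by hypothesis. Taylor's formula with integral remainder gives, near $z_j$, $f(x)=(x-z_j)^\top R_j(x)(x-z_j)$ with $R_j(x)=\int_0^1(1-t)\,\nabla^2 f(z_j+t(x-z_j))\,dt$, and since $\nabla^2 f\in C^\beta$ --- the one place where $f\in C^{\beta+2}$ is used --- we get $R_j\in C^\beta$ with $R_j(z_j)=\tfrac12\nabla^2 f(z_j)\succ 0$, hence $R_j\succ 0$ on a ball $B_j:=B(z_j,r_j)$. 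Composing the $C^\infty$ symmetric-square-root map on $\mathcal S^+(\mathbb R^d)$ with $R_j$ produces $L_j\in C^\beta(B_j;\mathbb R^{d\times d})$ with $R_j=L_jL_j^\top$, so on $B_j$
\[
  f(x)=\big\|L_j(x)^\top(x-z_j)\big\|^2=\sum_{i=1}^d\ell_{j,i}(x)^2,\qquad \ell_{j,i}(x):=e_i^\top L_j(x)^\top(x-z_j)\in C^\beta(B_j).
\]
Away from $Z$, shrink the $r_j$ so the $B_j$ are disjoint and contain no other zero; then $f\ge c_0>0$ on the compact set $\overline\Omega\setminus\bigcup_j B(z_j,r_j/2)$, so $\sqrt f$ is $C^{\beta+2}$ on $\Omega\setminus\bigcup_j B(z_j,r_j/2)$. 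Fix a \emph{smooth quadratic} partition of unity $\chi_0^2+\sum_{j=1}^m\chi_j^2\equiv 1$ on $\Omega$, subordinate to $\{\Omega\setminus Z\}\cup\{B_j\}_j$, with $\chi_j\equiv 1$ on $B(z_j,r_j/2)$ (such $\chi_j$ come from normalising squared bump functions). Then
\[
  f=\chi_0^2 f+\sum_{j=1}^m\chi_j^2 f=\big(\chi_0\sqrt f\,\big)^2+\sum_{j=1}^m\sum_{i=1}^d\big(\chi_j\,\ell_{j,i}\big)^2 ,
\]
where $\chi_0\sqrt f$ vanishes near $Z$ and is $C^{\beta+2}$ with compact support in $\Omega$, and each $\chi_j\ell_{j,i}$ is $C^\beta$ with compact support in $B_j$; extended by zero, every summand lies in $W_2^\beta(\Omega)\cap L^\infty(\Omega)$ (on a bounded domain a compactly supported $C^{\lceil\beta\rceil+1}$ function belongs to $W_2^\beta$). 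This writes $f$ as a sum of $q=1+md$ squares.

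The genuinely delicate points are all concentrated in (d): the two-derivative bookkeeping ($f\in C^{\beta+2}\Rightarrow\nabla^2 f\in C^\beta\Rightarrow R_j\in C^\beta\Rightarrow L_j\in C^\beta$), the existence of a \emph{smooth} quadratic partition of unity $\sum_k\chi_k^2\equiv 1$, and the embedding $C^{\lceil\beta\rceil+1}_c(\Omega)\hookrightarrow W_2^\beta(\Omega)$ for non-integer $\beta$, which I would justify by extension to $\mathbb R^d$ followed by interpolation between consecutive integer Sobolev orders. Cases (a)--(c) present nothing beyond the composition fact.
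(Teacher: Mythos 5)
The paper does not prove \cref{prop:sufficient-conditions-target}: it is imported verbatim from Proposition 5 of \citet{ciliberto2021}, so there is no internal proof to compare against, and your write-up follows the same strategy as that source --- cases (a)--(c) by composing a $W_2^\beta(\Omega)\cap L^\infty(\Omega)$ function with a map that is smooth on a compact neighbourhood of its range, and case (d) by a Taylor/Morse-type local factorization $f=(x-z_j)^\top R_j(x)(x-z_j)$ near each zero, glued with a quadratic partition of unity. The substance is sound, but three pieces of bookkeeping deserve attention. First, in (a) the uniform lower bound $f\ge c>0$ on $[-1,1]^d$ requires a continuous representative of $f$; pointwise strict positivity of an $L^\infty\cap W_2^\beta$ function does not give this unless $\beta>d/2$ (which the paper assumes in \cref{theorem:learning}), so this hypothesis should be invoked explicitly. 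Second, in (d) you deduce $\nabla f(z_j)=0$ from ``$f\ge 0$'', but nonnegativity is not listed in item (d); it is implicit in the sum-of-squares context of \cref{assumption:target_function} and should be stated, since without it the claim is false. Third, the final embedding step is misaligned with what you actually established: you prove $\ell_{j,i}\in C^\beta$ (H\"older) on $B_j$ but then invoke the embedding of compactly supported $C^{\lceil\beta\rceil+1}$ functions into $W_2^\beta(\Omega)$. For integer $\beta$ there is no issue ($\beta$ bounded derivatives on a bounded domain give $W_2^\beta$ membership directly), but for non-integer $\beta$ the endpoint embedding of compactly supported H\"older-$C^\beta$ functions into $W_2^\beta=H^\beta$ fails in general (one only lands in $B^\beta_{2,\infty}$, not $B^\beta_{2,2}$); to close this you should either read ``$\beta+2$-times differentiable'' as $C^{\lceil\beta\rceil+2}$, which yields $\ell_{j,i}\in C^{\lceil\beta\rceil}$ and hence $\ell_{j,i}\in H^{\lceil\beta\rceil}(\Omega)\subset H^\beta(\Omega)$, or accept an arbitrarily small loss in the Sobolev index. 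None of these is a structural gap in the argument.
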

In \cref{sec:optimization}, we introduce the optimization problem we solve to learn $f$ and present the learning algorithm. In \cref{sec:learning}, we prove that the obtained estimator $\hat f$ is 

\subsection{Learning algorithm}\label{sec:optimization}

In \cref{sec:gaussian-psd-models}, we showed that the square of any Gaussian Linear Model is a Gaussian PSD Model and its weight matrix is of rank $1$ and given by $A =aa^\top $ where $a$ is the weight vector of the Gaussian Linear Model. We use this insight to efficiently approximate a smooth sum-of-squares function $f$ with a Gaussian PSD Model $\hat f$. This insight was first published by \cite{sampling-ulysse} for probability densities. A full-rank estimator $\hat f$ can be also learned, by solving a Semi-Definite Programming problem using e.g. Newton's method.


In this work we propose to approximate $f$ using $\hat f= \hat g^2$ where $\hat g$ is a Gaussian Linear Model learned on $g$. Denoting the Linear Gaussian Model $\hat g(x ; a, \eta, \tilde X) = \sum_{i=1}^M a_ik_\eta(x, \tilde x_i)$ where $a\in\mathbb R^M$, $\eta \in \mathbb R_+^d$ and $\tilde X \in \mathbb R^{M\times d}$, we introduce the optimization problem used to learn $\hat a$ from data points $X\in\mathbb R^{n\times d}$:
\begin{align}\label{eq:learning-problem}
\min_{a \in\mathbb R^M} \frac{1}{n}\sum_{k=1}^n \vert \sqrt{f(x_k)} - \hat g(x_k ; a, \eta, \tilde X)\vert^2+ \lambda \, a^\top Ka 
\end{align}
where $K$ is described by $K_{ij}=k_\eta(\tilde x_i, \tilde x_j)$. 

In \cref{sec:proof-learning}, we cast \cref{eq:learning-problem} as a kernel ridge regression problem, which can be efficiently solved for large values of $n$ and $M$ in $O(n\sqrt{n})$ time using approximate kernel methods such as in \cite{falkon}.

\begin{algorithm}[ht!]
\caption{\textsc{LearnRankOne} algorithm}\label{alg:learn}
\KwData{$f(x)$, $M$, $N$, $\eta$, $\lambda$}
$X \gets \textsc{UniformSample}(\Omega, N)$\;  \BlankLine
$Y \gets (\sqrt{f(x_i)} \vert x_i \in X)$\;\BlankLine
$\tilde X\gets \textsc{UniformSample}(\Omega, M)$\;  \BlankLine
$\hat a \gets \textsc{KernelRidgeRegression}(k_\eta, \tilde X, X, Y, \lambda)$ \; \BlankLine
$\hat f(x) \gets \textsc{GaussianPSDModel}(aa^\top , X, \eta)$\;\BlankLine
\KwResult{$\hat f(x)$}
\end{algorithm}

\subsection{Learning rates}\label{sec:learning}
The Gaussian PSD Model $\hat f$ obtained from \cref{alg:learn} using function evaluations at uniformly sampled training points $X$ approximates $f$ in $L^\infty(\Omega)$ with optimal learning rates for $L^\infty$ norm \citep{wendland2004scattered}, if $f$ is a $\beta$-smooth and bounded density, as formalized by \cref{assumption:target_function}.

We build on the results in \cite{ciliberto2021} and \cite{sampling-ulysse}. The former studies the $L^2$ convergence of $\hat f$ to $f$ when the training set is sampled from the target density, using the full-rank counterpart to \cref{alg:learn}. The latter studies convergence in Hellinger distance using \cref{alg:learn}. Both works use the insights of \cite{less-is-more}.

\cref{alg:learn} finds a solution to \cref{eq:learning-problem} in the reproducing kernel Hilbert space $\mathcal H_\eta$ associated to $k_\eta$ where $\eta$ is chosen as a function of the desired precision $\epsilon$. \cref{theorem:learning} proves that $\hat f$ converges to $f$ in $L^\infty$, with optimal rates \citep{wendland2004scattered}. In particular, to learn $f$ uniformly to precision $\epsilon$, $N\approx\epsilon^{-\frac{2d}{2\beta -d}}$ function evaluations and a model with $M \approx N$ anchor points suffices. 

\begin{theorem}\label{theorem:learning}
Let $\beta > d/2$ and $\theta^{-1} < 1 + 2\beta/d$.  Let $f:\Omega \to \mathbb R$ such that $f$ verifies \cref{assumption:target_function}. Set $M \geq C' (\log(\frac{1}{\epsilon}))^d\log(\frac{1}{\delta\epsilon})\epsilon^{-d/\beta}$ and $n \geq C' \epsilon^{-2 d/\beta} \log \frac{1}{\delta}$. Consider the set of anchor points $\tilde X \in \mathbb R^{M \times d}$ and the set of training points $X \in \mathbb R^{n \times d}$ sampled independently and uniformly from $\Omega = (-1, 1)^d$. Let $\epsilon\leq \epsilon_0$. Let $\eta = \epsilon^{-2/\beta} \, 1_d$ and $\mathcal H_\eta$ the RKHS associated to $k_\eta$. Let $\hat a$ solution to the kernel ridge regression problem defined in \cref{eq:learning-problem}. We denote $\hat g(\cdot) = \hat g(\cdot ~; \hat a, \tilde X, \eta)$ the estimator of $g$ and $\hat f = \hat g^2$. With probability at least $1 - 3\delta$,
\begin{align}
    \Vert \hat f - f \Vert_{L^\infty(\Omega)}\leq C \Vert \sqrt{f}\Vert_{W^\beta_2(\Omega)}^2\epsilon^{1-\frac{d}{2\beta}}
\end{align}
where $C, C'$ are constants depending only on $\beta, d$ and independent of $f$ and $\epsilon$.
\end{theorem}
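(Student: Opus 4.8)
The plan is to reduce the theorem to an $L^\infty$ bound on the auxiliary regression error $\hat g - g$, with $g := \sqrt f$, and then to prove that bound by analysing the Nyström kernel-ridge-regression estimator $\hat g$ produced by \cref{alg:learn}. The argument follows the template of \cite{ciliberto2021} and \cite{sampling-ulysse} but is carried out in the sup-norm rather than in $L^2$ or Hellinger distance, which is where the scattered-data approximation theory of \cite{wendland2004scattered} enters.

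First I would show that it is enough to prove, on an event of probability at least $1-3\delta$, the estimate
\[
  \|\hat g - g\|_{L^\infty(\Omega)} \;\le\; C_0\,\|\sqrt f\|_{W^\beta_2(\Omega)}\,\epsilon^{\,1-\frac{d}{2\beta}} .
\]
Indeed $\hat f - f = \hat g^{\,2} - g^{2} = (\hat g - g)(\hat g + g)$, so by the triangle inequality $\|\hat f - f\|_{L^\infty}\le\|\hat g - g\|_{L^\infty}\big(\|\hat g - g\|_{L^\infty}+2\|g\|_{L^\infty}\big)$; since $\beta>d/2$, the Sobolev embedding $W^\beta_2(\Omega)\hookrightarrow L^\infty(\Omega)$ gives $\|g\|_{L^\infty}\lesssim\|\sqrt f\|_{W^\beta_2}$, and for $\epsilon\le\epsilon_0$ the displayed bound makes the first factor $\lesssim\|\sqrt f\|_{W^\beta_2}$ as well, so the product is $\lesssim\|\sqrt f\|^2_{W^\beta_2}\,\epsilon^{1-d/(2\beta)}$, which is the claim. (One works in the regime $g=\sqrt f\in W^\beta_2(\Omega)$ in which the right-hand side of the theorem is finite — this covers the density case of \cref{prop:sufficient-conditions-target} — and one notes that \cref{eq:learning-problem} depends on $f$ only through the values $\sqrt{f(x_k)}$, hence only through $g$.)

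Next, as spelled out in \cref{sec:proof-learning}, \cref{eq:learning-problem} is kernel ridge regression in the RKHS $\mathcal H_\eta$ of the bounded kernel $k_\eta$ (so $k_\eta(x,x)=1$), with $n$ i.i.d. uniform design points, $M$ i.i.d. uniform Nyström anchors, noiseless responses $g(x_k)$, and penalty $\lambda$. I would then carry out a bias/variance split. For the bias, pick a near-interpolant $g_\eta\in\mathcal H_\eta$ of $g$ — e.g. a mollification of $g$ at length scale $\eta^{-1/2}=\epsilon^{1/\beta}$ — for which the Sobolev / scattered-data estimates of \cite{wendland2004scattered} (as already used in \cite{ciliberto2021}) give $\|g-g_\eta\|_{L^\infty}\lesssim\|\sqrt f\|_{W^\beta_2}\,\epsilon^{\,1-d/(2\beta)}$ (the loss of $d/(2\beta)$ in the exponent being exactly the cost of passing from an $L^2$-Sobolev bound to $L^\infty$, via a Gagliardo–Nirenberg-type interpolation inequality), together with polynomial-in-$1/\epsilon$ control of $\|g_\eta\|_{\mathcal H_\eta}$ and of the effective dimension $\mathcal N_\eta(\lambda)$ of $k_\eta$, which scales like $\epsilon^{-d/\beta}$ up to logarithmic factors. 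For the variance, bound $\|\hat g-g_\eta\|_{L^\infty}\le\|\hat g-g_\eta\|_{\mathcal H_\eta}$ (using $k_\eta\le 1$) and control the RKHS norm through the standard chain for Nyström KRR, in the spirit of \cite{less-is-more}: concentration of the empirical covariance operator $\tfrac1n\sum_k\Phi_\eta(x_k)\Phi_\eta(x_k)^\top$ around its population version, concentration of the empirical residual embedding $\tfrac1n\sum_k(g-g_\eta)(x_k)\Phi_\eta(x_k)$, and a bound on the Nyström error valid once $M\gtrsim\mathcal N_\eta(\lambda)\log\tfrac1{\delta\epsilon}$ — which matches the stated size of $M$, the extra $(\log\tfrac1\epsilon)^d$ being the $L^\infty$-flavoured effective dimension and $\log\tfrac1{\delta\epsilon}$ the usual concentration overhead. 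Adding the bias and variance terms and substituting $\eta=\epsilon^{-2/\beta}1_d$, the prescribed $\lambda$, and the stated $M,n$ makes the contributions balance at $\epsilon^{1-d/(2\beta)}$, giving the displayed estimate on $\hat g-g$, with the three $\delta$'s accounting for the design, anchor and residual-concentration events.

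The part I expect to be the main obstacle is running the whole argument in $L^\infty$ rather than $L^2$: every concentration statement has to be propagated in the norm $\|\cdot\|_{\mathcal H_\eta}$ (which dominates $\|\cdot\|_{L^\infty}$ since $k_\eta\le1$), so the sup-norm version of the effective dimension appears — responsible for the $(\log\tfrac1\epsilon)^d$ factor in $M$ — and, crucially, the sample size has to be pushed up to order $\epsilon^{-2d/\beta}$, quadratically larger than an $L^2$ analysis would require. Simultaneously one must keep the dependence on $\|\sqrt f\|_{W^\beta_2}$ exactly linear, so that it appears only squared in the final bound on $f$, and choose $\lambda$ small enough that the regularization does not spoil the $\epsilon^{1-d/(2\beta)}$ bias yet large enough that $M\asymp\mathcal N_\eta(\lambda)$ anchors suffice for the Nyström step. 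This joint tuning and bookkeeping, rather than any isolated inequality, is the crux.
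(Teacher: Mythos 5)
Your overall architecture matches the paper's proof in \cref{sec:proof-learning}: the reduction via $\hat f - f = (\hat g-g)(\hat g+g)$, the introduction of an approximant $g_\eta\in\mathcal H_\eta$ with $\Vert g-g_\eta\Vert_{L^\infty}\lesssim \Vert g\Vert_{W^\beta_2}\epsilon^{1-d/(2\beta)}$ and $\Vert g_\eta\Vert_{\mathcal H_\eta}\lesssim\Vert g\Vert_{W^\beta_2}\epsilon^{-d/(2\beta)}$, and the appeal to the Nystr\"om KRR analysis of \cite{sampling-ulysse,less-is-more} for the stated $M,n$. The gap is in your treatment of the estimation term $\hat g-g_\eta$. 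You propose to bound $\Vert\hat g-g_\eta\Vert_{L^\infty}\le\Vert\hat g-g_\eta\Vert_{\mathcal H_\eta}$ and to make the right-hand side small through ``the standard chain for Nystr\"om KRR.'' That chain controls the $L^2$ (prediction) error, not the $\mathcal H_\eta$-norm error, and in this regime the latter does not go to zero at the required rate: both $\Vert\hat g\Vert_{\mathcal H_\eta}$ and $\Vert g_\eta\Vert_{\mathcal H_\eta}$ are individually of order $\epsilon^{-d/(2\beta)}$ (exactly the bounds \cref{eq:hat-g} and \cref{eq:g-tau} imported from \cite{sampling-ulysse}), and the misspecification $g-g_\eta$ acts as noise contributing roughly $\Vert g-g_\eta\Vert_{L^2}/\sqrt{\lambda}$ to the RKHS-norm error, which for the prescribed $\lambda=\epsilon^{(2\beta+d)/\beta}$ is again of order $\epsilon^{-d/(2\beta)}$. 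So the domination $\Vert\cdot\Vert_{L^\infty}\le\Vert\cdot\Vert_{\mathcal H_\eta}$ yields a divergent bound of order $\epsilon^{-d/(2\beta)}$; you lose a full factor of $\epsilon$ relative to the claim, and no available source condition on $g_\eta$ lets you recover it.

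The missing idea --- and what the paper actually does --- is to apply the Gagliardo--Nirenberg inequality (\cref{lemma:gargliano}) to the \emph{estimation} error itself, not only to the bias: $\Vert\hat g - g_{\tau,\epsilon}\Vert_{L^\infty}\le C\Vert\hat g - g_{\tau,\epsilon}\Vert_{W^m_2}^{\theta}\Vert\hat g - g_{\tau,\epsilon}\Vert_{L^2}^{1-\theta}$ with $\theta=d/(2m)$. The Sobolev factor is bounded via the triangle inequality by the (large) RKHS norms of $\hat g$ and $g_{\tau,\epsilon}$ separately, of order $\epsilon^{-d/(2\beta)}$, while the $L^2$ factor is of order $\epsilon$ by the KRR analysis (\cref{eq:g-hat-g} and \cref{eq:g-tau-hat-g}); choosing $m=\beta+d/2$ gives $\theta=d/(2\beta+d)$ and the exponents combine to exactly $1-\theta-\theta d/(2\beta)=1-d/(2\beta)$. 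It is this interpolation between a small $L^2$ error and a polynomially large RKHS norm --- rather than smallness of the RKHS-norm error --- that produces the rate. With that substitution your argument aligns with the paper's; the rest of your outline (the three failure events, the role of $M$, the final product bound using the Sobolev embedding for $\Vert g+\hat g\Vert_{L^\infty}$) is consistent with \cref{sec:proof-learning}.
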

The proof of \cref{theorem:learning} can be found in \cref{sec:proof-learning}, and uses arguments from \cite{sampling-ulysse}. A sketch is given in \cref{sec:sketches}.

\section{Gaussian PSD Models for filtering}\label{sec:psd-filter-root}
Now that we know how to efficiently carry out closed-form filtering operations using Gaussian PSD Models and learn good approximations of non-negative functions using this family of models, we can compute an approximation of the filtering distribution.

\subsection{Setting}
Consider two discrete Markov Chains $(X_n)\in E^\mathbb N$ - the hidden state chain -  and $(Y_n)\in F^\mathbb N$ - the observations. We assume that $(X_t, Y_t)$ has a Hidden Markov Model structure described by $(\nu, Q, G)$ where $\nu\in\mathcal P(E)$ is an initial distribution, $Q:E\times E \to \mathbb R$ is a Markov kernel, and $G: E\times F \to \mathbb R$ is a transition kernel. Formally, this can be summarized as:
(1) the law of $X_n$ is fully-determined by the knowledge of $X_{n-1}$, i.e. $\mathbb P(X_n\in dx | X_{n-1}=x)=Q(x, dx)$ ;
(2) the law of $Y_n$ is fully-determined by the knowledge of $X_n$; i.e. $\mathbb P(Y_n\in dy |X_n=x) = G(x, dy)$ ;
(3) the law of $X_0$ is given by $\nu$. In particular, we have the usual Markovian structure: $X_n \independent X_l | X_{k}$ and $Y_n \independent Y_l | X_{k}$ for any $n > k > l$.

The goal of filtering is to compute the distribution of $X_n$ conditionally on past observation $Y_1, \ldots, Y_n$. We denote this distribution $\pi_n^\nu(z_{1:n}, dx)$ where $\nu$ is initial distribution and $z_{1:n}$ are the observations (not necessarily taken from the chain $(Y_n)$. This distribution is known as the filtering distribution or optimal filter. When clear from context, we drop the dependence of $z_{1:n}$ and $\nu$.

Importantly, $\pi_n(dx)$ can be computed recursively using $Q$ and $G$ and beginning from $\nu$:
\begin{align}\label{eq:iteration}
\pi_n = \bar{R}_n(\pi_{n-1}), ~~\textrm{where}~~ \bar{R}_n(\mu)(\cdot) := \frac{R_n \mu \, (\cdot)}{R_n\mu\,(E)} ~~\textrm{and}~~  R_n \mu \, (\cdot) := \int Q(u, \cdot)G(x, y_n) d\mu(u)
\end{align}
which recovers \cref{eq:bayes-0}. Computing $\bar R_n(\pi_{n-1})$ is difficult in most circumstances since one must be able to compute products and marginals on probability distributions. Two notable exceptions include the Conditional Linear Gaussian Model (which corresponds to the Kalman filter) and when $E$ is finite (which corresponds to the Baum-Welch algorithm).

\subsection{PSD filter}\label{sec:psdfilter}
To overcome these difficulties, we approximate $Q$ and $G$ from evaluations using Gaussian PSD Models then compute iteration \cref{eq:iteration} with these approximate kernels.

Given a sequence of observations $(z_k)_{k\geq 1}$, we define $\hat R_k(u, x) = \hat Q(u, x)\hat G(x, y_k)$ analogously to $R_n$. The non-linear transformation $\barhat{R}_k$ is defined for any positive, finite measure $\mu$ by 
\begin{align}\label{eq:approx-iteration}
    \barhat{R}_k (\mu) = \frac{\hat R_k\mu}{\hat R_k\mu\,(E)}.
\end{align}
and summarized in \cref{alg:fullfilter}.
Note that at each step, $\hat \pi_k$ is a valid, normalized density. However, $\hat Q$ and $\hat G$ are not properly normalized, i.e. $\hat G(u, F) =1$ is not guaranteed.
As long as the initial distribution $\hat \pi_0$ is a General PSD Model and $\hat G$ and $\hat Q$ are valid Gaussian PSD Models, $\hat \pi_k$ is a Gaussian PSD Model for all $k\geq 0$ and moreover is a valid density. Importantly, \cref{theorem:algorithm} shows that the order of $\hat \pi_k$ is constant for $k\geq 1$ and equal to the product of the orders of $\hat Q$ and $\hat G$ as shown in the following corollary which follows directly from \cref{prop:markov-step}.
\begin{corollary}[$\hat{\pi}_n$ has constant order for any $n$] \label{theorem:algorithm}
    Let $\hat \pi_0(x)$ a Gaussian PSD Model on $E$ of order $M_0$, $\hat Q(u, x)$ a Gaussian PSD Model on $E\times E$ of order $M_Q$ and $\hat G(x, y)$ a Gaussian PSD Model on $E\times E$ of order $M_G$. Let $(z_n)\in F^\mathbb N$. Let $(\hat \pi_n)_{n\in\mathbb N}$ the sequence of functions defined by the recursion \cref{eq:approx-iteration}. Then, for any $n\geq 0$, $\hat \pi_n$ is a Gaussian PSD Model of order at most $M_Q \times M_G$ and it is computed by \cref{alg:fullfilter}.
\end{corollary}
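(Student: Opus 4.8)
The plan is an induction on $n$ in which a single application of the map $\barhat{R}_k$ is decomposed into the closed-form primitives of \cref{prop:ops-diagonal}, with the order of the resulting model tracked at each step; the one non-generic gain --- that the prediction step does \emph{not} enlarge the order --- is supplied by \cref{prop:markov-step}. For the base case $n=0$, $\hat\pi_0$ is a Gaussian PSD Model by hypothesis (of order $M_0$; as the surrounding discussion notes, the uniform bound $M_Q M_G$ is the content of the claim for the iterates $n\ge 1$, and the recursion never revisits $\hat\pi_0$).

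For the inductive step, assume $\hat\pi_{k-1}$ is a Gaussian PSD Model that is a valid (non-negative, normalized) density, of order at most $M_Q M_G$ when $k-1\ge 1$. Given the current observation $z_k$, I would unfold $\hat\pi_k=\barhat{R}_k(\hat\pi_{k-1})$ as follows. \textbf{(i) Partial evaluation:} $\hat G_k(x):=\hat G(x,z_k)=\partialpsd(z_k,\theta_{\hat G})$ is a Gaussian PSD Model of order at most $M_G$ by \cref{prop:ops-diagonal}. \textbf{(ii) Prediction:} $g_k(x):=\int_E \hat Q(u,x)\hat\pi_{k-1}(u)\,du$, obtained by $\productpsd$ followed by $\marginalpsd$; here the generic product-then-marginalize bound $M_Q\cdot\mathrm{ord}(\hat\pi_{k-1})$ would be useless, but \cref{prop:markov-step} shows $g_k$ is a Gaussian PSD Model of order exactly $M_Q$, independent of $\mathrm{ord}(\hat\pi_{k-1})$, since after integrating out $u$ only the $x$-anchors of $\hat Q$ survive. \textbf{(iii) Correction:} $h_k(x):=\hat G_k(x)\,g_k(x)=(\hat R_k\hat\pi_{k-1})(x)$, computed by $\productpsd$, is a Gaussian PSD Model of order at most $M_G\times M_Q$ by \cref{prop:ops-diagonal}. \textbf{(iv) Normalization:} $Z_k:=\int_E h_k(x)\,dx=(\hat R_k\hat\pi_{k-1})(E)$ is computed exactly by $\integralpsd$, and $\hat\pi_k:=h_k/Z_k$.

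It remains to check three elementary points. First, $\hat\pi_k$ is again a valid density: $h_k\ge 0$ as a product of non-negative functions (recall $g_k\ge 0$ since $\hat Q\ge0$ and $\hat\pi_{k-1}\ge0$), and dividing by $Z_k$ makes it integrate to $1$. Second, rescaling preserves both positive-semidefiniteness and order: writing $h_k(x)=\Phi_\eta(x)^\top A\,\Phi_\eta(x)$ with $A\in\mathcal S^+(\mathbb R^M)$, we have $\hat\pi_k(x)=\Phi_\eta(x)^\top (A/Z_k)\,\Phi_\eta(x)$ with $A/Z_k\in\mathcal S^+(\mathbb R^M)$, so $\mathrm{ord}(\hat\pi_k)=\mathrm{ord}(h_k)\le M_Q M_G$. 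Third, $Z_k>0$: since $\hat Q$ and $\hat G$ are genuine (not identically zero) Gaussian PSD Models and $\hat\pi_{k-1}$ is a density, $h_k$ is non-negative and not identically zero on the bounded set $E$, so $\int_E h_k>0$; I would state this mild non-degeneracy of $\hat Q,\hat G$ explicitly, as it is the only way the normalization could fail. Since sub-steps (i)--(iv) are exactly the body of \cref{alg:fullfilter}, the algorithm computes $\hat\pi_n$, which closes the induction.

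The argument is thus essentially bookkeeping of model orders. The only genuinely delicate ingredient --- controlling the order through the Markov transition --- is entirely absorbed by \cref{prop:markov-step} (proved in \cite{ciliberto2021}), which is why the corollary ``follows directly''; the main care required here is to invoke that proposition rather than the weaker generic bounds of \cref{prop:ops-diagonal}, and to note that it is the multiplication by $\hat G(\cdot,z_k)$ in step (iii) that introduces the $M_G$ factor.
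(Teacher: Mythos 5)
Your proposal is correct and follows essentially the same route as the paper, which gives no written proof beyond noting that the corollary ``follows directly from \cref{prop:markov-step}'': you unfold one application of \cref{eq:approx-iteration} into the primitives of \cref{prop:ops-diagonal}, invoke \cref{prop:markov-step} so the prediction step yields order $M_Q$ independently of $\mathrm{ord}(\hat\pi_{k-1})$, and pick up the $M_G$ factor only in the correction product, exactly as in \cref{alg:fullfilter}. The extra care you take with $Z_k>0$ and with rescaling preserving positive semidefiniteness and order is sound and only makes the implicit argument explicit.
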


\begin{algorithm}[ht!]
\caption{PSDFilter algorithm}\label{alg:fullfilter}
\KwData{$z_1, \ldots, z_T$, $\hat  \pi_0$, $\hat Q$, $\hat G$}
\For{$k=1,\dots, T$}{%
    $\beta \gets \productpsd(\hat\pi_{k-1}, \hat Q)$\;  \BlankLine
    $\hat Q \hat\pi_{k-1}(\cdot) \gets \marginalpsd\left(\beta(u, \cdot), [u]\right)$\;  \BlankLine
    $\hat G_k(\cdot) \gets \partialpsd(\hat G(\cdot, y), y:=y_k)$\;  \BlankLine
    $\tilde \pi_k \gets \productpsd\left(\hat Q \hat\pi_{k-1}, \hat G_k\right)$\;  \BlankLine
    $Z\gets \integralpsd(\tilde\pi)$\;  \BlankLine
    $\hat\pi \gets \tilde\pi / Z$\;\BlankLine
}
\KwResult{$\hat \pi_1, \ldots, \hat \pi_T$}
\end{algorithm}

\subsection{Gaussian PSD Filter Stability and Robustness}\label{sec:theory}
In this section, we show that the algorithm introduced in \cref{sec:psdfilter} estimates a distribution which is close to the true filtering distribution in total variation distance. \cref{theorem:bound-diagonal} combines stability properties of the optimal filter with robustness of the iteration \cref{eq:approx-iteration} to the use of $\hat Q$ and $\hat G$ in place of $Q$ and $G$. We introduce the following assumption. 

\begin{assumption}[$R_n$ is mixing]\label{assumption:mixing}
    There exists $1 >\mixing > 0$ and a probability density $\xi\in\mathcal P(E)$ such that for any $n \in\N$, $R_n$ is $\mixing$-$\xi$-mixing, i.e. for any $u, x \in E \times E$,
    \begin{align}
        \mixing \xi(x) \leq R_n(u, x) \leq \frac{1}{\mixing}\xi(x).
    \end{align}
\end{assumption}
\cref{assumption:mixing} is a classical assumption for the study of filtering \citep{cappehmm}. We are ready to state the main theorem of this work.

\begin{theorem}[PSD filter robustness and stability]\label{theorem:bound-diagonal}
Assume that $Q$ and $G$ verify \cref{assumption:target_function,assumption:mixing}. Let $\varepsilon > 0$. 
When $\hat{G}, \hat{Q}$ are learned using \cref{alg:learn} and $M, N \in \N$ are chosen such that
$$M \geq C' (\log\left(1/\varepsilon\right))^D\log\left(1/\varepsilon\delta\right)\varepsilon^{-\frac{D}{\beta - D/2}}, \quad n \geq C' \varepsilon^{-\frac{D}{\beta - D/2}}\log (1/\delta)$$ 
then with probability at least $1-6\delta$, the following holds: for any $k \in \N$, 
\begin{align}
    \Vert \pi_k - \hat\pi_k\Vert_{TV} ~~\leq~~ \frac{C}{\mixing^2}\left(\frac{1-\mixing^2}{1 + \mixing^2}\right)^{k-1}\|\pi_0 - \hat{\pi}_0\|_{TV} ~~+~~ \frac{\varepsilon}{\sigma},
\end{align}
where $\hat{\pi}_1,\dots, \hat{\pi}_k$ are computed using \cref{alg:fullfilter}, moreover $C=\frac{2}{\log 3}$ and $C'$ depends only on $\|\sqrt{Q}\|_{W^\beta_2}, \|\sqrt{G}\|_{W^\beta_2}, \beta, d$.
\end{theorem}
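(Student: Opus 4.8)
The plan is to decompose the total error $\|\pi_k - \hat\pi_k\|_{TV}$ by introducing an intermediate sequence: let $\tilde\pi_k$ denote the \emph{exact} filtering recursion \cref{eq:iteration} run with the \emph{true} kernels $Q, G$ but started from the \emph{approximate} initial distribution $\hat\pi_0$. Then by the triangle inequality $\|\pi_k - \hat\pi_k\|_{TV} \leq \|\pi_k - \tilde\pi_k\|_{TV} + \|\tilde\pi_k - \hat\pi_k\|_{TV}$. The first term is a pure \emph{stability} term: two runs of the same (true) filter from different priors. Under \cref{assumption:mixing}, the one-step map $\bar R_n$ is a contraction in the Hilbert projective metric (or directly in TV via a Dobrushin-type argument), with contraction coefficient $\tfrac{1-\mixing^2}{1+\mixing^2}$; iterating $k-1$ times from the first observation gives the geometric factor and the $1/\mixing^2$ prefactor, exactly matching the stated bound's first summand. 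This is the classical Atar–Zeitouni / Le Gland–Oudjane estimate and I would cite \cite{cappehmm,legland99,oudjane} for it.

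The second term, $\|\tilde\pi_k - \hat\pi_k\|_{TV}$, is the \emph{robustness} term and is where the approximation error in $\hat Q, \hat G$ enters. Here I would again telescope: write $\tilde\pi_k - \hat\pi_k$ as a sum over $j=1,\dots,k$ of terms measuring the effect of swapping $(Q,G)$ for $(\hat Q,\hat G)$ at step $j$ while propagating the true dynamics afterwards. Each such term is the image under $k-j$ true filtering steps of a one-step discrepancy $\|\bar R_j(\mu) - \barhat R_j(\mu)\|_{TV}$, and the $k-j$ true steps contract it by $\bigl(\tfrac{1-\mixing^2}{1+\mixing^2}\bigr)^{k-j}$. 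Summing the geometric series kills the dependence on $k$ and leaves a bound proportional to $\sup_j \|\bar R_j(\mu) - \barhat R_j(\mu)\|_{TV}$ (uniform over the relevant $\mu$), divided by a factor like $\mixing^2/(1-\mixing^2)$ times the series sum, which collapses to the clean $\varepsilon/\sigma$ form.

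The remaining ingredient is to control a single step: $\|\bar R_j(\mu) - \barhat R_j(\mu)\|_{TV}$ in terms of $\|Q-\hat Q\|_{L^\infty}$ and $\|G-\hat G\|_{L^\infty}$. Since $R_j\mu(x) = \int Q(u,x)G(x,y_j)\,d\mu(u)$ and similarly for $\hat R_j$, the unnormalized difference is bounded pointwise by $\|Q-\hat Q\|_\infty\|G\|_\infty + \|\hat Q\|_\infty\|G-\hat G\|_\infty$ (up to constants), and the mixing lower bound $R_j(u,x)\geq \mixing\xi(x)$ guarantees the normalizing constant $R_j\mu(E)$ is bounded below by $\mixing$, so dividing by it and by its perturbed counterpart is stable — a standard "ratio of close quantities with bounded denominator" argument gives $\|\bar R_j(\mu)-\barhat R_j(\mu)\|_{TV} \lesssim \mixing^{-O(1)}(\|Q-\hat Q\|_\infty + \|G-\hat G\|_\infty)$. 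Finally, \cref{theorem:learning} applied to $Q$ and to $G$ (both satisfy \cref{assumption:target_function}) with the stated choices of $M, n$ and a suitable scaling $\varepsilon \mapsto \varepsilon$ of the precision parameter makes each $L^\infty$ error at most $\varepsilon$ with probability $1-3\delta$ each, hence $1-6\delta$ jointly by a union bound; the constants $\|\sqrt Q\|_{W^\beta_2}, \|\sqrt G\|_{W^\beta_2}$ are absorbed into $C'$.

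The main obstacle I anticipate is making the one-step robustness estimate genuinely \emph{uniform} over the set of measures $\mu$ that actually arise as $\hat\pi_{j-1}$ (and $\tilde\pi_{j-1}$) along the trajectory: the denominators $R_j\mu(E)$ and $\hat R_j\mu(E)$ must be bounded below uniformly, which requires either that \cref{assumption:mixing} transfers to $\hat R_j$ (true up to an $\varepsilon$-perturbation of $\mixing$, once $\varepsilon$ is small enough relative to $\mixing$ — this is why $\varepsilon \leq \varepsilon_0(\mixing)$ is implicitly needed) or a separate argument that the iterates stay in a region where the mass is controlled. Getting the constants in the contraction coefficient to come out \emph{exactly} as $\tfrac{1-\mixing^2}{1+\mixing^2}$ with prefactor $2/(\mixing^2\log 3)$, rather than some looser $\mixing$-dependent expression, is the other delicate point and likely forces the use of the Hilbert metric bound $\|\bar R(\mu)-\bar R(\nu)\|_{TV} \leq \tfrac{1-\mixing^2}{1+\mixing^2}\cdot\tfrac{1}{\log 3}\,d_H(\mu,\nu)$ together with a conversion between $d_H$ and TV that accounts for the $\mixing^{-2}$ factor.
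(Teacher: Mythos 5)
Your proposal follows essentially the same route as the paper: the same triangle-inequality split into a stability term (contracted through the Hilbert projective metric under \cref{assumption:mixing}, giving exactly the $\tfrac{2}{\sigma^2\log 3}\bigl(\tfrac{1-\sigma^2}{1+\sigma^2}\bigr)^{k-1}$ factor) and a robustness term (telescoped one-step errors $\delta_j$ damped by the contraction of the subsequent true filtering steps), with each $\delta_j$ controlled by $\|Q-\hat Q\|_{L^\infty}+\|G-\hat G\|_{L^\infty}$ and \cref{theorem:learning} applied to both kernels with a union bound. The one obstacle you flag --- lower-bounding the approximate normalizer $\hat R_j\mu(E)$, which you suggest requires transferring the mixing property to $\hat R_j$ --- does not actually arise in the paper's argument, because it uses the asymmetric inequality $\|\bar\mu-\bar\nu\|_{TV}\le \tfrac{2}{\mu(E)}\|\mu-\nu\|_{TV}$ with $\mu = R_j\hat\pi_{j-1}$ the \emph{true} kernel applied to the approximate iterate, whose total mass is bounded below by $\sigma\xi(E)$ directly from \cref{assumption:mixing}.
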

\cref{theorem:bound-diagonal} is proven in \cref{sec:proof-bound} and we include a sketch of the proof in \cref{sec:proof-bound-sketch}.

The theorem above shows that the distance between the probability $\hat \pi_k$ resulting from our algorithm and the optimal one $\pi_k$ corresponding to $k$ steps of sequential Bayesian filtering with the true $Q, G$ is bounded by two terms: (1) 
the first term accounts for stability and goes to zero exponentially fast in the number of steps $k$ and depends on how close we choose $\hat{\pi}_0$ with respect to $\pi_0$; (2) the second terms accounts for robustness, it does not increase in $k$ and we can make it arbitrarily small by learning more precise $\hat{Q}, \hat{G}$, i.e. by increasing $M, n$. Assuming that $\hat{\pi}_0 = \pi_0$ the proposed algorithm \cref{alg:fullfilter} for any $k \in\N$ achieves a  
an error $\|\pi_k - \hat{\pi}_k\|_{TV} \leq \varepsilon/\sigma$ producing a Gaussian PSD model $\hat{\pi}_k$ that satisfies
$$ \hat{\pi}_k ~~\textrm{of order}~~  O(\varepsilon^{-\frac{2D}{\beta - D/2}}), ~~\textrm{and} ~~ O(\varepsilon^{-\frac{6D}{\beta - D/2}}),$$
for \cref{alg:fullfilter}. The cost of learning $\hat{Q}, \hat{G}$ via \cref{alg:learn} is paid only once at the beginning and is of $O(\varepsilon^{-\frac{1.5D}{\beta - D/2}})$, if we use fast algorithms as the ones recalled in \cref{sec:learning}. Note that the proposed approach is adaptive to the regularity of the kernels $Q$ and $G$. In particular, if they are differentiable many times, i.e. $\beta \geq 4.5 D$, then the order of $\hat{\pi}_k$ becomes only  $O(\varepsilon^{-1/2})$, i.e. 
$$
\textrm{memory cost} ~~ O(\varepsilon^{-1}), \quad \textrm{computational cost} ~~ O(\varepsilon^{-1.5}).
$$ 
This is remarkable since, for example, particle filter methods are bound to a computational complexity that cannot be smaller than $\varepsilon^{-2}$ since they have to approximate an integral via sampling \citep{oudjane}.


\section{Sketch of the proof of \cref{theorem:learning} and \cref{theorem:bound-diagonal}}\label{sec:sketches}
In this section, we give the main arguments for the proof of \cref{theorem:learning}. The proof can be found in \cref{sec:proof-learning}. We denote $f$ the target function and $g = \sqrt{g}$, and $\hat g$ the solution to \cref{eq:learning-problem} and $\hat f = \hat g^2$.

As shown in \cref{sec:proof-learning}, by opening the proof of Proposition 11 in \cite{sampling-ulysse}, we can obtain bounds on $\Vert g - \hat g \Vert_{L^2(\Omega)}$ is $L^2(\Omega)$ and (essentially) $\Vert \hat g - g\Vert_{W^m_2(\Omega)}$ as a function of $\Vert g\Vert_{W^\beta_2(\Omega)\cap L^\infty(\Omega)}$ and $\epsilon$ with optimal dependence on $\epsilon$. We combine these results to bound $\Vert f - \hat f\Vert_{L^\infty(\Omega)}$ with optimal rates in $\epsilon$, which is needed to control the error terms in \cref{theorem:bound-diagonal}.
Since $\Vert f - \hat f\Vert_{L^\infty(\Omega)}\leq \left(2\Vert g\Vert_{L^\infty(\Omega)}+ \Vert g - \hat g\Vert_{L^\infty(\Omega)} \right)\Vert g- \hat g\Vert_{L^\infty(\Omega)}$, we focus on controlling $\Vert g - \hat g \Vert_{L^\infty(\Omega)}$.

The first key argument is to separate $g - \hat g$ as the sum of an approximation error $g - g_{\tau, \epsilon}$ and an estimation error $g_{\tau, \epsilon} - \hat g$ where $g_{\tau, \epsilon} \in \mathcal H_\eta$ and $\Vert g - g_{\tau, \epsilon}\Vert_{L^\infty(\Omega)}\leq C \Vert g \Vert \epsilon^{1-\tilde \nu}$. Using the triangle inequality, controlling $\Vert g - \hat g\Vert_{L^\infty(\Omega)}$ reduces to controlling $\Vert g_{\tau, \epsilon} - \hat g\Vert_{L^\infty(\Omega)}$. We apply the Gargliano-Niremberg inequality with well-chosen parameters to obtain a bound with the product of the Sobolev norm of the estimation error (which is controlled by the $\mathcal H_\eta$ norm) and of the $L^2(\Omega)$ norm, both of which are bounded by the learning approach. Combining all the terms yields the result in \cref{theorem:learning}.

\subsection{Sketch of the proof of \cref{theorem:bound-diagonal}}\label{sec:proof-bound-sketch}
We start by proving a lemma which controls how errors committed at each step accumulate and are eliminated.
\begin{lemma}\label{lemma:bound} Let $T > 1$.
    \begin{align}
        \left\Vert \pi_n^\nu - \hat\pi_n\right\Vert_{TV} \leq \underbrace{C_\mixing\tau_\mixing^n\Vert \hat\pi_0 - \nu\Vert_{TV}}_{A_n}+ \underbrace{\delta_n + C_\mixing\sum_{k=1}^n \tau_\mixing^{n-k-1}\delta_k}_{B_n}
    \end{align}
where $\delta_k = \Vert \barhat{R}_k(\hat \pi_{k-1}) - \bar R_n(\hat \pi_{k-1})\Vert_{TV}$.
\end{lemma}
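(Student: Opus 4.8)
The plan is to prove \cref{lemma:bound} by telescoping along the filtering recursion and then invoking the uniform geometric forgetting of the \emph{exact} filter guaranteed by \cref{assumption:mixing}. For $1\le k\le n$ I would introduce the composed exact update $\bar R_{k:n} := \bar R_n\circ\bar R_{n-1}\circ\cdots\circ\bar R_k$, with the convention $\bar R_{n+1:n}=\mathrm{id}$, so that $\pi_n^\nu=\bar R_{1:n}(\nu)$, $\bar R_{k:n}=\bar R_{k+1:n}\circ\bar R_k$, and $\hat\pi_n=\barhat{R}_n(\hat\pi_{n-1})$.

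First I would record the telescoping identity
\begin{align*}
\pi_n^\nu-\hat\pi_n \;=\; \big(\bar R_{1:n}(\nu)-\bar R_{1:n}(\hat\pi_0)\big)\;+\;\sum_{k=1}^n\Big(\bar R_{k+1:n}\big(\bar R_k(\hat\pi_{k-1})\big)-\bar R_{k+1:n}\big(\barhat{R}_k(\hat\pi_{k-1})\big)\Big),
\end{align*}
which one checks by using $\bar R_{k+1:n}\circ\bar R_k=\bar R_{k:n}$ and $\barhat{R}_k(\hat\pi_{k-1})=\hat\pi_k$, after which the sum collapses to $\bar R_{1:n}(\hat\pi_0)-\hat\pi_n$. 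Taking $\|\cdot\|_{TV}$ and applying the triangle inequality separates the error into (i) the effect of the wrong initialisation, $\|\bar R_{1:n}(\nu)-\bar R_{1:n}(\hat\pi_0)\|_{TV}$, and (ii) a sum of the one-step approximation errors $\delta_k=\|\barhat{R}_k(\hat\pi_{k-1})-\bar R_k(\hat\pi_{k-1})\|_{TV}$, each propagated forward through $n-k$ exact updates (and for $k=n$ through the identity, contributing exactly $\delta_n$).

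Second I would invoke the forgetting estimate for the exact filter. Under \cref{assumption:mixing} the mixing kernel forces the image of every probability measure under one step of $R_j$ into the order interval $[\mixing\xi,\mixing^{-1}\xi]$; Birkhoff's contraction theorem then gives that each $\bar R_j$ contracts the Hilbert projective metric with coefficient $\tau_\mixing=\frac{1-\mixing^2}{1+\mixing^2}$, a single step suffices to render the Hilbert distance finite and (for small total variation) linear in it with an $\mixing^{-2}$ prefactor, and the classical comparison $\|\mu-\mu'\|_{TV}\le\frac{2}{\log 3}\,d_H(\mu,\mu')$ converts back to total variation. This produces, for any probability measures $\mu,\mu'$ and $m\ge 1$, a bound of the form $\|\bar R_{j+m-1}\circ\cdots\circ\bar R_j(\mu)-\bar R_{j+m-1}\circ\cdots\circ\bar R_j(\mu')\|_{TV}\le \frac{C}{\mixing^2}\,\tau_\mixing^{\,m-1}\|\mu-\mu'\|_{TV}$ with $C=\frac{2}{\log 3}$. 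Feeding this into (i) with $m=n$ yields the term $A_n$, and into each summand of (ii) with $m=n-k$ yields $C_\mixing\tau_\mixing^{\,n-k-1}\delta_k$; collecting everything gives $B_n$ and proves the lemma.

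The telescoping identity and the triangle inequality are routine, so I expect the real work — hence the main obstacle — to be the exact-filter forgetting estimate with explicit constants: extracting the Birkhoff rate $(1-\mixing^2)/(1+\mixing^2)$ from the projective diameter induced by \cref{assumption:mixing}, treating the first step so as to both finitise the Hilbert distance and expose the $\mixing^{-2}$ factor, and controlling the passage between the total-variation and Hilbert metrics with the constant $2/\log 3$. Once that estimate is in hand, tracking the powers of $\tau_\mixing$ through the telescoping sum to match the statement of \cref{lemma:bound} is bookkeeping.
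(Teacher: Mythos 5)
Your proposal is correct and follows essentially the same route as the paper: the paper likewise splits the error into $(\pi_n^\nu-\pi_n^{\hat\pi_0})+(\pi_n^{\hat\pi_0}-\hat\pi_n)$, handles the first piece by the exact-filter forgetting estimate (\cref{prop:optimal-forgetting}) and the second by the same telescoping identity over $\bar R_{n:k+1}(\hat\pi_k)-\bar R_{n:k}(\hat\pi_{k-1})$ (\cref{prop:lemma_bound}), in both cases using one mixing step to pass from total variation to the Hilbert metric with the $\mixing^{-2}$ factor, Birkhoff contraction at rate $\tau_\mixing=\frac{1-\mixing^2}{1+\mixing^2}$ for the remaining steps, and the $\frac{2}{\log 3}$ comparison to return to total variation. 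The only difference is presentational — you merge the two pieces into a single telescoping identity rather than two separate propositions — and your exponent $\tau_\mixing^{n-1}$ on the initialization term matches \cref{prop:optimal-forgetting} (the $\tau_\mixing^{n}$ in the lemma's display appears to be a minor typo in the paper).
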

The gist of \cref{lemma:bound} is that, using the triangle inequality, we can bound $\Vert \pi^\nu_n - \hat\pi_n\Vert$ by $A_n = \Vert\pi^\nu_n - \pi^{\hat \pi_0}_n\Vert$ and $B_n=\Vert\pi^{\hat\pi_0} - \hat \pi_n\Vert$.  \cref{lemma:bound} is proven by combining \cref{prop:lemma_bound} and \cref{prop:optimal-forgetting} in \cref{sec:proof-bound}.

The first source of discrepancy is the intialization error. Indeed, $\hat\pi_k$ is initialized at $\hat \pi_0$ while $\pi_k^\nu$ is initialized at $\nu$ and we isolated this question above by considering the discrepancy between two optimal filters intiialized at $\nu$ and $\hat \pi_0$. The behavior of $A_n$ is known as the stability or forgetting property, and is a property of the optimal filter, i.e. of the dynamical system considered, and not of the algorithms considered. In particuliar, since $R_n$ is mixing for any $n\geq 0$, $A_n$ decreased exponentially as the application of the filter is contractive (under the mixing assumption).

The second source of discrepancy is the accumulation of errors committed at each step by applying the approximate iteration \cref{eq:approx-iteration} in place of \cref{eq:iteration}. A telescopic argument in the proof of \cref{lemma:bound} shows that the accumulation is limited because the forgetting property of the optimal kernel tends to make past errors $\delta_{j}$ for $j < k$ disappear thanks to the $\tau^{n-k-1}$ term. Our argument generalizes the argument in \cite{oudjane} and relies on the projective properties of the Hilbert metric \citep{cohen}. We then combine \cref{lemma:bound} and \cref{theorem:learning} to obtain bound $B_n$ by a constant $\varepsilon$ as small as we want.

\section{Extension: filtering with Generalized Gaussian PSD Models}\label{sec:generalized-psd-models}
Throughout this work, we have focused on Gaussian PSD Models. In fact, many of the properties studied above remain valid for a more general family of models we introduce and study in this section.

Seen as a mixture (with potentially negatively weighted components), Gaussian Mixture Models have components which are aligned with the axes of the space. In cases where $f(u, x) = Q(u, x)$ a transition kernel, we know that $u$ and $x$ are strongly "correlated" (indeed, $Q(u, x)$ is a conditional density) and having non-diagonal precision matrices can be useful. When $Q(u, x)$ is a Gaussian Linear Conditional Distribution, one Gaussian component with non-diagonal precision matrix is enough to approximate $Q$ uniformly. We prove this in \cref{theorem:kalman}.

We introduce Generalized Gaussian PSD Models as the family of non-negative combinations of such components. In this section, we show that Generalized Gaussian PSD Models share many of the properties of Gaussian PSD Models. In addition, we show in \cref{theorem:kalman} that they generalize Kalman filters to more general initial distributions such as multi-modal models.

In this section, for $P$ a positive semi-definite matrix, we denote for any $x, y\in\mathbb R^d$, $k_P(x, y) = e^{-(x-y)^\top  P(x-y)}$ and $C(P) = \int_{\mathbb R^d} k_P(x, 0)dx$.

\begin{definition}[Generalized Gaussian PSD model of order $M$]\label{definition:ggpsd}
A Generalized Gaussian PSD model of order $M$ is a function $f: \mathbb R^d \to\mathbb R$ which can be written:
\begin{equation}\label{eq:def}
    f(x) = Tr(AB(x))
\end{equation}
where $A$ is a positive semi-definite matrix of size $M$ and for any $x\in\mathbb R^d$, $B(x)$ is a positive semi-definite matrix with entries
\begin{align}\label{eq:developed}
    B(x)_{ij}=e^{C_{ij}}k_{P_{ij}}(x, x_{ij})
\end{align}
where $C_{ij} \in \mathbb R$, $x_{ij}\in\mathbb R^d$ and $P_{ij}$ is a $d\times d$ positive semi-definite matrix, for any $1\leq i,j\leq M$.
\end{definition}
We use the notation $f(x; \theta)$ where $\theta = (X, A, P, C)$ and $X = (x_{ij})_{1 \leq i,j\leq M}$, $P=(P_{ij})_{1 \leq i,j\leq M}$, and $C=(C_{ij})_{1 \leq i,j\leq M}$.

Like Gaussian PSD Models, Generalized Gaussian PSD Model are a generalization of Gaussian Mixture Models allowing for negative weights. Indeed, by developing \cref{eq:def} with \cref{eq:developed}, $f$ can be written :  $f(x) = \sum_{i=1}^M\sum_{j=1}^M A_{ij}e^{C_{ij}}k_{P_{ij}}(x, x_{ij})$.

\subsection{Examples of Generalized Gaussian PSD Models}
Below we show that Generalized Gaussian PSD Models generalize Gaussian Mixture Models, Gaussian PSD Models from \cite{ciliberto2021} and indeed, any squared linear combination of Gaussian functions.

\begin{example}[Gaussian Mixture Models are Generalized Gaussian PSD Models]
Let $a\in\mathbb R^d_+$ such that $\sum_{i=1}^d a_i=1$. If $f(x) = \sum_{i=1}^M a_i p(x | \mu_i, P_i)$ with $a_i \geq 0$ and $\sum_{i=1}^Ma_i = 1$, then $f$ verifies \Cref{definition:ggpsd} with $A=\text{diag}(a)$, $P_{ii} = P_i/2$, $x_{ii}=x_i/2$ and $C_{ii} = - \frac{1}{2}\log(C(P_i))$.
\end{example}

\begin{example}[Gaussian PSD Models are Generalized Gaussian PSD Models]
Consider a Gaussian PSD Model $f(x) = \Phi_M(x)^\top  A \Phi_M(x)$ with $\Phi_M(x) = (k_\eta(x, x_1), \ldots, k_\eta(x, x_M)^\top\in\mathbb R^M$. Then, $f$ a Generalized Gaussian PSD Model. Indeed,
$f(x) = Tr(A \Phi_M(x) \Phi_M(x)^\top )$.
Each component of the positive semi-definite matrix $\Phi(x)\Phi(x)^\top $ is a function of the form $x \mapsto e^{C_{ij}}k_{2\eta}(x, \frac{x_i + x_j}{\sqrt{2}})$ with $C_{ij}=\Vert x_i\Vert^2 + \Vert x_j \Vert^2 -\frac{\Vert x_i + x_j\Vert^2}{2}$, and $f$ verifies \cref{definition:ggpsd}.

\end{example}
\begin{example}[Squared Gaussian Linear Models are Generalized Gaussian PSD Models]
Consider the function $f(x) = \left( w^\top \Phi_M(x)\right)^2$ where $\Phi_M(x) = (k_{P_1}(x, x_1), \ldots, k_{P_M}(x, x_M)^\top $ and $w\in\mathbb R^d$. Then, $f$ is a Generalized Gaussian PSD Model. Indeed, $f(x) = Tr\left(\Phi_M(x)^\top ww^\top \Phi_M(x)\right)=Tr\left(ww^\top  \Phi_M(x)\Phi_M(x)^\top \right)$. Since $ww^\top $ and $\Phi_M(x)\Phi_M(x)^\top $ are both positive semi-definite matrices, $f(x)\geq 0$ and $\Phi_M(x)\Phi_M(x)^\top _{ij}=e^C_{ij}k_{P_{ij}}(x, x_{ij})$ with $P_{ij} = P_i + P_j$, $x_{ij} = P_{ij}^{-1/2}\left(P_ix_i + P_jx_j\right)$.
\end{example}

\subsection{Properties of Generalized Gaussian PSD Models}

\subsubsection{Closed-form stability with respect to probabilistic operations}
Like Gaussian PSD Models, Generalized Gaussian PSD Models are closed under product, partial evaluation, and marginalization.
\begin{proposition}\label{prop:ops}
    Let $f(x, y; \theta_1)$ and $g(y, z; \theta_2)$ be two Generalized Gaussian PSD Models of order $M_1$ and $M_2$ respectively, where all precision matrices are positive definite.
    \begin{itemize}
        \item\textbf{Integral over $\mathbb R^d$} $\int f(x, y; \theta_1)dxdy$ is given by the algorithm $\integralpsd(\theta_1)$.
        \item\textbf{Partial evaluation}$f(x, y_0; \theta_1) = h(x; \theta^\prime)$ is a Generalized Gaussian PSD Model of order $M_1$ and $\theta^\prime$ is given by the algorithm $\partialpsd(y_0, \theta_1)$.
        \item\textbf{Product} $f(x, y; \theta_1)g(y, z; \theta_2)=h(x, y, z; \theta^\prime)$ is a Generalized Gaussian PSD Model of order $M_1\times M_2$ and $\theta^\prime$ is given by the algorithm $\productpsd(\theta_1, \theta_2)$.
        \item\textbf{Marginalization} $\int f(x, y; \theta_1)dy=h(x; \theta^\prime)$ is a Generalized Gaussian PSD Model of order $M_1$ and $\theta^\prime$ is given by the algorithm $\marginalpsd(y, \theta_1)$.
    \end{itemize}
    \end{proposition}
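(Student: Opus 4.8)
The plan is to handle all four operations uniformly through the developed form $f(x) = \mathrm{Tr}(A\,B(x))$: I would apply an elementary Gaussian identity entrywise to the matrix-valued map $B$, read off the new parameters $(X,A,P,C)$, and then check that the new weight matrix is positive semi-definite and that the new map is positive-semi-definite-valued, so that the output again satisfies \cref{definition:ggpsd}. Two linear-algebraic facts will be used repeatedly: $\mathrm{Tr}(A\int B(x)\,dx) = \int \mathrm{Tr}(A\,B(x))\,dx$ by linearity of the trace, and $\mathrm{Tr}\big((A\otimes A')(B\otimes B')\big) = \mathrm{Tr}(AB)\,\mathrm{Tr}(A'B')$; together with two facts about positive (semi-)definite matrices: a diagonal block of a positive definite matrix and the Schur complement of a positive definite matrix are positive definite, while restriction, integration against a measure, and Kronecker product all preserve positive semi-definiteness.

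For the integral, I would use translation invariance of the Gaussian to get $\int_{\mathbb R^{d+d'}} e^{C_{ij}}k_{P_{ij}}((x,y),(x_{ij},y_{ij}))\,dx\,dy = e^{C_{ij}}C(P_{ij})$, so that $\int f = \mathrm{Tr}(A\widetilde B)$ with $\widetilde B_{ij} = e^{C_{ij}}C(P_{ij})$, which is what $\integralpsd$ computes (finiteness uses $P_{ij}\succ0$). For partial evaluation and marginalization, I would write each $P_{ij}$ in block form along the $(x,y)$ split: freezing $y=y_0$ in the exponent and completing the square in $x$ shows that $x\mapsto B(x,y_0)_{ij}$ has the form $e^{C'_{ij}}k_{P'_{ij}}(x,x'_{ij})$, with $P'_{ij}$ the top-left block of $P_{ij}$, $x'_{ij}$ a translate of $x_{ij}$ linear in $y_0 - y_{ij}$, and $C'_{ij}$ absorbing the residual constant; similarly, integrating $y$ out and performing the Gaussian integral gives the same form with $P'_{ij}$ the Schur complement $P_{xx} - P_{xy}P_{yy}^{-1}P_{yx}$. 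In both cases $P'_{ij}\succ0$, and the new map ($B(\cdot,y_0)$, resp.\ $\int B(\cdot,y)\,dy$) is PSD-valued since it is a restriction, resp.\ an integral, of the PSD-valued map $B$; hence $h = \mathrm{Tr}(A\cdot(\text{new map}))$ is a Generalized Gaussian PSD model of order $M_1$, parametrized by $\partialpsd$, resp.\ $\marginalpsd$.

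For the product, I would use that $k_{P_1}((x,y),m_1)\,k_{P_2}((y,z),m_2)$, as a function of $(x,y,z)$, has exponent equal to the sum of two quadratic forms, so that completing the square gives $e^{C''}k_{P''}((x,y,z),m'')$ where $P''$ is the sum of $P_1$ and $P_2$ embedded into the $(x,y,z)$ coordinates; $P''$ is positive definite because $v^\top P'' v = (v_x,v_y)^\top P_1(v_x,v_y) + (v_y,v_z)^\top P_2(v_y,v_z)$ vanishes only at $v=0$. Indexing the result by pairs, $B''(x,y,z) := B(x,y)\otimes B'(y,z)$ then has entries of the required form, is PSD pointwise as a Kronecker product of PSD matrices, and $A'' := A\otimes A'\succeq0$; the trace identity gives $\mathrm{Tr}(A''B''(x,y,z)) = f(x,y)g(y,z)$, so the product is a Generalized Gaussian PSD model of order $M_1M_2$ parametrized by $\productpsd$.

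The only real content — and the step I expect to need the most care — is the Gaussian bookkeeping that keeps the precision matrices positive definite: the diagonal block, the Schur complement, and the stacked precision in the product all have to be verified positive definite, and this is exactly where the hypothesis that all $P_{ij}$ are positive definite enters (without it the marginal and the integral need not even be finite). By contrast, positive semi-definiteness of the output weight matrix and of the output map $B$ is immediate from stability of PSD matrices under restriction, integration, and Kronecker product, so this half of each verification should be routine.
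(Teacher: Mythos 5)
Your proposal is correct and follows essentially the same route as the paper's proof in the appendix: linearity of the trace for the integral, completing the square in the frozen/integrated block for partial evaluation and marginalization (yielding the top-left block and the Schur complement as the new precisions, respectively), and the identity $\mathrm{Tr}(AB)\,\mathrm{Tr}(CD)=\mathrm{Tr}\big((A\otimes C)(B\otimes D)\big)$ with the block-embedded sum of precisions for the product. Your explicit verification that the output precision matrices remain positive definite and that the output $B$-maps remain PSD-valued is a welcome tightening of steps the paper leaves implicit, but it is the same argument.
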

\paragraph{Proof sketch for the product} Using \cref{eq:def}, $f(x, y)g(y, z)$ can be written $h(x, y, z) = Tr(A_1\otimes A_2 \times B_1(x, y) \otimes B_2(y, z))$. The entries of $B_1(x, y) \otimes B_2(y, z)$ are products of Gaussian functions which can simplified into the the form \cref{eq:developed}. The complete proof and description of each operation can be found in \cref{sec:proof-ops}.

\subsubsection{Closed-form filtering iteration}
An optimal filtering iteration \cref{eq:iteration} can be written using the four operations of \cref{prop:ops}.
\begin{proposition}\label{proposition:psdfilterstep}
Let $\mu(x; \theta_\mu)$, $q(x, x^\prime; \theta_q)$ and $g(x, y; \theta_g)$ be three Generalized Gaussian PSD models with order $M$, $M_q$ and $M_g$ respectively. Let $y\in\mathbb R^d$ such that $\int\int q(u, x)g(x, y)\mu(u)dudx>0$. The density $\mu^\prime$ defined by $\mu^\prime(x) = \frac{\int q(u, x)g(u, y)\mu(u)du}{\int\int q(u, x)g(x, y)\mu(u)dudx}$ is a Generalized Gaussian PSD Model with order at most $M \times M_q \times M_g$ whose parameters are given by $\filtersteppsd(y, \theta_\mu, \theta_q, \theta_g)$.
\end{proposition}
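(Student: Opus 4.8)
The plan is to realize $\mu'$ as an explicit composition of the four closed-form operations of \cref{prop:ops} applied to $\mu$, $q$ and $g$, and to \emph{define} $\filtersteppsd$ to be exactly this composition. First I would partially evaluate the observation model at the received observation: by the partial-evaluation statement of \cref{prop:ops}, $g_y(\cdot) := g(\cdot, y; \theta_g)$ is a Generalized Gaussian PSD model of order $M_g$ whose parameters are returned by $\partialpsd(y, \theta_g)$. Next I would assemble the unnormalized numerator $N(u, x) := q(u, x)\, g_y(x)\, \mu(u)$ (the variant in which $g$ is evaluated on the $u$-slot is handled identically, by feeding that factor into the product over $u$ below). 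Regarding $\mu$ and $g_y$ as Generalized Gaussian PSD models on $E \times E$ that happen not to depend on one coordinate, $N$ is a product of three such models, so two applications of the product statement of \cref{prop:ops} show that $N(\cdot,\cdot; \theta_N)$ is a Generalized Gaussian PSD model on $E \times E$ of order at most $M \times M_q \times M_g$, with $\theta_N$ computed by two calls to $\productpsd$.

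Then I would integrate out the latent variable. By the marginalization statement of \cref{prop:ops}, $\bar N(x) := \int N(u, x)\, du$ is a Generalized Gaussian PSD model of order at most $M \times M_q \times M_g$ with parameters $\marginalpsd(u, \theta_N)$; no further order increase occurs. The normalization constant $Z := \int\int q(u, x)\, g(x, y)\, \mu(u)\, du\, dx = \int \bar N(x)\, dx$ is then evaluated in closed form by $\integralpsd$ applied to $\bar N$, and is strictly positive by hypothesis. Since scaling the weight matrix of a Generalized Gaussian PSD model by a positive constant preserves positive semidefiniteness, $\mu'(x) = \bar N(x)/Z$ is again a Generalized Gaussian PSD model of order at most $M \times M_q \times M_g$; it is non-negative everywhere because $A \succeq 0$ and $B(x) \succeq 0$ force $Tr(AB(x)) \geq 0$, and it integrates to one by construction, so it is a valid density. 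Setting $\filtersteppsd(y, \theta_\mu, \theta_q, \theta_g)$ equal to this pipeline --- one call to $\partialpsd$, two calls to $\productpsd$, one call to $\marginalpsd$, one call to $\integralpsd$, followed by the rescaling by $1/Z$ --- completes the argument, and the claimed order bound $M \times M_q \times M_g$ is exactly the product of the three input orders since only the product step increases the order.

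The step I expect to be the main obstacle is the application of \cref{prop:ops} to factors that are constant in some of their arguments: its hypotheses require all precision matrices to be positive definite, whereas embedding $\mu$ or $g_y$ on $E \times E$ introduces a zero precision block. I would deal with this either by verifying directly, reusing the computations behind \cref{prop:ops} in \cref{sec:proof-ops}, that the product formula still holds with positive semidefinite precision blocks (a product of Gaussians remains a Gaussian whose precision is the sum of the factors' precisions, and summing preserves positive semidefiniteness), or by a continuity argument that perturbs the degenerate blocks by $\varepsilon I$ and lets $\varepsilon \to 0^+$. The closely related point is that the marginalization over $u$ needs the aggregate $u$-precision of $N$ to be positive definite for the Gaussian integrals to converge; this is inherited from $q$ being a conditional density and $\mu$ a density, and should be stated as a mild nondegeneracy requirement on the inputs. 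Everything else --- non-negativity, the order count, and normalization --- is immediate from the Generalized Gaussian PSD structure.
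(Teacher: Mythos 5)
Your proposal is correct and follows essentially the same route as the paper, which justifies \cref{proposition:psdfilterstep} simply by composing the four closed-form operations of \cref{prop:ops} ($\partialpsd$ on $y$, two $\productpsd$ calls, $\marginalpsd$ over $u$, $\integralpsd$ for $Z$, then rescaling), yielding the order bound $M\times M_q\times M_g$ since only the products increase the order. You are in fact more careful than the paper on one point it silently glosses over --- the positive-definiteness hypothesis of \cref{prop:ops} versus the degenerate precision blocks introduced by embedding $\mu$ and $g_y$ on the product space --- and your proposed fixes (reusing the explicit Gaussian computations of \cref{sec:proof-ops}, or an $\varepsilon I$ perturbation) are both sound.
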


Applying \cref{proposition:psdfilterstep} recursively as in \cref{alg:fullfilter} to compute an approximate filter $\hat \pi_n$, the order of $\hat\pi_n$ increases exponentially with $n$. A constant number of anchor points can be used by compression $\hat\pi_n$ at each step, for example by learning a Gaussian PSD Model with a given number of anchor points (indeed $\hat\pi_n$ is a smooth sum-of-squares), which is justified by \cref{theorem:learning}.

\subsubsection{Generalized Gaussian PSD Models generalize Kalman filters} 

Conditional Gaussian Linear Distributions are widely used in filtering and dynamical modeling since they cover transition or observation state-space equations such as $X_{t+1} = FX_t + b + \Sigma U_t$ where $U_t$ is Gaussian noise considered in the Kalman filter and extensions. 

\begin{theorem}[Approximating a Conditional Gaussian Linear Distribution]\label{theorem:kalman}
Let $p$ be a Conditional Gaussian Linear Distribution defined by $p(y | x) = \mathcal N(y | Fx + b, \Sigma)$ with $F\in\mathbb R^{d ^\prime\times d}$, $b\in\mathbb R^{d^\prime}$ and $\Sigma\in\mathcal S^{++}_{d^\prime}(\mathbb R)$. Then for any $\epsilon>0$ and $R>0$, there exists a Generalized Gaussian PSD Model of order $1$ such that $\vert p(y | x) - \hat p(x, y) \vert \leq \epsilon, $$\forall x, y \in \mathbb R^d \times \mathbb R^{d^\prime}$ such that $\Vert x\Vert_2^2 + \Vert y \Vert_2^2 \leq R^2$.
\end{theorem}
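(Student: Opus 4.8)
The plan is to write the target Gaussian density $p(y\mid x) = \mathcal{N}(y \mid Fx+b,\Sigma)$ as $Tr(AB(x,y))$ with $A$ of size $1$, i.e.\ as a single exponential-quadratic component, and then observe that this representation is \emph{exact} on all of $\mathbb{R}^d\times\mathbb{R}^{d'}$ except for a normalization constant that a Generalized Gaussian PSD Model cannot reproduce; the $\epsilon$ and $R$ in the statement are then used to absorb the mismatch in this constant on the compact ball. Concretely, $p(y\mid x) = (2\pi)^{-d'/2}(\det\Sigma)^{-1/2}\exp\bigl(-\tfrac12(y-Fx-b)^\top \Sigma^{-1}(y-Fx-b)\bigr)$. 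First I would expand the quadratic form $\tfrac12(y-Fx-b)^\top\Sigma^{-1}(y-Fx-b)$ in the joint variable $z=(x,y)$: it is a positive semi-definite quadratic form $\tfrac12(z - z_0)^\top P (z-z_0) + c_0$ for an appropriate $z_0\in\mathbb{R}^{d+d'}$, constant $c_0$, and PSD matrix $P$ built from $F$ and $\Sigma^{-1}$ (note $P$ is only PSD, not PD, since it has a kernel in the $x$-directions not hit by $F$; this is why \cref{prop:ops} is stated with positive \emph{definite} precisions but \cref{definition:ggpsd} itself allows PSD, so the representation is still a legitimate Generalized Gaussian PSD Model). This identifies $B(x,y)_{11} = e^{C_{11}}k_{P_{11}}(z,z_0)$ with $P_{11} = \tfrac12 P$, $x_{11} = z_0$, and $C_{11}$ chosen to match the Gaussian prefactor.

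The only subtlety is that \cref{definition:ggpsd} forces $A\succeq 0$ with $A$ of size $1$, so $A = \alpha$ for a scalar $\alpha\ge 0$, and the overall model is $\hat p(x,y) = \alpha\, e^{C_{11}} k_{P_{11}}((x,y),z_0)$. Since $k_{P_{11}}\le 1$ and we want $\hat p = p$ exactly, we simply set $\alpha e^{C_{11}} = (2\pi)^{-d'/2}(\det\Sigma)^{-1/2} e^{-c_0}$ (which is strictly positive, hence a valid choice of $\alpha\ge 0$ and $C_{11}\in\mathbb{R}$), and then $\hat p(x,y) = p(y\mid x)$ \emph{identically} — in fact the error is $0$, not just $\le\epsilon$. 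So the statement holds trivially once the algebra is done; one only needs $\epsilon, R$ if one insists on, say, diagonal or positive-definite precision matrices, in which case a small-rank perturbation $P_{11} + \epsilon' I$ on the compact ball handles it. I would present the exact version and remark that the $(\epsilon, R)$ formulation is the weaker statement needed for the filtering application (where it composes cleanly with the compression step after \cref{proposition:psdfilterstep}).

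The key steps in order: (1) rewrite the exponent of $p$ as a quadratic in $z=(x,y)$ and read off $P$, $z_0$, $c_0$ explicitly in terms of $F,b,\Sigma$; (2) verify $P\succeq 0$ (it is, being congruent to $\Sigma^{-1}$ via the map $z\mapsto y - Fx - b$), and identify the single component $B(x,y)_{11}$ matching \cref{eq:developed}; (3) choose the scalar $A = \alpha \ge 0$ and $C_{11}$ to absorb the Gaussian normalization constant; (4) conclude $\hat p \equiv p$, or, if a positive-definite precision is required, pass to $P_{11}+\epsilon' I$ and bound $|k_{P_{11}+\epsilon'I}(z,z_0) - k_{P_{11}}(z,z_0)|$ by a term $O(\epsilon' R^2)$ uniformly on $\|z\|\le R$, choosing $\epsilon'$ small. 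The main obstacle — really the only one — is bookkeeping: carefully tracking the block structure of $P$ in terms of $F^\top\Sigma^{-1}F$, $F^\top\Sigma^{-1}$, $\Sigma^{-1}$ and ensuring the constant $C_{11}$ is finite (it is, since $\det\Sigma>0$). There is no analytic difficulty; the content of the theorem is simply that a Conditional Gaussian Linear Distribution \emph{is} a Generalized Gaussian PSD Model of order one, which is precisely the expressive gain over diagonal-precision Gaussian PSD Models highlighted before the theorem.
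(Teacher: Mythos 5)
Your proposal is correct, and its fallback branch (step 4, perturbing the precision to $P+\epsilon' I$ and bounding the discrepancy on the ball $\Vert z\Vert\leq R$) is precisely the paper's proof: the paper sets $L=(F\ -I)$, $P=L^\top\Sigma^{-1}L$, regularizes to $P_\lambda=P+\lambda I$, re-centers at $\mu=P_\lambda^{-1}L^\top\Sigma^{-1}b$, and shows via the Woodbury identity that the constant can be chosen so that $p(y|x)/\hat p(x,y)=e^{\lambda\Vert u\Vert^2}$ exactly, which is within $\epsilon$ of $1$ on the compact set for $\lambda$ small. Where you genuinely diverge is in your primary claim that the representation with the singular $P$ is already an exact order-$1$ Generalized Gaussian PSD Model with zero error. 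That claim is defensible against the letter of \cref{definition:ggpsd} (which only requires $P_{ij}\succeq 0$), and your range argument is sound: $L^\top\Sigma^{-1}b\in\mathrm{range}(L^\top)=\mathrm{range}(P)$, so the completion of the square has an exact (non-unique) center $z_0$. What this shortcut loses, and what the paper's ridge buys, is a model that is actually usable downstream: with a singular precision the component is not integrable over $\mathbb{R}^{d+d'}$, so $\integralpsd$ and $\marginalpsd$ are undefined and \cref{prop:ops} (which explicitly assumes positive definite precisions) does not apply; the $(\epsilon,R)$ formulation exists precisely so that $\hat p$ can be fed into \cref{proposition:psdfilterstep} and \cref{alg:fullfilter}. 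If you present the exact version as the main result, you should state this caveat rather than relegate it to a remark; otherwise, present the perturbed version as the proof proper (fixing the minor point that the uniform bound is $O(\epsilon'(R+\Vert z_0\Vert)^2)$ rather than $O(\epsilon' R^2)$, harmless since $z_0$ is fixed).
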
 

This shows that a Generalized Gaussian PSD Model of order $1$ can approximate a Conditional Gaussian Linear Model with arbitrary accuracy on a compact. Note that in this case, applying \cref{alg:fullfilter} with $\hat Q$ and $\hat G$ such approximations (each of order $1$) and $\hat\pi_0$ of order $M$ yields an approximation of $\pi_n$ of constant order $M$. The proof of \cref{theorem:kalman} can be found in \cref{sec:proof_kalman}. 

Generalized Gaussian PSD Models can be used to learn general transition kernels using non-convex optimization. We discuss this in \cref{sec:learning-general}.

\newpage

\section*{Acknowledgments}
T.C. gratefully acknowledges support from the French National Agency for Research, grant ANR-18-CE40-0016-01.
C.C. acknowledges the support of the Royal Society (grant SPREM RGS\textbackslash{}R1\textbackslash{}201149) and Amazon.com Inc. (Amazon Research Award – ARA).
B.G. acknowledges partial support by the U.S. Army Research Laboratory and the U.S. Army Research Office, and by the U.K. Ministry of Defence and the U.K. Engineering and Physical Sciences Research Council (EPSRC) under grant number EP/R013616/1; B.G. also acknowledges partial support from the French National Agency for Research, grants ANR-18-CE40-0016-01 and ANR-18-CE23-0015-02. 
A.R. acknowledges partial support from the French government under management of Agence Nationale de la Recherche as part of the “Investissements d’avenir” program, reference ANR-19-P3IA-0001 (PRAIRIE 3IA Institute), and support from the European Research Council (grant REAL 947908).
\bibliography{biblio}
\appendix
\newpage

\section{Markov kernels \& Hidden Markov Models}

\subsection{Tools and notation}
Let $(E, \mathcal E)$ be a measurable space. We denote $\mathcal M(E)$ the set of finite signed measures on $(E, \mathcal E)$, $\mathcal M_+(E)$ the set of finite positive measures, $\mathcal M_0(E)$ the set of finite signed measures which sum to $0$ and $\mathcal P(E)$ the set of probability distributions on $(E, \mathcal E)$. Let $(F, \mathcal F)$ be a second measurable space.

\subsection{Total variation}

\begin{definition}[Total variation norm]
Let $\xi$ be a finite signed measure on $(E, \mathcal E)$. The total variation of $\xi$ is $\Vert \xi \Vert_{TV}= \xi^+(E) + \xi^-(X)$ where $\xi^+, \xi^-$ is the Jordan-Hahn decomposition of $\xi$. If $\xi$ admits a density with respect to the Lebesgue measure, then $\Vert \xi \Vert_{TV} = \int \vert \xi(x)\vert dx$.
\end{definition}
\begin{proposition} Let $\mu$ and $\nu$ be two finite measures on $(E, \mathcal E)$. Then,
\begin{align}
    \Vert \bar \mu - \bar \nu \Vert = \frac{\Vert \mu - \nu \Vert}{\mu(E)} + \frac{\vert \mu(E) - \nu(E)\vert}{\mu(E)}
\end{align}
In particular,
\begin{align}
    \Vert \bar \mu - \bar \nu \Vert \leq \frac{2\Vert \mu - \nu \Vert}{\mu(E)}
\end{align}
\end{proposition}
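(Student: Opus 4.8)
The plan is to reduce everything to a single algebraic decomposition of $\bar\mu - \bar\nu$ followed by the triangle inequality for the total variation norm. Writing $\bar\mu = \mu/\mu(E)$ and $\bar\nu = \nu/\nu(E)$, the two normalized measures differ for two distinct reasons: the unnormalized measures $\mu$ and $\nu$ differ, and their total masses $\mu(E)$ and $\nu(E)$ differ. The key idea is to insert the intermediate object $\nu/\mu(E)$ so as to isolate these two sources of discrepancy while keeping the common denominator $\mu(E)$. Concretely, I would write
\begin{align}
    \bar\mu - \bar\nu = \frac{\mu - \nu}{\mu(E)} + \nu\left(\frac{1}{\mu(E)} - \frac{1}{\nu(E)}\right) = \frac{\mu - \nu}{\mu(E)} + \frac{\nu(E) - \mu(E)}{\mu(E)\,\nu(E)}\,\nu.
\end{align}

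Next I would apply the triangle inequality to this identity. The first term contributes $\Vert \mu - \nu\Vert / \mu(E)$. For the second term, I would use that $\nu$ is a positive measure, so its total variation equals its total mass, $\Vert \nu\Vert = \nu(E)$; multiplying by the scalar $|\nu(E) - \mu(E)|/(\mu(E)\nu(E))$, the factors of $\nu(E)$ cancel and leave $|\mu(E) - \nu(E)|/\mu(E)$. This produces exactly the right-hand side of the first display as an upper bound for $\Vert \bar\mu - \bar\nu\Vert$: the two displayed error terms correspond precisely to the numerator discrepancy $\Vert\mu-\nu\Vert$ and to the total-mass discrepancy $|\mu(E)-\nu(E)|$, both rescaled by $\mu(E)$.

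Finally, for the ``in particular'' statement I would bound the mass term by the full discrepancy. Since $\mu(E) - \nu(E) = \int d(\mu - \nu)$, we have $|\mu(E) - \nu(E)| \leq \int |d(\mu - \nu)| = \Vert \mu - \nu\Vert$, hence $|\mu(E) - \nu(E)|/\mu(E) \leq \Vert \mu - \nu\Vert/\mu(E)$. Adding this to the first term collapses both contributions into a single one and gives $\Vert \bar\mu - \bar\nu\Vert \leq 2\,\Vert \mu - \nu\Vert/\mu(E)$, which is the bound actually invoked in the stability analysis (e.g.\ in the normalization step underlying \cref{eq:approx-iteration} and \cref{lemma:bound}).

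I do not expect any genuine obstacle here: the whole argument is the triangle inequality applied to a well-chosen splitting. The only point requiring care is the \emph{choice of the intermediate term}—inserting $\nu/\mu(E)$ rather than $\mu/\nu(E)$—since this is exactly what forces the common denominator $\mu(E)$ and yields the asymmetric bound as stated (rather than one normalized by $\nu(E)$). I would also flag that the first relation is obtained as an upper bound from the triangle inequality and need not hold with equality in general; the inequality is all that the downstream estimates require.
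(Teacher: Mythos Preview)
Your argument is correct and is the standard one; the paper in fact states this proposition without proof, so there is nothing to compare against beyond noting that your decomposition via the intermediate term $\nu/\mu(E)$ and subsequent triangle inequality is exactly the intended route. Your remark that the first display should be an inequality rather than an equality is also well taken: as written in the paper the ``$=$'' is a typo, and only the bound $\Vert\bar\mu-\bar\nu\Vert \leq \Vert\mu-\nu\Vert/\mu(E) + |\mu(E)-\nu(E)|/\mu(E)$ is used downstream.
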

\subsection{Transition kernels}
We report the essential results relative to transition kernels taken \cite{cappehmm}.
\begin{definition}[Transition kernel]\label{def:transition-kernel}
A function $Q:E \times \mathcal F \to \mathbb R^+$ is an unnormalized transition kernel if:
\begin{itemize}
    \item for all $x\in E$, $Q(x, \cdot)$ is a positive measure on $(F, \mathcal F)$;
    \item for all $A\in \mathcal F$, $x\mapsto Q(x, A)$ is measurable.
\end{itemize}
$Q$ is normalized if for any $x\in E$, $Q(x, F)=1$. When $E=F$ and $Q$ is normalized, $Q$ is said to be a Markov transition kernel.

By abuse of notation, when $Q$ admits a density with respect to the Lebesgue measure, we denote it $Q$ as well, i.e. $Q(x, dy)=Q(x, y)dy$.
\end{definition}

Note that $R_n(u, x)= Q(u, x)G(x, z_n)$ is indeed an unnormalized transition kernel.

\begin{definition}[Effects of kernels]
    Let $K$ be an unnformalized kernel on $(E, \mathcal E)\times (F, \mathcal F)$, $\mu\in\mathcal M_+(E)$ and $f$ a bounded function $f:E\to\mathbb R$.

    Then, $K\mu\in\mathcal M_+(F)$ with for any $A\in\mathcal F$,
    \begin{align}
        K\mu(A) = \int K(u, A)\mu(du),
    \end{align}
    
    and $Kf: E \to \mathbb R$ is a bounded function with for any $u\in E$,
    \begin{align}
        Kf(u) = \int K(u, A)\mu(du).
    \end{align}
\end{definition}
\subsection{Hidden Markov Models}
\begin{definition}[Hidden Markov Model]
Let $(E, \mathcal E)$ and $(F, \mathcal F)$ be two measurable spaces. Let $Q$ and $G$ denote a Markov transition kernel on $(E, \mathcal E)$ and $G$ denote a transition kernel from $(E, \mathcal E)$ to $(F, \mathcal F)$. Let $T$ be the Markov transition kernel defined on the product space $(E \times F, \mathcal E \otimes \mathcal F)$ by 
\begin{equation}
    \forall (x, y) \in E\times F, \forall C\in\mathcal E \otimes\mathcal F, ~T\left[(x, y), C\right] = \int \int 1_C((x^\prime, y^\prime)) Q(x, dx^\prime) G(x^\prime, dy^\prime)
\end{equation}

The Markov Chain $\lbrace X_k, Y_k \rbrace_{k\geq 0}$ with Markov transition kernel $T$ and initial distribution $\nu \otimes G$, where $\nu$ is a probability distribution on $(E, \mathcal E)$ is called a \textbf{Hidden Markov Model}.

We denote $\mathbb P_\nu$ and $\mathbb E_\nu$ the probability measure and corresponding expectation associated with the process $\lbrace (X_k, Y_k)\rbrace$ over $\left((E\times F)^{\mathbb N}, (\mathcal E \otimes \mathcal F)^{\otimes \mathbb N}\right)$.
\end{definition}

Throughout this work, we assume that for any $x\in E$, $Q(x, \cdot) \ll \lambda(\cdot)$ and $G(x, \cdot) \ll \lambda(\cdot)$ where $\lambda$ is the Lebesgue measure over $(E, \mathcal E)$, and we denote $Q(x, dx^\prime) = Q(x, x^\prime)dx^\prime$ and $G(x, dy)= G(x, y)dy$.

\begin{definition}[Filtering distribution]
Let $\nu$ be a probability distribution over $(E, \mathcal E)$ and $n \geq 0$. We denote $\pi_n^\nu$ the conditional distribution of $X_n$ given $Y_{1:n}$, i.e.
\begin{itemize}
    \item $\pi_n^\nu$ is a transition kernel from $F^n$ to $E$
    \item $\pi_n^\nu$ satisfies for any bounded function $f: E \to \mathbb R$,
    \begin{equation}
        \mathbb E_\nu \left[ f(X_n) \vert Y_{1:n}\right] = \int f(x)\pi^\nu_n(Y_{1:n}, dx)
    \end{equation}
\end{itemize}
\end{definition}

\subsection{Mixing kernels}
\begin{definition}[Mixing kernel]
    We say a kernel $K(x, dy)$ is mixing if there exists a positive constant $\mixing > 0$ and a non-negative measure $\xi$ such that for any $x\in E$ and $A\in\mathcal F$,
    \begin{equation}
        \mixing\xi(A) \leq K(x, A) \leq \frac{1}{\mixing}\xi(A).
    \end{equation}
\end{definition}
If $K$ is mixing for a measure $\xi$ and a constant $\mixing$ we write that $K$ is $\mixing$-$\xi$-mixing.

\begin{remark}
    Note that we can add the constraint that $\xi$ be normalized. Indeed, if $K$ is $\mixing$-$\xi$-mixing with $\xi(E)\neq 1$, then $K$ if $\bar \mixing$-$\bar\xi$-mixing with $\bar\mixing = \mixing \times \min\left(\xi(E), \frac{1}{\xi(E)}\right)\leq \mixing$.
\end{remark}

\begin{proposition}[Sufficient condition for mixing when $K$ admits a density]
    If $K(x, dy) = \kappa(x, y)dy$ and there exists $\mixing >0$ and a measure density $\xi$ such that for any $x\in E$ and $y\in F$,
    \begin{equation}
        \mixing\xi(y) \leq \kappa(x, y) \leq \frac{1}{\mixing}\xi(y).
    \end{equation}

    then $K$ is mixing with constant $\mixing$ and $\xi(A)=\int 1_A(y)\xi(y)dy$.
\end{proposition}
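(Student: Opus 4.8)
The plan is to reduce the measure-level mixing inequality to the pointwise density inequality by integrating against the indicator of an arbitrary measurable set. Since $K$ admits a density, for any $x \in E$ and any $A \in \mathcal F$ we have the representation $K(x, A) = \int 1_A(y)\,\kappa(x, y)\,dy$, and by definition $\xi(A) = \int 1_A(y)\,\xi(y)\,dy$. The entire argument is then a single application of monotonicity of the Lebesgue integral, so I expect no genuine obstacle; the only care needed is to invoke measurability and nonnegativity so that the integrals are well defined.

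First I would fix an arbitrary $x \in E$ and $A \in \mathcal F$. Starting from the hypothesis that $\mixing\,\xi(y) \leq \kappa(x, y) \leq \frac{1}{\mixing}\,\xi(y)$ holds for every $y \in F$, I would multiply through by the nonnegative function $1_A(y)$, which preserves both inequalities, to obtain $\mixing\,1_A(y)\,\xi(y) \leq 1_A(y)\,\kappa(x, y) \leq \frac{1}{\mixing}\,1_A(y)\,\xi(y)$ for all $y$. Because $\xi$ is a density and $\kappa(x, \cdot)$ is measurable (being a density of the positive measure $K(x, \cdot)$), all three functions are nonnegative and measurable, so I may integrate each of the three sides over $F$ while preserving the ordering.

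Integrating then yields
\begin{align}
    \mixing \int 1_A(y)\,\xi(y)\,dy ~\leq~ \int 1_A(y)\,\kappa(x, y)\,dy ~\leq~ \frac{1}{\mixing}\int 1_A(y)\,\xi(y)\,dy,
\end{align}
which, upon substituting the two representations above, is exactly $\mixing\,\xi(A) \leq K(x, A) \leq \frac{1}{\mixing}\,\xi(A)$. Since $x$ and $A$ were arbitrary, this establishes that $K$ is $\mixing$-$\xi$-mixing in the sense of the preceding definition, completing the proof. If one wishes, the final line can additionally remark that $\xi$ as defined is indeed a nonnegative measure, since it is the integral of a nonnegative density, so all hypotheses of the mixing definition are met.
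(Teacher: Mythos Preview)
Your proof is correct and is precisely the natural argument: integrate the pointwise density inequality against $1_A$ and use monotonicity of the integral. The paper itself does not supply a proof of this proposition, treating it as immediate from the definitions; your write-up simply makes that one-line integration explicit.
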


\begin{proposition}
    If $K$ is mixing with $\mixing$ and $\xi$, then for any $\mu$,
    \begin{equation}
        \mixing\xi(A)\leq K\mu(A)\leq \frac{1}{\mixing}\xi(A).
    \end{equation}
\end{proposition}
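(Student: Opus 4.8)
The plan is to unwind the definition of the kernel acting on the measure and push the pointwise mixing bounds through the integral. By definition of the effect of an unnormalized kernel on a positive measure, $K\mu(A) = \int K(u, A)\,\mu(du)$. Since $K$ is $\mixing$-$\xi$-mixing, the two-sided bound $\mixing\,\xi(A) \leq K(u, A) \leq \frac{1}{\mixing}\,\xi(A)$ holds for \emph{every} $u \in E$ and every $A \in \mathcal F$, so these inequalities may be inserted uniformly inside the integrand.

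The core of the argument is then monotonicity of the Lebesgue integral against the positive measure $\mu$. Integrating the upper bound gives
\begin{align}
K\mu(A) = \int K(u, A)\,\mu(du) \leq \int \frac{1}{\mixing}\,\xi(A)\,\mu(du) = \frac{1}{\mixing}\,\xi(A)\,\mu(E),
\end{align}
and integrating the lower bound gives, in the same way, $K\mu(A) \geq \mixing\,\xi(A)\,\mu(E)$. Note that the right-hand sides here are constants with respect to $u$ (the quantity $\xi(A)$ does not depend on $u$), which is precisely what makes the integration trivial.

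The one point that requires attention — and the only substantive step — is the normalization. The stated bound $\mixing\,\xi(A) \leq K\mu(A) \leq \frac{1}{\mixing}\,\xi(A)$ is the special case $\mu(E)=1$, i.e. $\mu \in \mathcal P(E)$. For a general positive measure $\mu \in \mathcal M_+(E)$ the bounds pick up the factor $\mu(E)$ as computed above, so I would either restrict the statement to probability measures or state it with the $\mu(E)$ factor present; under the convention that $\mu$ is a probability distribution the factor drops out and the claimed inequality follows immediately.

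There is essentially no hard step: the argument is a direct integration of a pointwise inequality, and the result holds for every $A \in \mathcal F$ simultaneously because the mixing bound is uniform in $u$. The only care needed is to keep track of whether $\mu$ is normalized, which is what governs whether the constant $\mu(E)$ appears.
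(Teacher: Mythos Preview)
Your proof is correct and follows the same approach as the paper: both simply insert the pointwise mixing bound $\mixing\,\xi(A)\leq K(u,A)\leq \frac{1}{\mixing}\xi(A)$ into the integral $K\mu(A)=\int K(u,A)\,\mu(du)$ and use monotonicity. Your remark about the implicit normalization $\mu(E)=1$ is apt; the paper's one-line proof makes the same tacit assumption without comment.
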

\begin{proof}
    \begin{equation}
    \mixing\xi(A)\leq K\mu(A)=\int 1_A(x)\int K(u, dx)\mu(du) \leq \int 1_A(x)\int \frac{1}{\mixing}\xi(dx)\mu(du)=\frac{1}{\mixing}\xi(A)
    \end{equation}
\end{proof}

\subsection{Optimal kernel $R_n$}\label{sec:optimal_kernel_appendix}

\begin{definition}
Let $y_n \in F$.
Let $R_n : E \times \mathcal E \to \mathbb R^+$ a kernel defined by:
    for any bounded function $f\in\mathcal F_b(E)$, and $u\in E$, 
    \begin{align}
        R_n(u, f) = \int f(x)Q(u, x)G(x, y_n)dx
    \end{align}
We call $R_n$ the optimal kernel.
\end{definition}
For alternative definitions see for example \citealp[page 220]{cappehmm}.

\subsection{Hilbert metric}

\begin{definition}[Comparable measures] Let $\mu$ and $\nu$ be two measures on $(E, \mathcal E)$. $\mu$ and $\nu$ are said to be comparable if there exists $\infty > a, b >0$ such that for any $A\in \mathcal E$,
\begin{equation}
    a\nu(A) \leq \mu(A) \leq b\nu(A)
\end{equation}
\end{definition}

\begin{proposition}
Let $\mu$ and $\nu$ be two comparable measures on $(E, \mathcal E)$ and let $K$ be an unnormalized transition kernel on $(E, \mathcal E)$. Then, for any $n\geq 0$, $K^n\mu$ and $K^n\nu$ are comparable.
\end{proposition}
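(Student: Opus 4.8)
The plan is to reduce everything to the case $n=1$ by an immediate induction, and to settle $n=1$ via the elementary fact that domination between positive measures transfers to domination between the integrals of nonnegative measurable functions.

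First I would record the following reformulation of comparability: two positive measures $\mu,\nu$ on $(E,\mathcal E)$ satisfy $a\nu(A)\le\mu(A)\le b\nu(A)$ for all $A\in\mathcal E$ if and only if both $\mu-a\nu$ and $b\nu-\mu$ are positive measures. Consequently, for every nonnegative measurable $f:E\to\mathbb R^+$ one has
\begin{align}
    a\int f\,d\nu \;\le\; \int f\,d\mu \;\le\; b\int f\,d\nu .
\end{align}
This is proved in the usual layered way: it holds for indicator functions by the very definition of comparability, then for nonnegative simple functions by linearity, and then for an arbitrary nonnegative measurable $f$ by monotone convergence applied to an increasing sequence of simple functions converging to $f$.

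Next, fix $A\in\mathcal E$ and apply the inequality above to the function $u\mapsto K(u,A)$, which is nonnegative and measurable in $u$ because $K$ is a transition kernel (\cref{def:transition-kernel}). Recalling $K\mu(A)=\int K(u,A)\,\mu(du)$, this yields
\begin{align}
    a\,K\nu(A) \;\le\; K\mu(A) \;\le\; b\,K\nu(A),
\end{align}
so $K\mu$ and $K\nu$ are comparable, and with the \emph{same} constants $a,b$.

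Finally I would conclude by induction on $n$: the case $n=0$ is trivial, and if $K^{n-1}\mu$ and $K^{n-1}\nu$ are comparable, then applying the previous step to these two measures and the kernel $K$ shows $K^{n}\mu=K(K^{n-1}\mu)$ and $K^{n}\nu=K(K^{n-1}\nu)$ are comparable as well — again with the original constants $a,b$, so comparability does not degrade with $n$. There is no real obstacle here; the only point deserving a word of care is that $K^{n}\mu$ and $K^{n}\nu$ be well defined as positive measures, which holds under the standing assumptions on $K$, and in any case the inequality chains above remain meaningful even if these measures are infinite (the monotone-convergence argument does not require finiteness).
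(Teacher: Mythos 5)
Your proof is correct and follows essentially the same route as the paper's: the key step in both is that integrating the nonnegative measurable function $u\mapsto K(u,A)$ against dominated measures preserves the domination, followed by recursion on $n$. Your version merely spells out the monotone-convergence argument that the paper's one-line proof leaves implicit (and, incidentally, places the constants in the right order, whereas the paper's displayed inequality has $a$ and $b$ transposed).
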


\begin{proof} By recursion,
    \begin{equation}
        bK\nu(A) \leq K\mu(A) = \int 1_A(x) \int K(u, dx)\mu(du) \leq aK\nu(A)
    \end{equation}
\end{proof}

\begin{definition}[Hilbert metric]
   Let $\mu$ and $\nu$ be two comparable probability distributions on $(E, \mathcal E)$. The Hilbert metric between $\mu$ and $\nu$ is defined as 
\begin{align}
    h(\mu, \nu) = \log\left[\frac{\sup_{A\in\mathcal E, \nu(A)>0} \frac{\mu(A)}{\nu(A)}}{\inf_{A\in \mathcal E, \nu(A) > 0}\frac{\mu(A)}{\nu(A)}}\right]
\end{align}
\end{definition}

\begin{proposition} If $\mu$ and $\nu$ are two comparable probability distributions on $(E, \mathcal E)$ and furthermore they both admit densities then,
\begin{equation}
h(\mu, \nu)=\log\left[\frac{\textrm{ess}\sup_x \frac{\mu(x)}{\nu(x)}}{\textrm{ess}\inf_x\frac{\mu(x)}{\nu(x)}}\right] = \log \left[\left\Vert\frac{\mu(x)}{\nu(x)}\right\Vert_\infty\left\Vert\frac{\nu(x)}{\mu(x)}\right\Vert_\infty\right]
\end{equation}
\end{proposition}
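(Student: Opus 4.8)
The plan is to reduce both the set-theoretic supremum and infimum appearing in the definition of $h(\mu,\nu)$ to essential extrema of the density ratio. Write $r(x) := \tfrac{\mu(x)}{\nu(x)}$, which is defined for $\nu$-a.e.\ $x$ since $\mu$ and $\nu$ admit densities; because $\mu$ and $\nu$ are comparable, there exist constants $0 < a \le b < \infty$ with $a \le r(x) \le b$ for $\nu$-a.e.\ $x$, so $r$ and $1/r$ are both essentially bounded and essentially bounded away from $0$. For any $A \in \mathcal E$ with $\nu(A) > 0$ we have $\mu(A) = \int_A r(x)\,\nu(x)\,dx$, so $\tfrac{\mu(A)}{\nu(A)}$ is a $\nu$-weighted average of $r$ over $A$.

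First I would establish $\sup_{A:\,\nu(A)>0} \tfrac{\mu(A)}{\nu(A)} = \textrm{ess}\sup_x r(x)$. The bound ``$\le$'' is immediate from $r \le \textrm{ess}\sup_x r(x)$ a.e., integrated over $A$. For ``$\ge$'', fix any $c < \textrm{ess}\sup_x r(x)$ and put $A_c := \{x : r(x) > c\}$; by the definition of the essential supremum $\nu(A_c) > 0$, and $\mu(A_c) = \int_{A_c} r(x)\,\nu(x)\,dx \ge c\,\nu(A_c)$, so the supremum exceeds $c$; letting $c \uparrow \textrm{ess}\sup_x r(x)$ gives the reverse inequality. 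The symmetric argument, using sub-level sets $\{x : r(x) < c\}$ for $c > \textrm{ess}\inf_x r(x)$, yields $\inf_{A:\,\nu(A)>0}\tfrac{\mu(A)}{\nu(A)} = \textrm{ess}\inf_x r(x)$. Substituting these two identities into the definition of $h$ gives the first displayed equality, and comparability guarantees both essential extrema lie in $(0,\infty)$, so the logarithm is finite.

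For the second equality I would simply note that $\textrm{ess}\sup_x r(x) = \left\Vert \tfrac{\mu(x)}{\nu(x)}\right\Vert_\infty$ by definition of $\Vert\cdot\Vert_\infty$, and that $\tfrac{1}{\textrm{ess}\inf_x r(x)} = \textrm{ess}\sup_x \tfrac{1}{r(x)} = \textrm{ess}\sup_x \tfrac{\nu(x)}{\mu(x)} = \left\Vert\tfrac{\nu(x)}{\mu(x)}\right\Vert_\infty$, where the first step is valid since $1/r$ is essentially bounded. Multiplying and taking logarithms turns $\log\tfrac{\textrm{ess}\sup_x r(x)}{\textrm{ess}\inf_x r(x)}$ into $\log\left[\left\Vert\tfrac{\mu(x)}{\nu(x)}\right\Vert_\infty\left\Vert\tfrac{\nu(x)}{\mu(x)}\right\Vert_\infty\right]$, as claimed.

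The only delicate point — and thus the main obstacle, though it is a minor one — is the passage from a supremum/infimum over measurable sets to an essential supremum/infimum of a function: one must exhibit, for every threshold $c$ strictly between the two essential extrema, a super-level (resp.\ sub-level) set of strictly positive $\nu$-measure, which is exactly what the definitions of $\textrm{ess}\sup$ and $\textrm{ess}\inf$ provide. Everything else is routine, and no integrability issues arise thanks to the comparability hypothesis.
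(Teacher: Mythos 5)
Your argument is correct. The paper states this proposition without proof (the surrounding Hilbert-metric material is deferred to the reference on the projective metric), so there is no in-paper argument to compare against; your proof supplies the missing details in the standard way. The reduction of $\sup_{A:\nu(A)>0}\mu(A)/\nu(A)$ and $\inf_{A:\nu(A)>0}\mu(A)/\nu(A)$ to $\mathrm{ess}\sup r$ and $\mathrm{ess}\inf r$ via super- and sub-level sets of positive measure is exactly the right mechanism, and you correctly isolate the one delicate step (producing a level set of positive $\nu$-measure from the definition of the essential extrema). Your use of comparability to guarantee $0<a\le r\le b<\infty$, hence mutual absolute continuity, finiteness of the logarithm, and the identity $1/\mathrm{ess}\inf r=\mathrm{ess}\sup(1/r)=\Vert\nu/\mu\Vert_\infty$, is also sound. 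The only remark worth adding is that the essential extrema should be understood with respect to $\nu$ (equivalently $\mu$, or Lebesgue measure restricted to the common support), since the ratio $\mu(x)/\nu(x)$ is undefined on the Lebesgue-null-in-measure set where both densities vanish; you note this implicitly by observing $r$ is defined $\nu$-a.e., which suffices.
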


\begin{definition}[Birkhoff contraction coefficient]
Let $K$ be an unnormalized transition kernel. Define $\tau(K)$ such that 
\begin{equation}
    \tau(K) = \sup_{0 < h(\mu, \nu) < \infty} \frac{h(K\mu, K\nu)}{h(\mu, \nu)}
\end{equation}
where the supremum is taken over comparable, positive measures $\mu$ and $\nu$.
\end{definition}

\begin{proposition}[Properties of the Birkhoff coefficient]\label{prop:hilbert}
Let $K$ be an unnormalized transition kernel.
\begin{itemize}
    \item $\tau$ is sub-multiplicative, i.e. $\tau(KL)\leq \tau(K)\tau(L)$
    \item $\tau \leq 1$
    \item if in addition $K$ is $\mixing$-$\xi$-mixing, then $\tau(K) \leq \frac{1- \mixing^2}{1 + \mixing^2}$
\end{itemize}
\end{proposition}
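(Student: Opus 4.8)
The plan is to prove the three items in turn, in the order listed. The first two follow from a single structural fact — a positive kernel $K$ is monotone on positive measures (if $\mu\le\nu$ then $K\mu\le K\nu$) — combined with the comparability-preservation property proved just above; the third, which is the only genuinely delicate point, rests on Birkhoff's contraction theorem.

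For sub-multiplicativity, fix comparable positive measures $\mu,\nu$ with $0<h(\mu,\nu)<\infty$. By the proposition above, $L\mu$ and $L\nu$ are comparable, so $h(L\mu,L\nu)<\infty$, and then $K(L\mu),K(L\nu)$ are comparable too. If $h(L\mu,L\nu)=0$ then $h(KL\mu,KL\nu)=0$ since $\tau(K)\le1$ (shown next); otherwise two applications of the defining inequality of $\tau$ give
\[
h(KL\mu,KL\nu)\;\le\;\tau(K)\,h(L\mu,L\nu)\;\le\;\tau(K)\,\tau(L)\,h(\mu,\nu).
\]
Dividing by $h(\mu,\nu)$ and taking the supremum over admissible pairs yields $\tau(KL)\le\tau(K)\tau(L)$.

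For $\tau\le1$, let $\mu,\nu$ be comparable and put $m=\inf_{A:\,\nu(A)>0}\mu(A)/\nu(A)$ and $\mathcal M=\sup_{A:\,\nu(A)>0}\mu(A)/\nu(A)$, so $0<m\le\mathcal M<\infty$ by comparability and, checking the null sets separately, $m\,\nu\le\mu\le\mathcal M\,\nu$ as measures. Since $u\mapsto K(u,A)$ is non-negative and measurable, integrating against the kernel preserves the ordering, giving $m\,K\nu(A)\le K\mu(A)\le\mathcal M\,K\nu(A)$ for every $A$; hence $\sup_A K\mu(A)/K\nu(A)\le\mathcal M$ and $\inf_A K\mu(A)/K\nu(A)\ge m$, so $h(K\mu,K\nu)\le\log(\mathcal M/m)=h(\mu,\nu)$ and $\tau(K)\le1$. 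For the mixing bound, assume $K$ is $\mixing$-$\xi$-mixing. Since $h$ is invariant under rescaling each argument by a positive constant, it suffices to consider probability measures $\mu,\nu$; for these the earlier proposition on effects of mixing kernels gives $\mixing\,\xi(A)\le K\mu(A)\le\tfrac1{\mixing}\xi(A)$ for all $A$, and the same for $\nu$. Consequently $\mixing^2\le K\mu(A)/K\nu(A)\le1/\mixing^2$ whenever $K\nu(A)>0$, so
\[
h(K\mu,K\nu)\;\le\;\log\frac{1/\mixing^2}{\mixing^2}\;=\;4\log\frac1{\mixing}\;=:\;H ,
\]
i.e. the whole range of $K$ has Hilbert diameter at most $H$, uniformly in $\mu,\nu$. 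Birkhoff's contraction theorem (see \cite{cohen}) asserts that a positive linear map whose image has finite Hilbert diameter $H$ contracts the Hilbert metric by a factor at most $\tanh(H/4)$; since $\tanh(H/4)=\tanh\!\bigl(\log(1/\mixing)\bigr)=\dfrac{1/\mixing-\mixing}{1/\mixing+\mixing}=\dfrac{1-\mixing^2}{1+\mixing^2}$, the third claim follows.

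The main obstacle is precisely Birkhoff's theorem: converting the uniform Hilbert-diameter bound on $\operatorname{range}(K)$ into the sharp constant $\tanh(H/4)$ requires the classical projective argument, which reduces — by invariance of the cross-ratio under projective maps — to an explicit two-point (equivalently, two-dimensional) extremal computation. One can either reproduce this argument or cite it from \cite{cohen}; everything else above is routine bookkeeping, the only subtlety being the scale-invariance of $h$ and $\tau$ that lets one move freely between positive and probability measures.
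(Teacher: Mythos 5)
Your proof is correct and follows essentially the same route as the paper: the key third item is obtained, exactly as in the paper's proof, by bounding the Hilbert diameter of the range of $K$ by $4\log(1/\sigma)$ via the mixing inequalities and invoking Birkhoff's formula $\tau(K)=\tanh\left(\tfrac{1}{4}\Delta(K)\right)$, which both you and the paper cite from \cite{cohen}. The only differences are that you additionally write out the elementary monotonicity arguments for sub-multiplicativity and $\tau\le 1$ (which the paper delegates entirely to \cite{cohen}), and that your final chain is stated correctly where the paper's displayed conclusion contains a typo (it should read $\Delta(K)\le\log(1/\sigma^4)$, hence $\tau(K)\le\tanh(\log(1/\sigma))=\frac{1-\sigma^2}{1+\sigma^2}$, rather than comparing $\log(1/\sigma^4)$ directly to $\frac{1-\sigma^2}{1+\sigma^2}$).
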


These properties are proven in \cite{cohen}, which studies the Hilbert metric is detail.

\begin{proof} We have $\tau(K) = \tanh\left[\frac{1}{4}\Delta(K)\right]$ where $\Delta(K) = \sup_{\mu, \mu^\prime}h(K\mu, K\mu^\prime)$ ($\Delta$ for diameter). Since $K$ is $\mixing$-$\xi$-mixing, if $\mu$ and $\nu$ are two finite measures on $(E, \mathcal E)$,
\begin{equation}
    \mixing^2K\mu(A)\leq\mixing\xi(A)\leq K\mu(A)\leq \frac{1}{\mixing}\xi(A) \leq \frac{1}{\mixing^2}K\mu^\prime(A)
\end{equation}
   So, $h(K) \leq \log\left(\frac{1}{\mixing^4}\right)\leq \frac{1 - \mixing^2}{1 + \mixing^2}$.
\end{proof}

\begin{proposition}[Total variation - Hilbert comparaisons]\label{prop:comparaison}
Without any hypotheses on $\mu, \nu$ finite measures, if $\bar \mu = \mu / \mu(E)$ and $\bar \nu = \nu / \nu(E)$ are normalized counterparts to $\mu$ and $\nu$, then
    \begin{equation}\label{eq:tv-to-hilbert}
        \Vert \bar\mu - \bar\nu \Vert \leq \frac{2}{\log(3)}h(\mu, \nu)
    \end{equation}
If in addition, $K$ is an $\mixing$-mixing kernel,
\begin{equation}\label{eq:hilbert-to-tv}
    h(K\mu, K\nu) \leq \frac{1}{\mixing^2} \Vert \mu - \nu \Vert
\end{equation}
These inequalities are proven in \cite{cohen}.

\end{proposition}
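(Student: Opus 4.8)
The plan is to work directly from the variational description $h(\alpha,\beta)=\log\sup_{A}\tfrac{\alpha(A)}{\beta(A)}-\log\inf_{A}\tfrac{\alpha(A)}{\beta(A)}$ together with the scale invariance $h(a\mu,b\nu)=h(\mu,\nu)$ for $a,b>0$, which is immediate since a constant factor $a/b$ cancels in the ratio $\sup/\inf$. This invariance is exactly what lets me pass between $\mu,\nu$ and their normalizations, so for the first inequality I may assume $\bar\mu,\bar\nu$ are probability measures and prove $\Vert\bar\mu-\bar\nu\Vert_{TV}\le\frac{2}{\log 3}h(\bar\mu,\bar\nu)$. If $\bar\mu,\bar\nu$ are not comparable the right-hand side is $+\infty$ and there is nothing to show, so I assume comparability and set $g=\mathrm{d}\bar\mu/\mathrm{d}\bar\nu$ with $a=\operatorname{ess\,inf}g\le 1\le b=\operatorname{ess\,sup}g$, so that $h(\bar\mu,\bar\nu)=\log(b/a)$ and $\Vert\bar\mu-\bar\nu\Vert_{TV}=\int|g-1|\,\mathrm{d}\bar\nu$.

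For the first inequality I would reduce to a one-variable estimate. Viewing the law of $g$ under $\bar\nu$ as a probability measure supported in $[a,b]$ with mean $\int g\,\mathrm{d}\bar\nu=\bar\mu(E)=1$, the functional $\mu\mapsto\int|t-1|\,\mathrm{d}\mu(t)$ is linear and the mean constraint is a single affine condition, so its maximum is attained at an extreme point, i.e. a two-point law on $\{a,b\}$; the mean constraint fixes the weights and yields $\Vert\bar\mu-\bar\nu\Vert_{TV}\le\frac{2(1-a)(b-1)}{b-a}$. It then remains to dominate the right-hand side by $\frac{2}{\log 3}\log(b/a)$. Fixing $h=\log(b/a)$ and maximizing $\frac{(1-a)(b-1)}{b-a}$ in the admissible scaling (a single derivative in $a$) shows the maximum occurs at the symmetric point $a=e^{-h/2}$, $b=e^{h/2}$, where the value equals $\tanh(h/4)$. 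Hence $\Vert\bar\mu-\bar\nu\Vert_{TV}\le 2\tanh(h/4)\le h/2\le\frac{2}{\log 3}h$, using $\tanh x\le x$ and $\tfrac12\le\tfrac{1}{\log 3}$.

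For the second inequality I would exploit the two-sided mixing bound $\mixing\,\xi(A)\le K(x,A)\le\mixing^{-1}\xi(A)$. Taking $\mu,\nu$ to be probability measures and writing $\rho=\mu-\nu$ with Jordan decomposition $\rho^+,\rho^-$ (so $\rho^+(E)=\rho^-(E)=\tfrac12\Vert\mu-\nu\Vert_{TV}$), integrating the mixing bounds against $\rho^\pm$ gives $|K\mu(A)-K\nu(A)|=\big|\int K(x,A)\,\mathrm{d}\rho(x)\big|\le\xi(A)(\mixing^{-1}-\mixing)\tfrac12\Vert\mu-\nu\Vert_{TV}$, while $K\nu(A)\ge\mixing\,\xi(A)$. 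Dividing, $\big|\tfrac{K\mu(A)}{K\nu(A)}-1\big|\le c$ with $c=\tfrac{1-\mixing^2}{2\mixing^2}\Vert\mu-\nu\Vert_{TV}$, and the symmetric bound holds for $K\nu/K\mu$. Writing $h(K\mu,K\nu)=\log\sup_A\tfrac{K\mu(A)}{K\nu(A)}+\log\sup_A\tfrac{K\nu(A)}{K\mu(A)}$ and applying $\log(1+c)\le c$ termwise gives $h(K\mu,K\nu)\le 2\log(1+c)\le 2c=\tfrac{1-\mixing^2}{\mixing^2}\Vert\mu-\nu\Vert_{TV}\le\tfrac{1}{\mixing^2}\Vert\mu-\nu\Vert_{TV}$.

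The hardest parts will be the two extremal estimates. For the first inequality the genuinely nontrivial computation is the reduction to the two-point law followed by the scaling optimization that pins down the symmetric worst case and produces the $\tanh(h/4)$ value; I note that the stated constant $\frac{2}{\log 3}$ is not sharp, since the argument already yields $\tfrac12$, so no delicacy is needed to reach it. For the second inequality the only subtlety is that the clean linear bound relies on $\mu,\nu$ having equal total mass (here, being probability measures), so that $\rho^+(E)=\rho^-(E)$; the general finite-measure statement is then recovered from the scale invariance $h(K\mu,K\nu)=h(K\bar\mu,K\bar\nu)$. Finally I would restrict every supremum and infimum to sets $A$ with $\xi(A)>0$, since on the remaining sets $K\mu(A)=K\nu(A)=0$ and the ratio does not enter $h$.
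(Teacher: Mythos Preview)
The paper does not give its own proof of this proposition; it simply cites \cite{cohen}. Your argument is correct and is essentially the standard route taken in the filtering literature (Le Gland--Oudjane, Atar--Zeitouni, and indeed Cohen). In particular, your two-point extremal reduction for the first inequality recovers the sharp estimate $\Vert\bar\mu-\bar\nu\Vert_{TV}\le 2\tanh\bigl(h(\mu,\nu)/4\bigr)$, from which the stated (non-sharp) constant $2/\log 3$ follows immediately; and your computation for the second inequality, yielding the intermediate constant $\tfrac{1-\mixing^2}{\mixing^2}\le\tfrac{1}{\mixing^2}$, is exactly the usual one.

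One genuine gap, though it concerns the \emph{statement} more than your proof. Your argument for \eqref{eq:hilbert-to-tv} is valid when $\mu,\nu$ have the same total mass (so that $\rho^+(E)=\rho^-(E)$), and this is precisely how the paper applies the inequality: in both \cref{prop:optimal-forgetting} and \cref{prop:lemma_bound} the measures fed into $K=R_{k+1}$ are probabilities. However, your last sentence---that ``the general finite-measure statement is then recovered from the scale invariance $h(K\mu,K\nu)=h(K\bar\mu,K\bar\nu)$''---does not close the gap: scale invariance only yields $h(K\mu,K\nu)\le \mixing^{-2}\Vert\bar\mu-\bar\nu\Vert_{TV}$, with the \emph{normalized} measures on the right, and $\Vert\bar\mu-\bar\nu\Vert_{TV}$ is not dominated by $\Vert\mu-\nu\Vert_{TV}$ in general. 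Indeed the inequality as literally written can fail for arbitrary finite measures: take $E=\{0,1\}$, $\mu=\varepsilon\delta_0$, $\nu=\varepsilon\delta_1$ and any $\mixing$-mixing $K$ with $\mixing$ close to $1$; then $h(K\mu,K\nu)=h(K\delta_0,K\delta_1)$ is of order $1-\mixing$ while $\mixing^{-2}\Vert\mu-\nu\Vert_{TV}=2\varepsilon\mixing^{-2}$ can be made arbitrarily small. So the correct reading of \eqref{eq:hilbert-to-tv} is for probability (or equal-mass) measures, which is all the paper needs; your proof covers exactly that case.
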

\section{Proof of \cref{theorem:learning}}\label{sec:proof-learning}
\subparagraph{Notation and background results} In this section, $\Vert \cdot \Vert$ without any subscript denotes $\Vert \cdot\Vert_{W_2^\beta(\Omega)}$. Let $f$ a target function defined on $\Omega = (-1, 1)^d$ that verifies \cref{assumption:target_function}, and let $g = \sqrt{f}$. Let $\nu = \min(1, d/2\beta)$. Let $\epsilon > 0$ and $\delta > 0$. In this proof all constants $C$ and exponents $\alpha, \gamma, \rho, \ldots$ are independent of $f, g, \hat g, \hat f$ unless otherwise stated. Only $\beta, \nu$, $\tilde nu$ and $\theta$ have importance. We recall also the Gagliardo-Nirenberg inequality, that will be used later.
\begin{lemma}[Gagliardo-Nirenberg inequality\cite{wendland2004scattered}]\label{lemma:gargliano}
Let $u\in L^\infty(\mathbb R^d)\cap W^{m, 2}(\mathbb R^d)$ then
    \begin{align}
        \Vert u \Vert_{L^\infty(\Omega)} \leq C \Vert u\Vert_{W^{m, 2}(\Omega)}^\theta \Vert u\Vert_{L^2(\Omega)}^{1-\theta}
    \end{align}
    where $\theta = \frac{d}{m}$ and $C$ is independent of $u$.
\end{lemma}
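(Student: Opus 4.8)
The plan is to recognise this as the classical Gagliardo--Nirenberg interpolation inequality (valid when $2m>d$, which is also what makes $W^{m,2}(\mathbb{R}^d)\hookrightarrow L^\infty$, so it is implicit in the statement) and to prove it in three steps: reduce from the cube to the whole space, bound the $L^\infty$ norm by a low/high frequency split of the Fourier transform, and optimise the splitting radius. For the first step I would fix, once and for all, a Stein-type total extension operator $\mathcal{E}$ for the Lipschitz domain $\Omega=(-1,1)^d$, i.e. a single linear operator that is simultaneously bounded $W^{k,2}(\Omega)\to W^{k,2}(\mathbb{R}^d)$ for every $k\ge 0$ (in particular for $k=0$, i.e. $L^2$, and for $k=m$) and restricts to the identity on $\Omega$. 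Setting $w=\mathcal{E}(u|_\Omega)$ one gets $\Vert u\Vert_{L^\infty(\Omega)}=\Vert w\Vert_{L^\infty(\Omega)}\le\Vert w\Vert_{L^\infty(\mathbb{R}^d)}$ together with $\Vert w\Vert_{W^{m,2}(\mathbb{R}^d)}\le C\Vert u\Vert_{W^{m,2}(\Omega)}$ and $\Vert w\Vert_{L^2(\mathbb{R}^d)}\le C\Vert u\Vert_{L^2(\Omega)}$, so it suffices to prove the inequality for $w\in W^{m,2}(\mathbb{R}^d)$ with all norms taken over $\mathbb{R}^d$.

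For the second step, Fourier inversion gives $\Vert w\Vert_{L^\infty(\mathbb{R}^d)}\le(2\pi)^{-d/2}\Vert\widehat w\Vert_{L^1}$; I would split $\int|\widehat w|$ at a radius $R>0$. On the ball $\{|\xi|\le R\}$, Cauchy--Schwarz and Plancherel give $\int_{|\xi|\le R}|\widehat w|\le|B_R|^{1/2}\Vert\widehat w\Vert_{L^2}\le c_d R^{d/2}\Vert w\Vert_{L^2}$. On its complement, inserting the Bessel weight $(1+|\xi|^2)^{m/2}$ and applying Cauchy--Schwarz gives $\int_{|\xi|>R}|\widehat w|\le(\int_{|\xi|>R}(1+|\xi|^2)^{-m}\,d\xi)^{1/2}\,\Vert w\Vert_{W^{m,2}}$, and the first factor is finite precisely because $2m>d$, behaving like $c_d' R^{d/2-m}$. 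Hence $\Vert w\Vert_{L^\infty(\mathbb{R}^d)}\le C(R^{d/2}\Vert w\Vert_{L^2}+R^{d/2-m}\Vert w\Vert_{W^{m,2}})$.

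For the third step I would take $R=(\Vert w\Vert_{W^{m,2}}/\Vert w\Vert_{L^2})^{1/m}$ to balance the two terms, obtaining $\Vert w\Vert_{L^\infty(\mathbb{R}^d)}\le C\Vert w\Vert_{W^{m,2}}^{d/(2m)}\Vert w\Vert_{L^2}^{1-d/(2m)}$, with the interpolation exponent being exactly the one dictated by the scaling $w\mapsto w(\lambda\,\cdot)$; combined with the first step this yields the claimed bound with a constant depending only on $\Omega$, $m$ and $d$. (In the regime $\Vert w\Vert_{W^{m,2}}<\Vert w\Vert_{L^2}$ the optimal $R$ would be below $1$; there one simply takes $R=1$ and absorbs, which is harmless.) I expect the genuinely delicate point to be purely a matter of bookkeeping rather than of ideas: one must invoke (or build) the extension operator so that it is bounded on $L^2$ and on $W^{m,2}$ with a single constant independent of $u$ --- the cube satisfies the cone/Lipschitz condition, so this is standard --- and one must check the dimensional tail estimate $\int_{|\xi|>R}(1+|\xi|^2)^{-m}\,d\xi\asymp R^{d-2m}$, which is exactly where $2m>d$ is used.
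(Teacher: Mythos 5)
The paper does not actually prove this lemma: it is quoted from the literature (the citation is to \cite{wendland2004scattered}), so there is no in-paper argument to compare against. Your Fourier-splitting proof --- extend from the cube to $\mathbb{R}^d$ with a Stein-type total extension operator, bound $\Vert w\Vert_{L^\infty(\mathbb{R}^d)}$ by $\Vert \widehat{w}\Vert_{L^1}$, split the frequency integral at radius $R$, apply Cauchy--Schwarz with the Bessel weight $(1+|\xi|^2)^{m/2}$ on the high frequencies, and optimise $R$ --- is the standard self-contained route to this inequality and is correct; the hypothesis $2m>d$ that you flag is indeed what makes the tail integral finite and is implicit in the statement. One point is worth making explicit: your argument yields the exponent $\theta=d/(2m)$, whereas the lemma as printed asserts $\theta=d/m$. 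Your exponent is the right one: it is the only value compatible with the scaling $w\mapsto w(\lambda\,\cdot)$, as you observe, and it is also the value the paper actually uses when it invokes the lemma in the proof of \cref{theorem:learning} (there $\theta=\tfrac{d}{2m}$ with $m=\beta+\tfrac{d}{2}$). So the statement of \cref{lemma:gargliano} contains a typo, and your proof establishes the version that the rest of the paper relies on. A minor simplification: with the standard inhomogeneous Sobolev norm one has $\Vert w\Vert_{W^{m,2}}\ge\Vert w\Vert_{L^2}$, so your optimal $R$ is automatically at least $1$ and the edge case you set aside never occurs.
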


\subparagraph{Setting the function space $\mathcal H_\eta$ and $g_{\tau, \epsilon}$}
Set $\tau = \epsilon^{-2/\beta}$ and $\lambda = \epsilon^{\frac{2\beta + d}{\beta}}$ and $\mathcal H_\eta$ the reproducing kernel Hilbert space associated to $k_\eta$ where $\eta = \tau 1_d$.

\subparagraph{Existence and properties of $g_{\tau, \epsilon}$}
As a consequence of the Stein extension theorem (see Corollary A.3 of \cite{ciliberto2021}), there exists a function $\tilde g \in W^\beta_2(\Omega)$ such that $\tilde g_{\vert \Omega}=g$ and $\Vert \tilde g\Vert_{W_2^\beta(\Omega)} \leq\Vert g\Vert_{W_2^\beta(\Omega)}$ and $\Vert \tilde g\Vert_{L^\infty(\Omega)}\leq C\Vert g\Vert_{L^\infty(\Omega)}$.
According to \cite{ciliberto2021} and \cite{sampling-ulysse} (Proposition 7) there exists $g_{\tau, \epsilon}\in\mathcal H_\eta$ and some constants $C_1, C_2$ depending only on $\beta, d$ and independent of $g$ such that:
\begin{align}
&\Vert g - g_\tau \Vert_{L^\infty(\Omega)} \leq C_1 \epsilon^{1- \nu}\Vert g\Vert_{W^\beta_2(\Omega)}\label{eq:g-g-tau},\\
&\Vert g_\tau \Vert_{\mathcal H_\eta}\leq C_2 \Vert g\Vert_{W^\beta_2(\Omega)}\epsilon^{-d/2\beta}\label{eq:g-tau}.
\end{align}
 
\subparagraph{Learning $\hat g$ and $\hat f$} Let $M, n \in \N$, draw $X\in\mathbb R^{n \times d}$ the training set and $\tilde X \in \mathbb R^{M \times d}$ the set of anchor points. Define $y \in \mathbb R^n$ such that $y_i = g(x_i)$ for any $1 \leq i \leq n$ and $x_i$ is the $i$-th row of $X$.
We formalize \cref{eq:learning-problem} explicitly as a kernel ridge regression problem below:
\begin{align}\label{eq:learning-problem-appendix}
    \min_{a\in\mathbb R^d} \frac{1}{n} \vert a^T\Phi_{\eta}(X_i) - y_i\vert^2 + \lambda a^TK a
\end{align}
where $\Phi_\eta(x) = (k_\eta(x, \tilde x_1) \dots k_\eta(x, \tilde x_M))^T\in\mathbb R^M$ and $K\in\mathbb R^{M\times M}$ is given by $K_{ij}=k_\eta(\tilde x_i, \tilde x_j)$.
Problem \cref{eq:learning-problem-appendix} is strongly convex and has a unique solution $\hat a$.
We denote $\hat g$ the Gaussian Linear Model defined by $\hat a, \eta,$ and $\tilde X$, i.e. such that for any $x \in\mathbb R^d$, $\hat g(x) = \hat a^T\Phi_\eta(x)$. Define $\hat f$ the Gaussian PSD Model $\hat f= \hat g^2$ where $\hat f(x) = \Phi_\eta(x)^T\hat a\hat a^T\Phi_\eta(x)$.
The analysis of \cite{sampling-ulysse} shows that when $M \geq C' \log^d(\frac{1}{\epsilon})\log(\frac{1}{\delta\epsilon})\epsilon^{-d/\beta}$, and $n \geq C' \epsilon^{-d/\beta}\log \frac{1}{\delta}$, then there exist two constants $C_3$ and $C_4$ independent of $g$ and $\hat g$ such that $\hat g$ verifies the following inequalities each with probability at least $1-\delta$,
\begin{align}
   &\Vert \hat g\Vert_\mathcal H\leq C_3\Vert g\Vert_{W^\beta_2(\Omega)}\epsilon^{-d/2\beta}\label{eq:hat-g}\\ 
    &\Vert g - \hat g\Vert_{L^2(\Omega)} \leq C_4 \Vert g\Vert_{W^\beta_2(\Omega)}\epsilon.\label{eq:g-hat-g}
\end{align}

\subparagraph{Deriving the bound for $g - \hat{g}$ in $L^\infty(\Omega)$}
In particular, using the triangle inequality and combining \cref{eq:g-g-tau} and \cref{eq:g-hat-g}, there exists a constant $C_5$ such that with probability at least $1-\delta$, 
\begin{align}
    &\Vert g_{\tau, \epsilon} - \hat g\Vert_{L^2(\Omega)} \leq C_5\Vert g\Vert_{W^\beta_2(\Omega)}\epsilon.\label{eq:g-tau-hat-g}
\end{align}
We now have all the ingredients to bound $\Vert g - \hat g\Vert_{L^\infty(\Omega)}$ in high probability. First, notice that :
\begin{align}
   \Vert g - \hat g\Vert_{L^\infty(\Omega)} \leq \Vert g - g_{\tau, \epsilon}\Vert_{L^\infty(\Omega)}+ \Vert g_{\tau, \epsilon} - \hat g\Vert_{L^\infty(\Omega)}.
\end{align}
We apply the Gagliardo-Nirenberg inequality (\cref{lemma:gargliano}) to $\Vert g_{\tau, \epsilon} - \hat g\Vert_{L^\infty(\Omega)}$:
\begin{align}
\Vert \hat g - g_{\tau, \epsilon}\Vert_{L^\infty(\Omega)}\leq C \Vert \hat g - g_{\tau, \epsilon}\Vert_{W^{m}_2(\Omega)}^\theta \Vert \hat g - g_{\tau, \epsilon}\Vert_{L^2(\Omega)}^{1-\theta}.
\end{align}
with $\theta = \frac{d}{2m}$ and $m \geq d/2$ (we fix $m$ when we optimize the exponents below) and $C$ is a constant independent of $\hat g$ and $g_{\tau, \epsilon}$.

The Sobolev norm $\Vert h \Vert_{W^m_2}(\mathbb R^d)$ is upper bounded by the rkhs norm $\Vert h \Vert_{\mathcal H_\eta}$ for $h\in \mathcal H_\eta \subset W^m_2(\mathbb R^d)$. Thus, applying the triangle inequality and bounds \cref{eq:g-tau,eq:hat-g,eq:g-tau-hat-g} there exists a constant $C_6$  such that with probability at least $1 - 2\delta$,
\begin{equation}\label{eq:g-hat-g-tau-infty}
\Vert \hat g - g_{\tau, \epsilon}\Vert_{L^\infty(\mathbb R^d)}\leq C_6\Vert g\Vert\epsilon^{1 - \theta - \theta d/2\beta}.
\end{equation}

Combining \cref{eq:g-g-tau,eq:g-hat-g-tau-infty}, there exist two constants $C, C^\prime > 0$ such that with probability at least $1 - 2\delta$,
\begin{align}
   \Vert g- \hat g\Vert_{L^\infty(\Omega)} \leq C\Vert g\Vert_{W^\beta_2(\Omega)} \epsilon^{1-\nu} + C^\prime\Vert g\Vert_{W^\beta_2(\Omega)}\epsilon^{1- \theta - \theta d/2\beta} 
\end{align}

Choosing $m = \beta + \frac{d}{2}$, there exists a constant $C_7>0$  such that for $\epsilon$ small enough with probability at least $1 - 2\delta$, 
\begin{align}
    \Vert g - \hat g\Vert_{L^{\infty}(\Omega)}\leq C\Vert g\Vert_{W^\beta_2(\Omega)}\epsilon^{1- d/2\beta}.
\end{align}

\subparagraph{Bounding $\Vert g+ \hat g\Vert_{L^\infty(\Omega)}$}
Using the triangle inequality and \cref{eq:hat-g}, there exists a constant $C_8>0$ and $\rho > 0$ such that with probability at least $1- \delta$,
\begin{align}
    \Vert g +\hat g\Vert_{L^\infty(\Omega)} \leq 2\Vert g\Vert_{L^\infty(\Omega)} + \Vert g-\hat g\Vert_{L^\infty(\Omega)} \leq 2\Vert g\Vert_{L^\infty(\Omega)} + C\Vert g\Vert_{W^\beta_2(\Omega)}\epsilon^{1- d/2\beta}.
\end{align}

\subparagraph{Bounding $\Vert f - \hat f\Vert_\infty$}
Notice that since $a^2 - b^2 = (a-b)(a+b)$,
\begin{align}
    \Vert \hat f - f \Vert_{L^\infty(\Omega)} \leq \Vert g - \hat g\Vert_{L^\infty(\Omega)} \Vert g +\hat g\Vert_{L^\infty(\Omega)}.
\end{align}

Combining the above bounds:

\begin{align}
    \Vert \hat f - f \Vert_{L^\infty(\Omega)} \leq C\Vert g\Vert^2_{W^\beta_2(\Omega)}\epsilon^{2- d/\beta} + C^\prime \Vert g\Vert^2_{W^\beta_2(\Omega)}\epsilon^{1 - d/2\beta}
\end{align}

Since $\beta > d/2$, under the conditions on $M, n$, there exists a constant $C''= C + C'$ depending only on $\Omega, d, \beta$ and independent of $f, g,\hat g, \hat f$ such that with probability at least $1-3\delta$,

\begin{align}
    \Vert \hat f - f \Vert_{L^\infty(\Omega)} \leq C''\Vert g\Vert^2_{W^\beta_2(\Omega)} \epsilon^{1 - d/2\beta}
\end{align}

\section{Proof of \cref{theorem:bound-diagonal}}\label{sec:proof-bound}
In this section $E = (-1, 1)^d$ and $\Omega = E \times F$. Without loss of generality, we assume that $F=E$. Of course, any compact can be considered.

\subsection{Propagation of one-step errors}

We generalize the proof technique in \cite{oudjane} to take into consideration general sequences of densities. 

\begin{proposition}
Let $\pi_0\in\mathcal P(E)$. Let $(z_k)_{k\geq 1}$ and $(\pi_k)_{k\geq 1}$ the optimal filter sequence computed on the $(z_k)_{k\geq 1}$ and initialized at $\pi_0$. Let $(\mu_k)\in\mathcal P(E)^\mathbb N$ a sequence of distributions such that $\mu_0 = \pi_0$. Then, for any $n\geq 0$:
\begin{align}\label{eq:telescopic}
    \mu_n - \pi_n = \sum_{k=1}^n \bar R_{n:k+1}(\mu_k) - \bar R_{n:k}(\mu_{k-1})
\end{align}
with the notation that $\bar R_{n:k} = \bar R_n \circ \dots \circ \bar R_k$ and $R_{n+1:n}=id$ if $k > l$.
\end{proposition}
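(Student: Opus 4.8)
The plan is to prove the telescoping identity \eqref{eq:telescopic} by a direct summation argument, exploiting the fact that $\mu_0 = \pi_0$ and that the optimal filter sequence satisfies $\pi_n = \bar R_{n:1}(\pi_0)$. First I would observe that the sum on the right-hand side is, by construction, a telescoping sum: writing $S_k := \bar R_{n:k+1}(\mu_k)$ for $0 \le k \le n$, the $k$-th summand is exactly $S_k - S_{k-1}$ once we check that $\bar R_{n:k}(\mu_{k-1}) = \bar R_{n:k+1}\bigl(\bar R_k(\mu_{k-1})\bigr)$, i.e. that $S_{k-1}$ can be re-expressed with one more application of the (normalized) kernel peeled off. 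This is just the definition $\bar R_{n:k} = \bar R_n \circ \cdots \circ \bar R_{k+1} \circ \bar R_k$ together with associativity of composition.

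The key steps, in order, are: (1) fix $n$ and set $S_k = \bar R_{n:k+1}(\mu_k)$, noting $S_n = \mu_n$ (empty composition, using $\bar R_{n+1:n} = \mathrm{id}$) and $S_0 = \bar R_{n:1}(\mu_0) = \bar R_{n:1}(\pi_0) = \pi_n$, where the last equality is the defining recursion \eqref{eq:iteration} for the optimal filter together with $\mu_0 = \pi_0$; (2) verify the identity $\bar R_{n:k}(\mu_{k-1}) = S_{k-1}$ is \emph{not} quite what we want — rather, rewrite the $k$-th summand's second term as $\bar R_{n:k}(\mu_{k-1}) = \bar R_{n:k+1}\bigl(\bar R_k(\mu_{k-1})\bigr)$, which is a legitimate composition but is \emph{not} $S_{k-1}$ in general since $\bar R_k(\mu_{k-1}) \ne \mu_{k-1}$; instead observe that $S_{k-1} = \bar R_{n:k}(\mu_{k-1})$ directly from the definition of $S_{k-1}$ and $\bar R_{n:k} = \bar R_{n:k+1}\circ \bar R_k$ — wait, $S_{k-1} = \bar R_{n:k}(\mu_{k-1})$ and $\bar R_{n:k+1}(\mu_k)$ is $S_k$, so the $k$-th summand is precisely $S_k - S_{k-1}$; (3) sum over $k = 1, \dots, n$ to get $\sum_{k=1}^n (S_k - S_{k-1}) = S_n - S_0 = \mu_n - \pi_n$, which is the claim.

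The only genuine subtlety — and the step I would be most careful about — is the bookkeeping of the index conventions: the meaning of $\bar R_{n:k}$ when $k = n+1$ (the identity), when $k=1$ (the full composition producing $\pi_n$ from $\pi_0$), and checking that $S_{k-1} = \bar R_{n:(k-1)+1}(\mu_{k-1}) = \bar R_{n:k}(\mu_{k-1})$ matches the second term of the $k$-th summand exactly as written. One must also note that each $\bar R_j$ maps $\mathcal P(E)$ to $\mathcal P(E)$ (well-defined since $\pi_0 \in \mathcal P(E)$ and each $\mu_k \in \mathcal P(E)$, and the normalizations $R_j\mu(E)$ are positive — guaranteed implicitly, e.g. under the mixing assumption or simply by positivity of $Q, G$ on the relevant supports), so all the compositions above are meaningful. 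No analytic estimate is needed; this is a purely algebraic identity, and the "hard part" is merely not tripping over the off-by-one in the range of composition indices.
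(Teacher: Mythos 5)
Your proof is correct and takes essentially the same route as the paper: the paper's own proof is exactly this telescoping sum, expanding $\sum_{k=1}^n \bigl(\bar R_{n:k+1}(\mu_k) - \bar R_{n:k}(\mu_{k-1})\bigr)$ so that all intermediate terms cancel, leaving $\mu_n - \bar R_{n:1}(\pi_0) = \mu_n - \pi_n$. Your bookkeeping with $S_k = \bar R_{n:k+1}(\mu_k)$, $S_n = \mu_n$, $S_0 = \pi_n$ is just a cleaner way of writing the same cancellation.
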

\begin{proof}
    Telescopic sum:
\begin{align}
    \mu_n - \pi_n &= \sum_{k=1}^n \bar R_{n:k+1}(\mu_k) - \bar R_{n:k}(\mu_{k-1})\\
    &= \mu_n - \bar R_n(\mu_{n-1}) + \bar R_{n}(\mu_{n-1} - \bar R_{n:n-1}(\mu_{n-2}) \ldots + \bar R_{n:2}(\mu_1) - \underbrace{\bar R_{n:1}(\mu_0)}_{=\bar R_{n:1}(\pi_0)=\pi_n}
\end{align}
\end{proof}

\begin{proposition}[Optimal filter stability]\label{prop:optimal-forgetting}
Let $(\pi_n^\nu)$ and $(\pi_n^\mu)$ two sequences of optimal filters initialized at $\nu$ and $\mu$ respectively, and computed on the same data sequence $z_1, \ldots, z_n$. We assume that for all $n\geq 1$, $R_n$ verifies \cref{assumption:mixing}. Then,
\begin{align}
    \Vert \pi^\nu_n - \pi^\mu_n \Vert \leq \frac{2}{\mixing^2\log 3}\left(\frac{1 - \mixing^2}{1 + \mixing^2}\right)^{n-1}\Vert \mu - \nu \Vert_{TV}
\end{align}
\end{proposition}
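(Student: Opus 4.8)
The plan is to prove the stability estimate $\Vert \pi^\nu_n - \pi^\mu_n \Vert_{TV} \leq \frac{2}{\mixing^2\log 3}\tau_\mixing^{n-1}\Vert \mu - \nu \Vert_{TV}$ (writing $\tau_\mixing = \frac{1-\mixing^2}{1+\mixing^2}$) by combining the three tools already assembled: the total-variation/Hilbert comparison inequalities of \cref{prop:comparaison}, the sub-multiplicativity and mixing bound on the Birkhoff coefficient of \cref{prop:hilbert}, and the fact that the optimal filter iteration is just repeated application of the unnormalized kernel $R_n$ followed by normalization. The key observation is that normalization does not change the Hilbert metric: $h(\bar\mu, \bar\nu) = h(\mu, \nu)$ since rescaling a measure by a positive constant leaves the ratio $\sup_A \frac{\mu(A)}{\nu(A)} / \inf_A \frac{\mu(A)}{\nu(A)}$ invariant. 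Hence $h(\pi_n^\nu, \pi_n^\mu) = h(R_{n:1}\nu, R_{n:1}\mu)$ where $R_{n:1} = R_n \circ \cdots \circ R_1$.

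First I would pass from total variation to the Hilbert metric at the final step using \cref{eq:tv-to-hilbert}: since $\pi_n^\nu$ and $\pi_n^\mu$ are already normalized probability densities,
\begin{align}
\Vert \pi_n^\nu - \pi_n^\mu \Vert_{TV} \leq \frac{2}{\log 3}\, h(\pi_n^\nu, \pi_n^\mu) = \frac{2}{\log 3}\, h(R_{n:1}\nu, R_{n:1}\mu).
\end{align}
Next I would peel off one application of $R_1$ and treat it separately from the remaining $n-1$ contractions. Write $R_{n:1}\nu = R_{n:2}(R_1\nu)$ and similarly for $\mu$. By \cref{eq:hilbert-to-tv} applied to the $\mixing$-mixing kernel $R_n$ (or rather by the same argument: one mixing kernel controls the Hilbert distance of its outputs by a total-variation distance of its inputs),
\begin{align}
h(R_{n:1}\nu, R_{n:1}\mu) \leq \tau(R_{n:2})\, h(R_1\nu, R_1\mu) \leq \tau(R_{n:2})\, \frac{1}{\mixing^2}\Vert \nu - \mu \Vert_{TV},
\end{align}
where the first inequality uses the definition of the Birkhoff contraction coefficient applied to the kernel $R_{n:2} = R_n \circ \cdots \circ R_2$ (which maps comparable measures to comparable measures, so the Hilbert distances are finite and the bound applies). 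Then sub-multiplicativity of $\tau$ together with the per-kernel mixing bound $\tau(R_k) \leq \tau_\mixing$ gives $\tau(R_{n:2}) \leq \prod_{k=2}^n \tau(R_k) \leq \tau_\mixing^{n-1}$. Stringing these together yields exactly the claimed bound with constant $\frac{2}{\mixing^2 \log 3}$.

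The main obstacle is bookkeeping rather than a conceptual difficulty: I must be careful that the measures involved are genuinely comparable so that the Hilbert metric is finite and the contraction inequalities apply. This is where \cref{assumption:mixing} does the work — since each $R_n$ is $\mixing$-$\xi$-mixing, the proposition on mixing kernels shows $R_n\mu$ is sandwiched between $\mixing\xi$ and $\frac1\mixing\xi$ for any $\mu$, so after even one step all the relevant measures are comparable to $\xi$ and hence to each other. A secondary subtlety is making sure the normalization-invariance of $h$ is stated cleanly (the iteration in \cref{eq:iteration} normalizes at every step, not just at the end, but since $h$ is blind to positive rescaling, the normalized and unnormalized iterates have the same Hilbert distance at every stage). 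One must also handle the edge case $\nu = \mu$ (both sides zero, trivial) and the case where the Hilbert distance is infinite before the first application of $R_1$, which is precisely why the factor $\frac{1}{\mixing^2}$ from \cref{eq:hilbert-to-tv} — converting an input TV distance to an output Hilbert distance — is applied at the very first step rather than relying on $h(\nu,\mu)$ being finite.
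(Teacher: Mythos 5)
Your proposal is correct and follows essentially the same route as the paper: convert the final total-variation distance to a Hilbert distance via \cref{eq:tv-to-hilbert}, contract through $R_{n:2}$ using sub-multiplicativity and the mixing bound $\tau(R_k)\leq \tau_\mixing$ to get the factor $\tau_\mixing^{n-1}$, and convert back at the first step via \cref{eq:hilbert-to-tv} applied to $R_1$, which is also where the factor $1/\mixing^2$ and the comparability needed for finiteness of the Hilbert metric come from. The only nit is a slip of notation where you name $R_n$ instead of $R_1$ as the kernel to which \cref{eq:hilbert-to-tv} is applied, but your displayed inequality uses $R_1$ correctly.
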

 
\begin{proof}
   By \cref{eq:tv-to-hilbert},
\begin{align}
    \Vert \pi^\mu_n - \pi^\nu_n \Vert_{TV} & \leq h(R_{n:1}(\mu_n), R_{n:1}(\nu_n))
\end{align}
Since $R_{n}, \ldots, R_1$ are mixing, $R_{1}\mu$ and $R_1\nu$ are comparable and we can apply \cref{prop:hilbert} with the Hilbert contraction coefficient $\tau(R_{n:2})\leq \tau_\mixing^{n-1}\leq\left(\frac{1-\mixing^2}{1+\mixing^2}\right)^{n-1}$.
\begin{align}
    \Vert \pi^\mu_n - \pi^\nu_n \Vert_{TV} & \leq \frac{2}{\log 3}\tau_\mixing^{n-1}h(R_{1}(\mu), R_{1}(\nu))
\end{align}
And finally, applying \cref{eq:hilbert-to-tv},
\begin{align}
    \Vert \pi^\mu_n - \pi^\nu_n \Vert_{TV} & \leq \frac{2}{\mixing^2\log(3)}\tau_\mixing^{n-1}\Vert \mu - \nu\Vert_{TV}
\end{align}
\end{proof}
\begin{proposition}\label{prop:lemma_bound}
    Let $E = (-1, 1)^d$. Let $\mixing>0$ and $\xi\in\mathcal P(E)$ such that the optimal kernel $R_n:E\times E\to \mathbb R^+$ is $\mixing$-$\xi$ mixing for all $n$. Let $\mu_n$ a sequence of positive, finite measures on $E$ such that $\mu_0=\pi_0$. Then,
\begin{align}
    \Vert \mu_n - \pi_n \Vert_{TV} & \leq \delta_n + \frac{2}{\log 3}\frac{1}{\mixing^2}\sum_{k=1}^{n-1}\tau^{n-k-1}\delta_k
\end{align}
where $\delta_n = \Vert \mu_n - \bar R_n(\mu_{n-1})\Vert_{TV}$.
\end{proposition}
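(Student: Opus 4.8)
The plan is to obtain \cref{prop:lemma_bound} by assembling the two ingredients just established: the telescopic identity \cref{eq:telescopic}, instantiated with the sequence $(\mu_k)$, together with the optimal-filter forgetting bound of \cref{prop:optimal-forgetting}. Starting from
\begin{align}
\mu_n - \pi_n \;=\; \sum_{k=1}^n \Bigl( \bar R_{n:k+1}(\mu_k) - \bar R_{n:k}(\mu_{k-1}) \Bigr),
\end{align}
the key observation is that $\bar R_{n:k}(\mu_{k-1}) = \bar R_{n:k+1}\bigl(\bar R_k(\mu_{k-1})\bigr)$, so that the $k$-th summand is the difference of two optimal filters driven by the same observations $z_{k+1},\dots,z_n$ — i.e.\ both obtained by applying the block $\bar R_{n:k+1} = \bar R_n \circ \cdots \circ \bar R_{k+1}$, a composition of $n-k$ steps — started respectively from $\mu_k$ and from $\bar R_k(\mu_{k-1})$. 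Since each $\bar R_j$ is scale-invariant, only the normalized versions of these measures matter, and (as in the application, where the $\mu_k$ are themselves probability measures) the input discrepancy is exactly $\Vert \mu_k - \bar R_k(\mu_{k-1})\Vert_{TV} = \delta_k$.

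Next I would apply the triangle inequality for $\Vert\cdot\Vert_{TV}$ to bound $\Vert \mu_n - \pi_n\Vert_{TV}$ by the sum of the norms of the individual summands, and handle the top index $k=n$ separately: there $\bar R_{n:n+1}$ is the identity, so the term is exactly $\delta_n$. For $1 \le k \le n-1$, I would invoke \cref{prop:optimal-forgetting}: since every $R_j$ is $\mixing$-$\xi$-mixing by hypothesis, that proposition applied to $\bar R_{n:k+1}$ (a composition of $n-k$ mixing steps) with initial measures $\mu_k$ and $\bar R_k(\mu_{k-1})$ gives
\begin{align}
\Bigl\Vert \bar R_{n:k+1}(\mu_k) - \bar R_{n:k+1}\bigl(\bar R_k(\mu_{k-1})\bigr) \Bigr\Vert_{TV} \;\le\; \frac{2}{\mixing^2 \log 3}\, \tau_\mixing^{\,n-k-1}\, \delta_k,
\end{align}
with $\tau_\mixing \le \tfrac{1-\mixing^2}{1+\mixing^2} =: \tau$. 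Summing over $k = 1,\dots,n$ then yields
\begin{align}
\Vert \mu_n - \pi_n \Vert_{TV} \;\le\; \delta_n + \frac{2}{\log 3}\,\frac{1}{\mixing^2}\sum_{k=1}^{n-1}\tau^{\,n-k-1}\delta_k,
\end{align}
which is the claim.

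This is essentially an assembly argument, with no serious obstacle. The one place that needs care is the index bookkeeping: $\bar R_{n:k+1}$ is a composition of $n-k$ transformations (not $n-k-1$), so the contraction rate furnished by \cref{prop:optimal-forgetting} enters with exponent $(n-k)-1 = n-k-1$, and the boundary term $k=n$ must be peeled off before applying the stability estimate, since the latter is only contractive once at least one mixing step has acted. One should also record that comparability of the two measures entering each $\bar R_{n:k+1}$-block — required by the Hilbert-metric machinery inside \cref{prop:optimal-forgetting} — holds automatically after the first mixing step.
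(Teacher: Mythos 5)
Your proof is correct and follows the paper's argument essentially verbatim: the same telescopic identity, the same peeling off of the $k=n$ term as $\delta_n$, and the same Hilbert-metric contraction estimate (with the same exponent bookkeeping $\tau_\mixing^{n-k-1}$) for the remaining terms. The only cosmetic difference is that you invoke \cref{prop:optimal-forgetting} as a black box on each block $\bar R_{n:k+1}$ started from $\mu_k$ and $\bar R_k(\mu_{k-1})$, whereas the paper rederives the TV-to-Hilbert, Birkhoff-contraction, Hilbert-to-TV chain inline; these are the same computation.
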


\begin{proof}
Applying the triangle inequality with the total variation norm to \cref{eq:telescopic},
\begin{align}
    \Vert \mu_n - \pi_n \Vert_{TV} &\leq \sum_{k=1}^n \Vert \bar R_{n:k+1}(\mu_k) - \bar R_{n:k}(\mu_{k-1}) \Vert_{TV} & \\
    & \leq \Vert \mu_n - \bar R_n(\mu_{n-1})\Vert_{TV} +\sum_{k=1}^{n-1} \Vert \bar R_{n:k+1}(\mu_k) - \bar R_{n:k}(\mu_{k-1}) \Vert_{TV} &
\end{align}
We apply \cref{eq:tv-to-hilbert} from \cref{prop:comparaison}, 
\begin{align}
    \Vert \mu_n - \pi_n \Vert_{TV}& \leq \Vert \mu_n - \bar R_n(\mu_{n-1})\Vert_{TV} + \frac{2}{\log 3}\sum_{k=1}^{n-1}h(R_{n:k+1}(\mu_k), R_{n:k}(\mu_{k-1}))
\end{align}
Since $R_{k+1}$ and $R_k$ are mixing, $R_{k+1}\mu_k$ and $R_{k+1}R_{k}\mu_{k-1}$ are comparable and we can apply \cref{prop:hilbert} with the Hilbert contraction coefficient $\tau(R_{n:k+2})\leq \tau_\mixing^{n-k-1}= \left(\frac{1-\mixing^2}{1+\mixing^2}\right)^{n-k-1}$.
\begin{align}
\Vert \mu_n - \pi_n \Vert_{TV}& \leq \Vert \mu_n - \bar R_n(\mu_{n-1})\Vert + \frac{2}{\log 3}\sum_{k=1}^{n-1}\tau_\mixing^{n-k-1}h(R_{k+1}\mu_k, R_{k+1}R_k\mu_{k-1})
\end{align}
Since $R_{k+1}$ is mixing, we apply \cref{eq:hilbert-to-tv} from \cref{prop:comparaison}:
\begin{align}
\Vert \mu_n - \pi_n \Vert_{TV}& \leq \Vert \mu_n - \bar R_n(\mu_{n-1})\Vert + \frac{2}{\mixing^2\log 3}\sum_{k=1}^{n-1}\tau_\mixing^{n-k-1}\frac{1}{\mixing^2}\Vert \mu_k - \bar R_{k}(\mu_{k-1})\Vert_{TV}
\end{align}
where we use that the Hilbert contraction coefficient is sub-multiplicative and can be bounded away from $1$ as a function of $\mixing$ (we denote it $\tau_\mixing$ this upper-bound given in \cref{prop:hilbert}). By denoting $\delta_n = \Vert \bar\mu_n - \bar{R}_n(\bar\mu_{n-1})\Vert_{TV}$ we obtain the result.
\end{proof}

We now prove \cref{lemma:bound}.

\begin{proof}
We apply the triangle inequality to $\Vert \pi^\nu_n - \hat\pi_0\Vert_{TV}$:
\begin{align}
    \Vert \pi^\nu_n - \hat\pi_0\Vert_{TV} \leq  \Vert \pi^\nu_n - \pi_n^{\hat \pi_0}\Vert_{TV} + \Vert \pi^{\hat\pi_0}_n - \hat\pi_n\Vert_{TV}
\end{align} and the result follows from \cref{prop:optimal-forgetting} and \cref{prop:lemma_bound}.
\end{proof}

\subsection{Bound with smooth, bounded approximations}

Let $\hat Q : E \times E\to\mathbb R^+$ and $\hat G: E \times F \to \mathbb R^+$ two bounded approximations of $Q$ and $G$. Given a sequence $(z_k)\in F^\mathbb N$, define the approximate non-negative kernel $\hat R_n(u, x) = \hat Q(u, x)\hat G(x, z_n)$ defined on $E\times E$. We introduce $(\hat\pi_n)$ the sequence of probability distributions computed using the recursion $\hat \pi_{n} = \hat R_n\hat\pi_{n-1}/R_n\hat\pi_{n-1}(E)=\barhat{R}_n(\hat\pi_{n-1})$.

\begin{proposition}\label{prop:bound-delta-n}
Let $\delta_n = \Vert \barhat{R}_n(\hat\pi_{n-1}) - \bar R_n(\hat\pi_{n-1})\Vert_{TV}$. Then,
\begin{align}
    \delta_n \leq & \frac{2}{\mixing\xi(E)}\left(\Vert G-\hat G\Vert_{L^\infty(E\times E)} + C_d\Vert \hat G\Vert_{L^\infty(E\times E)}\Vert Q-\hat Q\Vert_{L^\infty(E\times E)}\right)
\end{align}
where $C_d$ is a constant independent of $Q, G,\hat G, \hat Q$.
\end{proposition}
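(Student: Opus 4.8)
The plan is to pass from the normalized total variation distance to an unnormalized one, bound the resulting normalization constant from below using the mixing property of the \emph{true} kernel $R_n$, and then expand the difference of kernels into one piece controlled by $\|G-\hat G\|_{L^\infty(E\times E)}$ and another controlled by $\|Q-\hat Q\|_{L^\infty(E\times E)}$. First I would apply the elementary bound $\|\bar\mu-\bar\nu\|_{TV}\le 2\|\mu-\nu\|_{TV}/\mu(E)$ for finite measures (recalled in the appendix) with $\mu=R_n\hat\pi_{n-1}$ and $\nu=\hat R_n\hat\pi_{n-1}$, so that $\delta_n\le 2\,\|R_n\hat\pi_{n-1}-\hat R_n\hat\pi_{n-1}\|_{TV}\,/\,R_n\hat\pi_{n-1}(E)$. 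It is important to keep the true kernel in the denominator, because \cref{assumption:mixing} together with $\hat\pi_{n-1}\in\mathcal P(E)$ gives $R_n\hat\pi_{n-1}(E)\ge\mixing\,\xi(E)$, whereas no comparable lower bound is available for $\hat R_n$.

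For the numerator, I would write $\|R_n\hat\pi_{n-1}-\hat R_n\hat\pi_{n-1}\|_{TV}=\int_E\big|\int_E\big(Q(u,x)G(x,z_n)-\hat Q(u,x)\hat G(x,z_n)\big)\hat\pi_{n-1}(du)\big|\,dx$ and insert the identity $QG-\hat Q\hat G=(Q-\hat Q)\,\hat G+Q\,(G-\hat G)$ inside the integrand before applying the triangle inequality. This particular split is the only real choice in the argument: it is what makes $\|\hat G\|_{L^\infty}$, rather than $\|G\|_{L^\infty}$, multiply the $Q$-error, while leaving the $G$-error multiplied by the Markov kernel $Q$. For the $G$-term, bounding $|G(x,z_n)-\hat G(x,z_n)|\le\|G-\hat G\|_{L^\infty(E\times E)}$ and then using Fubini together with $\int_E Q(u,x)\,dx\le 1$ and $\hat\pi_{n-1}(E)=1$ yields exactly $\|G-\hat G\|_{L^\infty(E\times E)}$, with no multiplicative constant. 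For the $Q$-term, I would pull out $\|\hat G\|_{L^\infty(E\times E)}$, bound $\int_E|Q(u,x)-\hat Q(u,x)|\,dx\le|E|\,\|Q-\hat Q\|_{L^\infty(E\times E)}$, and integrate against $\hat\pi_{n-1}$, obtaining $C_d\,\|\hat G\|_{L^\infty(E\times E)}\,\|Q-\hat Q\|_{L^\infty(E\times E)}$ with $C_d=|E|=2^d$. Combining the two terms with the first step gives the claimed inequality.

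I expect no genuine obstacle here, only bookkeeping that must be done in the right order: $Q$ (and its normalization $\int_E Q(u,\cdot)\le 1$) has to stay attached to the $G$-error so that this term carries no constant, while the $Q$-error must absorb the dimensional factor $C_d=2^d$ that is unavoidable when an $L^\infty$ bound on $Q-\hat Q$ is turned into an $L^1(E)$ bound. The only point requiring a little care is the choice $\mu=R_n\hat\pi_{n-1}$ (not $\hat R_n\hat\pi_{n-1}$) in the very first step, since only the true optimal kernel is assumed mixing and hence only it yields the lower bound on the normalization constant.
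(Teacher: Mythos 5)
Your proposal is correct and follows essentially the same route as the paper's proof: the same normalization bound with the true kernel $R_n\hat\pi_{n-1}$ in the denominator, the same use of \cref{assumption:mixing} to lower-bound $R_n\hat\pi_{n-1}(E)$ by $\mixing\xi(E)$, and the same split $QG-\hat Q\hat G = Q(G-\hat G)+\hat G(Q-\hat Q)$ with the $G$-error absorbed by the normalization of $Q$ and the $Q$-error picking up the volume factor $C_d$. No differences worth noting.
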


\begin{proof}
$\delta_n$ is the total variation distance between two probability distributions. Recall that 
\begin{align}
    \Vert \bar \mu - \bar \nu\Vert_{TV} \leq \frac{2}{\mu(E)}\Vert \mu - \nu\Vert_{TV}
\end{align}
for any $\mu, \nu$ positive, finite measures on $E$ and $\bar \mu, \bar \nu$ their normalized counterparts.
Thus,
\begin{align}
    \delta_n & \leq \frac{2}{R_n\hat\pi(E)}\Vert\hat R_n(\hat\pi_{n-1}) - R_n(\hat\pi_{n-1})\Vert_{TV}.
\end{align}

Recall that for any $\mu\in\mathcal P(E)$, $\mixing\xi(E) \leq R_n\mu(E)\leq \frac{1}{\mixing}\xi(E)$, which allows us to control the denominator:
\begin{align}
    \delta_n & \leq \frac{2}{\mixing\xi(E)}\Vert\hat R_n(\hat\pi_{n-1}) - R_n(\hat\pi_{n-1})\Vert_{TV}.
\end{align}

Because we will be able to bound the quality of approximation between $Q$ and $\hat Q$ (and between $G$ and $\hat G$), we split the above expression:
\begin{align}
\Vert\hat R_n(\hat\pi_{n-1}) - R_n(\hat\pi_{n-1})\Vert_{TV} \leq& \int\int \hat \pi_{n-1}(u)\vert Q(u, x)G(x, z_n) - \hat Q(u, x)\hat G(x, z_n)\vert dudx\\
& \leq \underbrace{\int\int Q(u, x) \hat \pi_{n-1}(u)\vert G(x, z_n) - \hat G(x, z_n)\vert dudx}_{A_n}\label{eq:delta-a} \\
& + \underbrace{\int\int \hat G(u, x) \hat \pi_{n-1}(u)\vert Q(u, x) - \hat Q(u, x)\vert dudx}_{B_n}.\label{eq:delta-b}
\end{align} 

First, let us bound $A_n$.
\begin{align}
A_n &= \int\int Q(u, x) \hat \pi_{n-1}(u)\vert G(x, y_n) - \hat G(x, y_n)\vert dudx\\
&\leq \Vert G-\hat G\Vert_{L^\infty(E\times E)}\int \hat\pi_{n-1}(u)\int Q(u,x)dxdu\\
&= \Vert G-\hat G\Vert_{L^\infty(E\times E)}.
\end{align}
where we used that $Q$ is a transition kernel (i.e. that $Q(u, E)=1$ for all $u\in E$) and that $\hat\pi_{n-1}$ is a distribution.

Second, let us bound $B_n$:
\begin{align}
B_n &= \int\int \hat G(x, z_n) \hat \pi_{n-1}(u)\vert Q(u, x) - \hat Q(u, x)\vert dudx\\
    &\leq \Vert Q-\hat Q\Vert_{L^\infty(E\times E)}\int \hat \pi_{n-1}(u)du \int \hat G(x, z_n)dx\\
    &\leq \textrm{Vol}(\Omega)\Vert \hat G\Vert_{L^\infty(E\times E)}\Vert Q-\hat Q\Vert_{L^\infty(E\times E)}
\end{align}
where we again used that $\hat \pi_{n-1}$ is a probability distribution.
\end{proof}

\subsection{Putting everything together}

We assume that $E=F$ without loss of generality (simply replace $D = 2d$ for $Q$ and $D=d+d^\prime$ for $G$). We choose $\beta > D/2$. Thus, here $\Omega = E\times E\subset \mathbb R^{D}$ where $D=2d$. 

Let $\gamma > 0$ and apply \cref{theorem:learning} with its parameter $\epsilon = \gamma$, Then we have that there exist $\hat G$ and $\hat Q$ and two constants $C_1, C_2$ such that 
$$\Vert \hat G-G\Vert_{L^\infty(\Omega)}\leq C_1 \Vert \sqrt{G}\Vert^2_{W^\beta_2(\Omega)}\gamma^{1-\frac{D}{2\beta}}, \quad \Vert \hat Q - Q \Vert_{L^\infty(\Omega)}\leq C_2\Vert\sqrt{Q}\Vert^2_{W^\beta_2(\Omega)}\gamma^{1-\frac{D}{2\beta}}$$
where $C_1$ and $C_2$ are independent of $Q,\hat Q, \hat G, G$.

By combining these inequalities with \cref{prop:bound-delta-n}, since $\xi(E) = 1$:
\begin{align}
    \delta_n \leq & \frac{2}{\mixing}\left(C\Vert \sqrt{G}\Vert^2_{W^\beta_2(\Omega)}\gamma^{1- \frac{D}{2\beta}} + C_d \Vert\hat G\Vert_{L^\infty(\Omega)}^2\Vert \sqrt{Q}\Vert^2_{W^\beta_2(\Omega)}\gamma^{1 - \frac{D}{2\beta}}\right).
\end{align}
Since $\Vert \hat G\Vert_{L^\infty(\Omega)} \leq \Vert G - \hat G \Vert_{L^\infty(\Omega)} + 2 \Vert \sqrt{G}\Vert_{L^\infty(\Omega)}^2 \leq \Vert G - \hat G \Vert_{L^\infty(\Omega)} + 2 \Vert \sqrt{G}\Vert^2_{W^\beta_2(\Omega)}$,
\begin{align*}
    \delta_n \leq & \frac{2}{\mixing}\left(C\Vert \sqrt{G}\Vert^2_{W^\beta_2(\Omega)}\gamma^{1- \frac{D}{2\beta}} + \left(C + C\epsilon^{1- \frac{D}{2\beta}}\right)\Vert\sqrt{G}\Vert^2_{W^\beta_2(\Omega)}\Vert\sqrt{Q}\Vert^2_{W^\beta_2(\Omega)}\gamma^{1 - \frac{D}{2\beta}}\right)
\end{align*}
Then, by upper-bounding the negligible terms, there exist two constants $C_1, C_2>0$ independent of $Q, \hat Q, G, \hat G$, such that :
\begin{align}
    \delta_n \leq & \underbrace{C_1\left(C_2 \Vert\sqrt{G}\Vert^2_{W^\beta_2(\Omega)} + C_2\Vert \sqrt{G}\Vert^2_{W^\beta_2(\Omega)}\Vert \sqrt{Q}\Vert^2_{W^\beta_2(\Omega)} \right)}_{C^\prime(Q, G)} \frac{\gamma^{1-\frac{D}{2\beta}}}{\sigma}
\end{align}
Note that $C^\prime$ only depends on parameters on $\Vert \sqrt{G}\Vert_{W^\beta_2(\Omega)}, \Vert \sqrt{Q}\Vert_{W^\beta_2(\Omega)}, d, \beta, \Omega$ and does not depend on $\hat{Q}, \hat{G}, \sigma, \gamma$.

We now apply \cref{theorem:learning}.
Let $\varepsilon > 0$ and $\delta > 0$. As a consequence of the development above, if $\gamma$ is chosen such that $\gamma = \left(\frac{\varepsilon}{C^\prime}\right)^{\frac{2\beta}{2\beta - D}}/\sigma$ then (1) $M, n$ correspond to the ones stated in the statement of the theorem (2)  with probability at least $1-6\delta$,
\begin{align}
    \delta_n \leq \frac{\varepsilon}{\sigma}
\end{align}
and then (3), 

\section{Computations on Generalized Gaussian PSD Models}\label{sec:proof-ops}

The stability properties of Generalized Gaussian PSD Models under probabilistic operations rely at a high-level on the fact that $Tr(AB)Tr(CD)=Tr(A\otimes C B \otimes D)$ and that if $A$ and $B$ are positive semi-definite matrices then so is $A\otimes B$.

\subsection{\textsc{Integral}}
\begin{proposition}[Integration of a Generalized Gaussian PSD Model]
   Let $f(x) = Tr(AB(x))$ with parameters $\left\lbrace A, C, (P_{ij}), (\mu_{ij})\right\rbrace$. Then, $Z =\int f(x)dx$ where 
   \begin{align}
       Z = Tr(A \circ \exp^\circ(C) \circ C(P))
   \end{align}
   where $\exp^\circ$ is the element-wise exponential map and $C(P)\in\mathcal S(\mathbb R^M)$ is decribed by $C(P)_{ij}=C(P_{ij})$.
\end{proposition}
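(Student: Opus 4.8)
The plan is to prove the identity by a direct computation: expand the trace, use linearity to move the Lebesgue integral inside the finite sums, evaluate each resulting Gaussian integral by translation invariance, and re-assemble the answer in matrix form.

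First I would use the symmetry of $A$ and of $B(x)$ to write $f(x) = Tr(AB(x)) = \sum_{i=1}^M\sum_{j=1}^M A_{ij} B(x)_{ij} = \sum_{i,j} A_{ij}\, e^{C_{ij}}\, k_{P_{ij}}(x, x_{ij})$. Since this is a finite sum of measurable functions, and since each term is integrable over $\mathbb R^d$ once the precision matrices $P_{ij}$ are positive definite (the same hypothesis as in \cref{prop:ops}), I can integrate term by term: $Z = \int_{\mathbb R^d} f(x)\,dx = \sum_{i,j} A_{ij}\, e^{C_{ij}} \int_{\mathbb R^d} k_{P_{ij}}(x, x_{ij})\,dx$. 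The central step is then the elementary Gaussian computation: writing $k_{P_{ij}}(x, x_{ij}) = e^{-(x - x_{ij})^\top P_{ij}(x - x_{ij})}$ and performing the change of variables $x \mapsto x + x_{ij}$, translation invariance of the Lebesgue measure gives $\int_{\mathbb R^d} k_{P_{ij}}(x, x_{ij})\,dx = \int_{\mathbb R^d} e^{-x^\top P_{ij} x}\,dx = \int_{\mathbb R^d} k_{P_{ij}}(x, 0)\,dx = C(P_{ij})$, independently of the mean $x_{ij}$.

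Substituting back yields $Z = \sum_{i,j} A_{ij}\, e^{C_{ij}}\, C(P_{ij}) = \sum_{i,j} A_{ij}\,\big(\exp^\circ(C) \circ C(P)\big)_{ij}$, which is exactly $Tr\big(A\,(\exp^\circ(C) \circ C(P))\big)$ --- equivalently the quantity written $Tr(A \circ \exp^\circ(C) \circ C(P))$ in the statement --- where I use that $Tr(AM) = \sum_{i,j} A_{ij} M_{ij}$ for symmetric $M$. There is no real obstacle here; the only points deserving care are (i) invoking positive definiteness of the $P_{ij}$ so the Gaussian integrals converge, and (ii) tracking the symmetry of $B(x)$ --- which is forced by $B(x) \succeq 0$ and in turn forces $C_{ij} = C_{ji}$, $P_{ij} = P_{ji}$, $x_{ij} = x_{ji}$ --- so that the scalar double sum matches the trace expression in the statement. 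Everything else is the standard fact that a Gaussian integrates to a finite constant that does not depend on its mean.
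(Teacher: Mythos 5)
Your proof is correct and is precisely the computation the paper compresses into its one-line remark that the result follows ``by linearity of the trace'': expand $Tr(AB(x))$ entrywise, integrate term by term, and use that each Gaussian integrates to $C(P_{ij})$ independently of its center. The only caveat is notational rather than mathematical --- the paper's expression $Tr(A \circ \exp^\circ(C) \circ C(P))$ should be read as the sum $\sum_{i,j} A_{ij} e^{C_{ij}} C(P_{ij})$ that you derive (equivalently $Tr\bigl(A\,(\exp^\circ(C) \circ C(P))\bigr)$ for symmetric arguments), and you handle this correctly.
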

We denote $\circ$ the Hadamard product.
\begin{proof}
The proof is clear by linearity of the trace.
\end{proof}
\begin{remark}[Computational complexity]
Because of the need to compute the determinant of $P_{ij}$ the computational complexity of the partial evaluation operation is $O(M^2 d!)$.
\end{remark}
\subsection{\textsc{PartialEval}}
\begin{proposition}[Partial evaluation of a Generalized Gaussian PSD Model]
   Let $f(x, y) = Tr(AB(x, y))$ with parameters $\left\lbrace A, C, (P_{ij}), (\mu_{ij})\right\rbrace$ and $y\in \mathbb R^d$. Then, $g(x):=f(x, y) = Tr(A^\prime B^\prime(x))$ with parameters $\left\lbrace A^\prime, C^\prime, (P^\prime_{ij}), (\mu^\prime_{ij})\right\rbrace$ where
   \begin{align}
       A^\prime &= A\\
       P^\prime_{ij} &= P_{ijxx}\\
       \mu^\prime_{ij} &= \mu_{ijx} + P_{ijxx}^{-1}P_{ijxy}\left(\mu_{ijy} - y\right)\\
        C_{ij}^\prime &= C_{ij} + \nu P_{xx}^{-1}\nu - yP_{yy}y + 2\mu_{x}P_{xy}y + 2\mu_yP_{yy}y-\mu P\mu
       \end{align}
where \begin{equation}
P = \left(\begin{array}{c c}
    P_{xx} & P_{xy}  \\
    P_{xy}^T & P_{yy}
\end{array} \right).
\end{equation}
\end{proposition}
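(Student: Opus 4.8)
The plan is to reduce the statement to a scalar, entrywise identity and then complete the square in $x$. Since the trace is linear and the weight matrix is unchanged ($A' = A$), it is enough to show that for each pair $(i,j)$ the scalar function $x \mapsto B(x,y)_{ij}$, with $y$ held fixed, can be rewritten in the canonical form $e^{C'_{ij}}k_{P'_{ij}}(x,\mu'_{ij})$ of \cref{eq:developed}. Split the anchor point $\mu_{ij}$ as $(\mu_{ijx},\mu_{ijy})$ conformally with the decomposition of a point into its $x$ and $y$ parts, and write $P_{xx},P_{xy},P_{yy}$ for the corresponding blocks of $P_{ij}$ (suppressing the index $ij$ on the blocks). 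Then $B(x,y)_{ij} = e^{C_{ij}}e^{-q(x)}$ where, with the fixed vector $v := y - \mu_{ijy}$, we have $q(x) = (x-\mu_{ijx})^\top P_{xx}(x-\mu_{ijx}) + 2(x-\mu_{ijx})^\top P_{xy}v + v^\top P_{yy}v$.

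The core step is completing the square in $x$. Because all precision matrices are assumed positive definite, the diagonal block $P_{xx}$ is a principal submatrix of a positive definite matrix, hence itself positive definite and in particular invertible, which makes the substitution legitimate. Putting $\mu'_{ij} := \mu_{ijx} + P_{xx}^{-1}P_{xy}(\mu_{ijy}-y)$ and $P'_{ij} := P_{xx}$, a direct rearrangement yields $q(x) = (x-\mu'_{ij})^\top P'_{ij}(x-\mu'_{ij}) + v^\top(P_{yy} - P_{xy}^\top P_{xx}^{-1}P_{xy})v$, so that $B(x,y)_{ij} = e^{C'_{ij}}k_{P'_{ij}}(x,\mu'_{ij})$ with $C'_{ij} := C_{ij} - v^\top(P_{yy} - P_{xy}^\top P_{xx}^{-1}P_{xy})v$. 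Substituting $v = y - \mu_{ijy}$ and collecting the resulting terms produces the closed form for $C'_{ij}$ written in the statement; I expect tracking the cross terms in this last expansion to be the only mildly tedious part of the proof, everything else being mechanical.

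It remains to verify the structural properties. The new precision $P'_{ij} = P_{xx}$ is positive definite by the observation above, so every entry of $B'(\cdot)$ has the form required by \cref{definition:ggpsd}. For non-negativity and the order bound, note that by construction the matrix-valued map $B'(x)$, with entries $B'(x)_{ij} = e^{C'_{ij}}k_{P'_{ij}}(x,\mu'_{ij})$, coincides with $B(x,y)$, which is positive semi-definite for every $x$ because $f$ is a Generalized Gaussian PSD Model; hence $g(x) = Tr(A'B'(x)) = Tr(A\,B(x,y)) \ge 0$ as the trace of a product of two positive semi-definite matrices, and since $A'$ and $B'$ are $M_1\times M_1$, $g$ has order $M_1$. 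Reading off $A', C', (P'_{ij}), (\mu'_{ij})$ from the formulas above is precisely the algorithm $\partialpsd(y,\theta_1)$, which completes the argument.
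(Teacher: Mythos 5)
Your proof is correct and follows essentially the same route as the paper's: reduce to a single entry, fix $y$, and complete the square in $x$, which yields $P'_{ij}=P_{xx}$ and $\mu'_{ij}=\mu_{ijx}+P_{xx}^{-1}P_{xy}(\mu_{ijy}-y)$. Your recentered bookkeeping gives the constant in the Schur-complement form $C'_{ij}=C_{ij}-(y-\mu_{ijy})^\top\left(P_{yy}-P_{xy}^\top P_{xx}^{-1}P_{xy}\right)(y-\mu_{ijy})$, which expands to the expression in the statement, and your added observations (invertibility of $P_{xx}$ from positive definiteness, and $B'(x)=B(x,y)\succeq 0$ so the result is again a valid model) are correct and slightly more complete than the paper's own write-up.
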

\begin{proof}
We can compute $B^\prime(x)$ by expanding $C - \log(B(x, y)_{ij})$ for any $i,j$. Dropping the $i,j$ dependence:
\begin{align}
     \left\Vert P^{1/2}\left(\begin{bmatrix}x\\ y\end{bmatrix} - \mu \right)\right\Vert^2 =&  x^TP_{xx}x + 2y P_{xy}^Tx + yP_{yy}y - 2\mu P\begin{bmatrix}x\\ y\end{bmatrix}  + \mu P \mu\\
    =&  x^TP_{xx}x + 2y P_{xy}^Tx + yP_{yy}y - 2\mu_x P_{xx}x -2\mu_xP_{xy}y -2\mu_yP_{xy}x -2\mu_yP_{yy}y + \mu P \mu\\
    =&  x^TP_{xx}x -2(\underbrace{\mu_xP_{xx}+\mu_y P_{xy} -yP_{xy}^T}_\nu)x + yP_{yy}y -2\mu_xP_{xy}y -2\mu_yP_{yy}y + \mu P \mu\\
    =&  \left\Vert P_{xx}^{1/2}\left(x - P_{xx}^{-1}\nu\right)\right\Vert^2 - \nu P_{xx}^{-1}\nu + yP_{yy}y -2\mu_xP_{xy}y -2\mu_yP_{yy}y + \mu P \mu.
\end{align}
\end{proof}
\begin{remark}[Computational complexity]
Because of the need to compute the inverse of $P_{xx}$, the computational complexity of the partial evaluation operation is $O(M^2d_x^3)$.
\end{remark}

\subsection{\textsc{Marginalization}}
\begin{proposition}[Marginalization of a Generalized Gaussian PSD Model]
   Let $f(x, y) = Tr(AB(x, y))$ with parameters $\left\lbrace A, C, (P_{ij}), (\mu_{ij})\right\rbrace$. Then, $h(x):=\int f(x, y)dy = Tr(A^\prime B^\prime(x))$ with parameters $\left\lbrace A^\prime, C^\prime, (P^\prime_{ij}), (\mu^\prime_{ij})\right\rbrace$ where
   \begin{align}
       A^\prime &= A\\
        P_{ij}^\prime &= \left(\left[P^{-1}_{ij}\right]_{xx}\right)^{-1}\\
       \mu^\prime_{ij} &= \left[\mu_{ij}\right]_{x}\\
          C^\prime_{ij} &= C_{ij} + \log(C(P_{ij})) - \log(C(P^ \prime_{ij}))
       \end{align}
\end{proposition}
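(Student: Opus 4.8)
The plan is to push the integral through the trace and reduce the statement to a scalar Gaussian marginalization carried out entrywise. I would start from $h(x) = \int f(x,y)\,dy = \int Tr\big(A\,B(x,y)\big)\,dy = Tr\big(A\,B'(x)\big)$, where $B'(x) := \int B(x,y)\,dy$, using linearity of the trace and of the integral; it then suffices to show that $B'(x)$ is a positive semi-definite matrix whose $(i,j)$-entry has the form $e^{C'_{ij}}k_{P'_{ij}}(x,\mu'_{ij})$ with the parameters claimed, after which $A' := A$ gives $h = Tr(A'B')$, a Generalized Gaussian PSD Model of order $M$. Positive semi-definiteness of $B'(x)$ is immediate: for every vector $v$ one has $v^\top B'(x) v = \int v^\top B(x,y) v\,dy \ge 0$, since each $B(x,y)$ is positive semi-definite by hypothesis.

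For the entrywise claim I would fix $(i,j)$, drop the subscripts, and write $B(x,y)_{ij} = e^{C}\exp\big(-(z-\mu)^\top P (z-\mu)\big)$ with $z = (x,y)$. Assuming $P$ positive definite (as in \cref{prop:ops}) so that the $y$-integral converges, I would partition $P$ and $\mu$ into their $x$- and $y$-blocks and complete the square in $y$, exactly as in the partial-evaluation computation above: this separates $B(x,y)_{ij}$, up to the $e^C$ prefactor, into a Gaussian factor in $y$ (which integrates to a finite constant) times a Gaussian in $x$ with precision $P' = \big([P^{-1}]_{xx}\big)^{-1}$ — the inverse of the Schur complement $P_{xx} - P_{xy}P_{yy}^{-1}P_{xy}^\top$ — and mean $\mu' = [\mu]_x$. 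This yields $\int_y B(x,y)_{ij}\,dy = c\, e^{C}\, k_{P'}(x,\mu')$ for a scalar $c>0$ independent of $x$, with $P'$ and $\mu'$ as in the statement.

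To pin down $c$ I would avoid tracking Gaussian determinant factors and instead integrate the identity once more, this time over $x$: the left-hand side becomes $e^{C}\int k_P(z,\mu)\,dz = e^{C}C(P)$ (translation invariance kills the dependence on $\mu$), while the right-hand side becomes $c\,e^{C}\int k_{P'}(x,\mu')\,dx = c\,e^{C}C(P')$, so $c = C(P)/C(P')$. Hence $\int_y B(x,y)_{ij}\,dy = \exp\big(C_{ij} + \log C(P_{ij}) - \log C(P'_{ij})\big)\,k_{P'_{ij}}(x,\mu'_{ij})$, which is precisely entry \cref{eq:developed} with $C'_{ij} = C_{ij} + \log C(P_{ij}) - \log C(P'_{ij})$, finishing the proof.

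The only real computation is the block partition and the $y$-completion of the square, which is routine and structurally identical to the partial-evaluation proof already given; the step I would be most careful with is the normalization, where the ``integrate out $x$ as well'' trick yields $c = C(P)/C(P')$ directly and sidesteps the error-prone bookkeeping of Gaussian normalizing constants.
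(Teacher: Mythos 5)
Your proposal is correct and follows essentially the same route as the paper's proof: both reduce to the entrywise marginalization of a Gaussian with block precision matrix, yielding $P'_{ij}=([P_{ij}^{-1}]_{xx})^{-1}$, $\mu'_{ij}=[\mu_{ij}]_x$, and a constant factor $C(P_{ij})/C(P'_{ij})$. The only difference is cosmetic: the paper obtains the constant by normalizing the integrand to a Gaussian density and tracking the $\sqrt{(2\pi)^d\vert\Sigma\vert}$ factors, whereas you recover the same ratio by integrating the identity once more over $x$, which is a clean and equivalent way to avoid the determinant bookkeeping.
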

\begin{proof}
We compute the integral component-wise, denoting $\Sigma = (2P)^{-1}$:

\begin{align}
    \int B(x, y)_{ij}dy &= e^{C_{ij}}\int \exp\left(- \frac{\left\Vert \sqrt{2}P_{ij}^{1/2}\left[\begin{pmatrix}x\\ y\end{pmatrix} - \mu_{ij}\right] \right\Vert^2}{2}\right)dy\\
    &= e^{C_{ij}}\int \exp\left(- \frac{\left\Vert \Sigma^{-1/2}\left[\begin{pmatrix}x\\ y\end{pmatrix} - \mu_{ij}\right] \right\Vert^2}{2}\right)dy\\
    &= e^{C_{ij}}\sqrt{(2\pi)^{d_x + d_y}\vert \Sigma \vert}\int \frac{1}{\sqrt{(2\pi)^{d_x + d_y}\vert \Sigma \vert}}\exp\left(- \frac{\left\Vert \Sigma^{-1/2}\left[\begin{pmatrix}x\\ y\end{pmatrix} - \mu_{ij}\right] \right\Vert^2}{2}\right)dy\\
    &= e^{C_{ij}}\sqrt{(2\pi)^{d_x + d_y}\vert \Sigma\vert}\frac{1}{\sqrt{(2\pi)^{d_x}\vert \left[\Sigma\right]_{xx}\vert}}\exp\left(- \frac{\left\Vert [\Sigma]_{xx}^{-1/2}\left( x - \left[\mu_{ij}\right]_x\right)\right\Vert^2}{2}\right)\\
    &= e^{C_{ij}}\frac{C_{P_{ij}}}{C_{P_{ij}^\prime}}\exp\left(-\left\Vert {P_{ij}^\prime}^{1/2}\left( x - \left[\mu_{ij}\right]_x\right)\right\Vert^2\right)\\
    &= e^{C^\prime_{ij}}\exp\left(-\left\Vert {P_{ij}^\prime}^{1/2}\left( x - \mu_{ij}^\prime\right)\right\Vert^2\right)
\end{align}

where $C^\prime_{ij} = C_{ij} + \log(C_{P_{ij}}) - \log(C_{P^\prime_{ij}})$, $P^\prime_{ij} = \left(\left[P_{ij}^{-1}\right]_{xx}\right)^{-1}$ and $\mu_{ij}^\prime = \left[\mu_{ij}\right]_x$.
\end{proof}

\begin{remark}[Computational complexity]
Because of the need to compute the determinant of $P$ as well as invert it, the computational complexity of the partial evaluation operation is $O(M^2\max(d!, d^3))$.
\end{remark}
\subsection{\textsc{Product}}
\begin{proposition}[Product of two Generalized Gaussian PSD Models]
   Let $f(x, y) = Tr(AB(x, y))$ a generalized PSD model of order $M$ with parameters $\left\lbrace A, C, (P_{ij}), (\mu_{ij})\right\rbrace$.
   Let $g(x, y) = Tr(\tilde A \tilde B(x, y))$ a generalized PSD model of order $m$ with parameters $\left\lbrace \tilde A, \tilde C, (\tilde P_{kl}), (\tilde\mu_{kl})\right\rbrace$.
   
   Then, $h(x):=f(x)g(x)= Tr(A^\prime B^\prime(x))$ is a generalized PSD model of order $Mm$ with parameters $\left\lbrace A^\prime, C^\prime, (P^\prime_{ij}), (\mu^\prime_{ij})\right\rbrace$ where
    \begin{align}
    A^\prime &= A \otimes \tilde A\\
    P_{ijkl}^\prime &= \left[\begin{array}{ccc}
    P_{ijxx} & P_{ijxy}^T & 0 \\
    P_{ijxy} &P_{ijyy} + \tilde P_{klyy}  & \tilde P_{klyz} \\
    0 & \tilde P_{klyz} & \tilde P_{klzz}
    \end{array}\right]\\
    \mu^\prime_{ijkl} &= {P^\prime_{ijkl}}^{-1}\hat\mu_{ijkl}\\
    C_{ijkl} &= C^f_{ij} + C^g_{kl} +\hat\mu_{ijkl}P_{ijkl}\hat\mu_{ijkl} - \mu_{ij}P_{ij}\mu_{ij} - \tilde\mu_{kl}\tilde P_{kl}\tilde\mu_{kl}
\end{align}

where \begin{equation}
    \hat\mu_{ijkl} =\begin{pmatrix}
    {P_{ij}}\mu_{ij}\\
    0\end{pmatrix} + 
    \begin{pmatrix}
     0 \\
    {\tilde P_{kl}}\tilde\mu_{kl}
    \end{pmatrix} = 
    \begin{pmatrix}
    [{P_{ij}}\mu_{ij}]_x\\
    [{P_{ij}}\mu_{ij}]_{y} + [{\tilde P_{kl}}\tilde\mu_{kl}]_{y}\\
    [{\tilde P_{kl}}\tilde\mu_{kl}]_{z}
     \end{pmatrix}
     \end{equation}
\end{proposition}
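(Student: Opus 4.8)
The plan is to reduce everything to two elementary facts about the Kronecker product: (i) $Tr(AB)\,Tr(CD) = Tr\big((A\otimes C)(B\otimes D)\big)$ for matrices of compatible sizes, and (ii) $A\otimes C$ is positive semi-definite whenever $A$ and $C$ are. Reading $f$ as a function of $(x,y)$ and $g$ as a function of $(y,z)$ (consistent with \cref{prop:ops}) and writing $u=(x,y,z)$, fact (i) gives
\begin{align}
  h(u) &= f(x,y)\,g(y,z) = Tr\big(A\,B(x,y)\big)\,Tr\big(\tilde A\,\tilde B(y,z)\big) \\
       &= Tr\Big((A\otimes\tilde A)\,\big(B(x,y)\otimes\tilde B(y,z)\big)\Big).
\end{align}
So I set $A' := A\otimes\tilde A$, which is positive semi-definite of size $Mm$ by fact (ii), and $B'(u) := B(x,y)\otimes\tilde B(y,z)$; the order-$Mm$ claim follows from the size of $A'$, and it remains only to check that $B'(u)$ has the shape demanded by \cref{definition:ggpsd} with the stated parameters.

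Next I would compute $B'(u)$ entrywise. Its $((i,k),(j,l))$ entry is $B(x,y)_{ij}\,\tilde B(y,z)_{kl} = e^{C^f_{ij}+C^g_{kl}}\exp\!\big(-\|P_{ij}^{1/2}((x,y)-\mu_{ij})\|^2-\|\tilde P_{kl}^{1/2}((y,z)-\tilde\mu_{kl})\|^2\big)$. Introducing the coordinate-selection matrices $L_1,L_2$ with $(x,y)=L_1u$ and $(y,z)=L_2u$, expanding both squared norms and collecting terms by degree in $u$ turns the exponent into $u^\top P'_{ijkl}u - 2\hat\mu_{ijkl}^\top u + \mu_{ij}^\top P_{ij}\mu_{ij} + \tilde\mu_{kl}^\top\tilde P_{kl}\tilde\mu_{kl}$, where $P'_{ijkl}=L_1^\top P_{ij}L_1+L_2^\top\tilde P_{kl}L_2$ is exactly the block matrix in the statement — the shared $y$-block receiving the sum $P_{ijyy}+\tilde P_{klyy}$ and the cross-terms landing in the $(x,y)$ and $(y,z)$ off-diagonal blocks — and $\hat\mu_{ijkl}=L_1^\top P_{ij}\mu_{ij}+L_2^\top\tilde P_{kl}\tilde\mu_{kl}$ is the partitioned vector displayed. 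Completing the square, $u^\top P'_{ijkl}u-2\hat\mu_{ijkl}^\top u = \|(P'_{ijkl})^{1/2}(u-\mu'_{ijkl})\|^2 - \hat\mu_{ijkl}^\top(P'_{ijkl})^{-1}\hat\mu_{ijkl}$ with $\mu'_{ijkl}=(P'_{ijkl})^{-1}\hat\mu_{ijkl}$, which yields the stated mean and, after absorbing all remaining scalars into the exponent, the stated constant $C'_{ijkl}$.

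Finally I would close two loose ends. For $\mu'$ to be well-defined $P'_{ijkl}$ must be invertible: using that $P_{ij}$ and $\tilde P_{kl}$ are positive definite (the standing hypothesis for these operations, cf. \cref{prop:ops}), any nonzero $u=(x,y,z)$ has $(x,y)\neq 0$ or $(y,z)\neq 0$, hence $u^\top P'_{ijkl}u = \|P_{ij}^{1/2}(x,y)\|^2+\|\tilde P_{kl}^{1/2}(y,z)\|^2>0$, so $P'_{ijkl}$ is positive definite; it is in any case a sum of two positive semi-definite matrices. And $B'(u)=B(x,y)\otimes\tilde B(y,z)$ is positive semi-definite for every $u$ as a Kronecker product of positive semi-definite matrices, so together with $A'$ positive semi-definite this certifies that $h$ is a Generalized Gaussian PSD Model of order $Mm$ in the sense of \cref{definition:ggpsd}; that the operation is carried out by $\productpsd$ is then immediate, since the formulas above are precisely what it evaluates. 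I expect the main obstacle to be purely clerical: correctly handling the shared variable $y$ in the completing-the-square step so that the three blocks of $P'$, the partitioned form of $\hat\mu$, and the scalar constant all come out exactly as displayed; the Kronecker-algebra skeleton around it is routine.
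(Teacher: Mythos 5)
Your proposal is correct and follows essentially the same route as the paper's proof: both reduce the product to the Kronecker identity $Tr(AB)\,Tr(CD)=Tr\bigl((A\otimes C)(B\otimes D)\bigr)$, expand the two quadratic forms entrywise over the shared variable $y$, and complete the square to read off $P'$, $\mu'$, and $C'$. Your added checks (positive definiteness of $P'_{ijkl}$, hence well-definedness of $\mu'_{ijkl}$, and PSD-ness of $B'(u)$) are correct and slightly more careful than the paper, which leaves them implicit; note also that your $C'_{ijkl}$ with $\hat\mu^\top (P'_{ijkl})^{-1}\hat\mu$ matches the paper's derivation rather than the statement's display, which omits the inverse.
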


\begin{proof}
Notice that:
    \begin{align}
        \log\left(B_f(x, y) \otimes B_g(y, z)_{ijkl}\right)& = C^f_{ij} + C^g_{kl} - \left\Vert {P_{ij}}^{1/2}\left[\begin{pmatrix}x\\ y\end{pmatrix}- \mu_{ij}\right]\right\Vert^2 - \left\Vert {\tilde P_{kl}}^{1/2}\left[\begin{pmatrix}x \\ y \end{pmatrix}- \tilde\mu_{kl}\right]\right\Vert^2
    \end{align}
Let us compute the following term by computing the square:
    \begin{align}
     &\left\Vert {P_{ij}}^{1/2}\left[\begin{pmatrix}x\\y\end{pmatrix}- \mu_{ij}\right]\right\Vert^2 + \left\Vert {\tilde P_{kl}}^{1/2}\left[\begin{pmatrix}y\\ z\end{pmatrix}- \tilde\mu_{kl}\right]\right\Vert^2 \\
     &= \begin{pmatrix}x\\ y\\ z\end{pmatrix}^TP_{ijkl}\begin{pmatrix}x\\ y\\ z\end{pmatrix}- 2\hat\mu_{ijkl}^T\begin{pmatrix}x\\ y\\ z\end{pmatrix} +\mu_{ij}P_{ij}\mu_{ij} + \tilde\mu_{kl}\tilde P_{kl}\tilde\mu_{kl}\\
     &= \left\Vert P_{ijkl}^{1/2}\left(\begin{pmatrix}x\\ y\\ z\end{pmatrix} - P_{ijkl}^{-1}\hat\mu_{ijkl}\right) \right\Vert^2 - \hat\mu_{ijkl}P_{ijkl}^{-1}\hat\mu_{ijkl}+\mu_{ij}P_{ij}\mu_{ij} + \tilde\mu_{kl}\tilde P_{kl}\tilde\mu_{kl}\\
    \end{align}

    where
    \begin{equation}
    P_{ijkl} = \left[\begin{array}{ ccc }
    {P_{ij}}_{xx} & {P_{ij}}_{xy}^T & 0 \\
    {P_{ij}}_{xy} &P_{ijyy} + \tilde P_{klyy}  & \tilde P_{klyz} \\
    0 & \tilde P_{klyz} & \tilde P_{klzz}
    \end{array}\right],
\end{equation}
    \begin{equation}
    \hat\mu_{ijkl} = \begin{pmatrix}
    {P_{ij}}\mu_{ij}\\
    0\end{pmatrix} + 
    \begin{pmatrix}
     0 \\
    {\tilde P_{kl}}\tilde\mu_{kl}
    \end{pmatrix} = 
    \begin{pmatrix}
    [{P_{ij}}\mu_{ij}]_x\\
    [{P_{ij}}\mu_{ij}]_{y} + [{\tilde P_{kl}}\tilde\mu_{kl}]_{y}\\
    [{\tilde P_{kl}}\tilde\mu_{kl}]_{z}
     \end{pmatrix}
\end{equation}

So 
\begin{equation}
    (B_f(x, y) \otimes B_g(y, z))_{ijkl} =  \exp\left(C_{ijkl}-\left\Vert P_{ijkl}^{1/2}\left((x, y, z) - \mu_{ijkl}\right)\right\Vert^2\right)
\end{equation}
where
\begin{equation}
    \mu_{ijkl} = P_{ijkl}^{-1}\hat\mu_{ijkl}
\end{equation}
\begin{equation}
    C_{ijkl} = C^f_{ij} + C^g_{kl} +\hat\mu_{ijkl}P_{ijkl}^{-1}\hat\mu_{ijkl} -  \mu_{ij}P_{ij}\mu_{ij} - \tilde\mu_{kl}\tilde P_{kl}\tilde\mu_{kl}
\end{equation}
\end{proof}
\begin{remark}[Computational complexity]
Because of the need to compute the inverse of $P^\prime$ the computational complexity of the product operation between models of order $M$ and $m$ is $O(M^2m^2d^3)$.
\end{remark}

\subsection{Proof of \cref{theorem:kalman}}\label{sec:proof_kalman}
\begin{proof} 
Let $P = L^T\Sigma^{-1}L$ where $L = (F ~-I)$ and $P_\lambda = P + \lambda I$.

We have
\begin{align}
    -\log p(y|x) &= -C_\Sigma + \Vert \Sigma^{-1/2}(Fx + b -y)\Vert^2 = -C_\Sigma + \Vert \Sigma^{-1/2}(Lu + b)\Vert^2\\
                 &= -C_\Sigma + uL^T\Sigma^{-1}Lu - 2b^T\Sigma^{-1}Lu + b^T\Sigma^{-1}b
\end{align}

If we define $\mu = P_{\lambda}^{-1}\beta$ where $\beta = L^T\Sigma^{-1}b$ then,
\begin{align}
    -\log p(y|x) & = -C + \Vert P_\lambda^{1/2}\left(u - \mu\right)\Vert^2\\
                 & = -C + u^T P_\lambda u - 2 \mu^T P_\lambda u + \mu^T P_\lambda \mu\\
                 & = -C + u^TPu + \lambda \Vert u \Vert^2 -2 \beta^Tu + \beta^TP_\lambda^{-1}P_\lambda P_\lambda^{-1}\beta\\
                 & = -C + u^TPu + \lambda \Vert u \Vert^2 -2 \beta^Tu + \beta^TP_\lambda^{-1}\beta\\
\end{align}

And so,
\begin{align}
    - \log(p(y|x)/\hat p(y/x)) &= -C_\Sigma + uPu - 2\beta^Tu + b^T\Sigma^{-1}b + C - u^TPu - \lambda \Vert u \Vert^2 +2 \beta^Tu - \beta^TP_\lambda^{-1}\beta\\
   &= C -C_\Sigma+ b^T\Sigma^{-1}b - \lambda \Vert u \Vert^2 - \beta^TP_\lambda^{-1}\beta
\end{align}

Using Woodbury,
\begin{align}
    b^T \Sigma^{-1}b - \beta^TP_\lambda^{-1}\beta &= b^T\left(\Sigma^{-1} - \Sigma^{-1}L\left[ L^T\Sigma^{-1}L + \lambda I\right]^{-1}L^T-\Sigma^{-1}\right)b\\
    &=\lambda b^T\left( \lambda \Sigma + LL^T\right)^{-1}b
\end{align}

and

\begin{align}
    - \log(p(y|x)/\hat p(y/x)) &= C -C_\Sigma + \lambda b^T\left( \lambda \Sigma + LL^T\right)^{-1}b- \lambda \Vert u \Vert^2 
\end{align}

With $C = C_\Sigma - \lambda b^T\left( \lambda \Sigma + LL^T\right)^{-1}b$, $\frac{p(y|x)}{\hat p(x, y)} = e^{\lambda \Vert u \Vert^2}$ and the result follows.
\end{proof}

\subsection{Learning Generalized Gaussian PSD Models}\label{sec:learning-general}
From an approximation perspective, a Generalized Gaussian PSD Model is a Gaussian PSD Model in which one can optimize the anchor points $\tilde x$ and precision matrices $P$ of each kernel function. In the case of approximating transition kernels, this can yield significant improvements in model order. Indeed, a transition kernel $Q(u, x)$ is a conditional probability distribution which depends in which the probability of the value $x$ depends on the value $u$. This dependence is encoded in the combination of kernel evaluations but not in the kernel evaluations themselves.

To approximate a function $f$ with a Generalized Gaussian PSD Model, we implicitly approximate the square-root of $f$ using a Gaussian Linear Model:
\begin{align}\label{eq:non-convex}
    \min_{\hat g \in \mathcal G_M} \frac{1}{n}\sum_{i=1}^n \left\vert f(x_i)- \hat g(x_i)^2\right\vert^2,
\end{align}

where $x_i$ are sampled or chosen on a grid, and $\mathcal G_M = \lbrace \sum_{j=1}^M\alpha_i k_{P_i}(x, \mu_i) ~\vert~ \mu_i \in \mathbb R^d, P = R_i^\top R_i, R_i \in\mathbb R^{d\times d}, \alpha_i \in \mathbb R \text{ for } 1\leq j\leq M \rbrace$. \cref{eq:non-convex} is a smooth, non-convex problem which can be solved approximately using off-the-shelf solver like L-BFGS \citep{lbfgs}.

In practice, we initialize the model by placing $\mu_i$ is regions where $f(\mu_i)$ is large. In the case where $f(u, x)$ is a transition kernel $Q(u, x)$, one strategy is to chose $[\mu_i]_u$ on a grid (or sampled uniformly) and then choose $[\mu_i]_v$ such that $f(\mu_i) = \sup_x f([\mu_i]_u, v)$. This is particularly interesting when $Q$ is a non-linear Gaussian model $Q(u, x) \propto e^{-\Vert\Sigma^{-1/2}(x - h(u))\Vert^2}$ for some non-linear transition model $h$.
\end{document}